\newcommand{\N}{\mathbb{N}}
\newcommand{\norm}[1]{\|#1\|}
\renewcommand{\H}{\mathcal{H}}
\newcommand{\reals}{\mathbb{R}}
\newcommand{\inner}[1]{\langle #1 \rangle}
\newcommand{\numsteps}{55}
\newcommand{\Var}{\text{Var}}
\newtheorem{theorem}{Theorem}
\newtheorem{lemma}{Lemma}
\newcommand{\BlackBox}{\rule{1.5ex}{1.5ex}}  
\newenvironment{proof}{\par\noindent{\bf Proof\ }}{\hfill\BlackBox\\[2mm]}
\newcommand{\lemref}[1]{Lemma~\ref{#1}}
\newcommand{\thmref}[1]{Theorem~\ref{#1}}
\newcommand{\figref}[1]{Figure~\ref{#1}}
\newcommand{\secref}[1]{Section~\ref{#1}}
\newcommand{\appref}[1]{Appendix~\ref{#1}}
\newcommand{\bx}{\mathbf{x}}
\newcommand{\bv}{\mathbf{v}}
\newcommand{\bu}{\mathbf{u}}
\newcommand{\bw}{\mathbf{w}}
\newcommand{\bz}{\mathbf{z}}
\newcommand{\bg}{\mathbf{g}}
\newcommand{\ba}{\mathbf{a}}
\newcommand{\f}{\mathbf{f}}
\newcommand{\set}[1]{\left\{#1\right\}}
\DeclareMathOperator*{\E}{\mathbb{E}}
\newcommand{\half}{\frac{1}{2}}
\newcommand{\figsdir}{.}  
\newlength\figureheight
\newlength\figurewidth
\title{Failures of Gradient-Based Deep Learning}
\date{}
\author[1]{Shai Shalev-Shwartz}
\author[2]{Ohad Shamir}
\author[1]{Shaked Shammah}
\affil[1]{School of Computer Science and Engineering, The Hebrew University}
\affil[2]{Weizmann Institute of Science}
\begin{document}

\maketitle

\begin{abstract}
  In recent years, Deep Learning has become the go-to solution for a
  broad range of applications, often outperforming
  state-of-the-art. However, it is important, for both theoreticians
  and practitioners, to gain a deeper understanding of the
  difficulties and limitations associated with common approaches and
  algorithms. We describe four types of simple problems, for which the
  gradient-based algorithms commonly used in deep learning either fail
  or suffer from significant difficulties. We illustrate the failures
  through practical experiments, and provide theoretical insights
  explaining their source, and how they might be
  remedied\footnote{This paper was done with the support of the Intel
    Collaborative Research institute for Computational Intelligence
    (ICRI-CI) and is part of the “Why $\textrm{\&}$ When Deep Learning works –
    looking inside Deep Learning” ICRI-CI paper bundle}.
\end{abstract}

\section{Introduction}
The success stories of deep learning form an ever lengthening list
of practical breakthroughs and state-of-the-art performances, ranging
the fields of computer vision
\cite{krizhevsky2012imagenet, He_2016_CVPR, Schroff_2015_CVPR, taigman2014deepface}, audio and natural language processing and
generation \cite{collobert2008unified, hinton2012deep,
  graves2013speech, van2016wavenet}, as well as robotics
\cite{mnih2015human, schulman2015trust}, to name just a
few. The list of success stories can be matched and surpassed by a
list of practical ``tips and tricks'', from different optimization
algorithms, parameter tuning methods
\cite{sutskever2013importance, kingma2014adam},
initialization schemes \cite{glorot2010understanding}, architecture
designs \cite{szegedy2016inception}, loss functions, data
augmentation \cite{krizhevsky2012imagenet} and so on.

The current theoretical understanding of deep learning is far
from being sufficient for a rigorous analysis of the difficulties
faced by practitioners.  Progress must be made from both parties: from
a practitioner's perspective, emphasizing the difficulties provides
practical insights to the theoretician, which in turn, supplies
theoretical insights and guarantees, further strengthening and
sharpening practical intuitions and wisdom. In particular,
understanding \emph{failures} of existing algorithms is as important as
understanding where they succeed.

Our goal in this paper is to present and discuss families of simple
problems for which commonly used methods do not show as exceptional a
performance as one might expect. We use empirical results and insights
as a ground on which to build a theoretical analysis, characterising
the sources of failure. Those understandings are aligned, and
sometimes lead to, different approaches, either for an architecture,
loss function, or an optimization scheme, and explain their
superiority when applied to members of those families. Interestingly, the sources
for failure in our experiment do not seem to relate to stationary point issues such as spurious local minima
or a plethora of saddle points, a topic of much recent interest (e.g. \cite{dauphin2014identifying,choromanska2015loss}), but rather to more subtle issues, having to do with informativeness of the gradients, signal-to-noise ratios, conditioning etc. The code for
running all our experiments is available online\footnote{
	\url{https://github.com/shakedshammah/failures_of_DL}. See command
	lines in \appref{cmdlns}.}.

We start off in \secref{sec.Parities and Linear-Periodic
  Functions} by discussing a class of simple learning problems for
which the gradient information, central to deep learning
algorithms, provably carries negligible information on the target function
which we attempt to learn. This result is a property of the learning
problems themselves, and holds for any specific network architecture
one may choose for tackling the learning problem, implying that no
gradient-based method is likely to succeed. Our analysis relies on tools and 
insights from the Statistical Queries literature, and underscores one of the
main deficiencies of Deep Learning: its reliance on local properties
of the loss function, with the objective being of a global nature.

Next, in \secref{sec.Decomposition vs. End-to-end}, we tackle the
ongoing dispute between two common approaches to learning. Most, if
not all, learning and optimization problems can be viewed as some
structured set of sub-problems. The first approach, which we refer to
as the ``end-to-end'' approach, will tend to solve all of the
sub-problems together in one shot, by optimizing a single primary
objective. The second approach, which we refer to as the
``decomposition'' one, will tend to handle these sub-problems
separately, solving each one by defining and optimizing additional
objectives, and not rely solely on the primary objective.  The
benefits of the end-to-end approach, both in terms of requiring a
smaller amount of labeling and prior knowledge, and perhaps enabling
more expressive architectures, cannot be ignored. On the other hand,
intuitively and empirically, the extra supervision injected through
decomposition is helpful in the optimization process. We experiment
with a simple problem in which application of the two approaches
is possible, and the distinction between them is clear and
intuitive. We observe that an end-to-end approach can be much slower
than a decomposition method, to the extent that, as the scale of the
problem grows, no progress is observed. We analyze this gap 
by showing, theoretically and empirically, that the gradients are much more
noisy and less informative with the end-to-end approach, as opposed to the decomposition approach, 
explaining the disparity in practical performance. 

In \secref{sec.Piece-wise Linear AutoEncoders}, we demonstrate the
importance of both the network's architecture and the optimization
algorithm on the training time. While the choice of architecture is
usually studied in the context of its expressive power, we show that
even when two architectures have the same expressive power for a given
task, there may be a tremendous difference in the ability to optimize
them. We analyze the required runtime of gradient descent for the two
architectures through the lens of the condition number of the problem.
We further show that conditioning techniques can yield additional
orders of magnitude speedups. The experimental setup in this section
is around a seemingly simple problem --- encoding a piece-wise linear
one-dimensional curve. Despite the simplicity of this problem, we show
that following the common rule of ``perhaps I should use a
deeper/wider network''\footnote{See
  \url{http://joelgrus.com/2016/05/23/fizz-buzz-in-tensorflow/} for
  the inspiration behind this quote.} does not
significantly help here.

Finally, in \secref{sec.Non Continuous non-linearities}, we
consider deep learning's reliance on ``vanilla'' gradient information
for the optimization process. We previously discussed the deficiency
of using a local property of the objective in directing global
optimization. Here, we focus on a simple case in which it is possible
to solve the optimization problem based on local information, but not
in the form of a gradient. We experiment with architectures that
contain activation functions with flat regions, which leads to the
well known vanishing gradient problem. Practitioners take great care
when working with such activation functions, and many heuristic tricks
are applied in order to initialize the network's weights in non-flat
areas of its activations.  Here, we show that by using a different
update rule, we manage to solve the learning problem
efficiently. Moreover, one can show convergence guarantees for a family of such
functions. This provides a clean example where non-gradient-based optimization
schemes can overcome the limitations of gradient-based learning.

\section{Parities and Linear-Periodic Functions}\label{sec.Parities
  and Linear-Periodic Functions}

Most existing deep learning algorithms are gradient-based
methods; namely, algorithms which
optimize an objective through access to its gradient w.r.t. some
weight vector $\bw$, or estimates of the gradient. We consider a
setting where the goal of this optimization process is to learn some
underlying hypothesis class $\H$, of which one member, $h\in\H$, is
responsible for labelling the data. This yields an optimization problem
of the form 
\[\min_{\bw} F_h(\bw).\]  
The underlying assumption is that
the gradient of the objective w.r.t. $\bw$, $\nabla F_h(\bw)$,
contains useful information regarding the target function $h$, and
will help us make progress.

Below, we discuss a family of problems for which with high
probability, at any fixed point, the gradient, $\nabla F_{h}(\bw)$,
will be essentially the same regardless of the underlying target
function $h$.  Furthermore, we prove that this holds independently of
the choice of architecture or parametrization, and using a deeper/wider
network will not help. The family we study is that of compositions of
linear and periodic functions, and we experiment with the classical
problem of learning parities. Our empirical and theoretical study
shows that indeed, if there's little information in the gradient,
using it for learning cannot succeed.

\subsection{Experiment}\label{experiment_parity}
We begin with the simple problem of learning random
parities: After choosing some $\bv^*\in \{0,1\}^d$ uniformly at random, our 
goal is to train a predictor mapping
$\bx\in \{0,1\}^d$ to $y=(-1)^{\inner{\bx,\bv^*}}$, where $\bx$ is uniformly 
distributed. In words, $y$ indicates whether
the number of $1$'s in a certain subset of coordinates of $\bx$
(indicated by $\bv^*$) is odd or even. 

For our experiments, we use the hinge loss, and a simple network architecture 
of one fully
connected layer of width $10d>\frac{3d}{2}$ with ReLU
activations, and a fully connected output layer with linear
activation and a single unit. Note that this class realizes the parity
function corresponding to any $\bv^*$ (see \lemref{lem:boringparity} in the 
appendix). 

Empirically, as the dimension $d$ increases, so does the difficulty of
learning, which can be measured in the accuracy we arrive at after a
fixed number of training iterations, to the point where around $d=30$,
no advance beyond random performance is observed after reasonable
time. \figref{fig:parity} illustrates the results.

\setlength\figureheight{4.5cm}
\setlength\figurewidth{8cm}
\begin{figure}
\begin{center}
\input{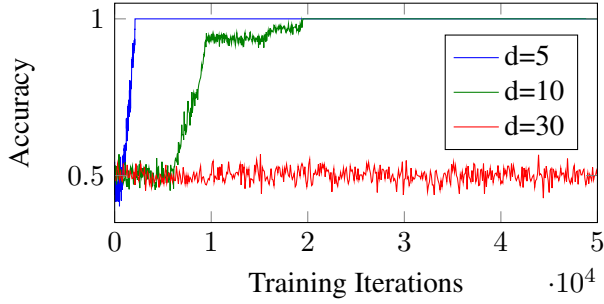}
\end{center}
\caption{Parity Experiment: Accuracy as a function of the number of
  training iterations, for various input dimensions.} \label{fig:parity}
\end{figure}

\subsection{Analysis}
To formally explain the failure from a geometric perspective, consider
the stochastic optimization problem associated with learning a target
function $h$,
\begin{equation}\label{eq:obj}
\min_{\bw} F_{h}(\bw) := \E_{\bx}\left[\ell(p_{\bw}(\bx),h(\bx))\right],
\end{equation}
where $\ell$ is a loss function, $\bx$ are the stochastic inputs (assumed to be 
vectors in Euclidean space), and $p_{\bw}$ 
is some predictor parametrized by a parameter vector $\bw$ (e.g. a neural 
network of a certain architecture). We will assume that $F$ is differentiable. 
A key quantity we will be interested in studying is the 
\emph{variance} of the gradient of $F$ with respect to $h$, 
when $h$ is drawn uniformly at random from a collection of candidate target functions 
$\H$:
\begin{equation}\label{vardef}
\Var(\H,F,\bw)=\E_h
\left\|\nabla 
F_{h}(\bw)- \E_{h'}\nabla F_{h'}(\bw)\right\|^2
\end{equation}
Intuitively, this measures the expected amount of ``signal'' about the
underlying target function contained in the gradient. As we will see
later, this variance correlates with the difficulty of solving
\eqref{eq:obj} using gradient-based methods\footnote{This should not
  be confused with the variance of gradient estimates used by SGD,
  which we discuss in \secref{sec.Decomposition
    vs. End-to-end}.}.

The following theorem bounds this variance term.
\begin{theorem}\label{thm:sqdim}
Suppose that 
\begin{itemize}
\item $\H$ consists of real-valued functions $h$ satisfying 
$\E_{\bx}[h^2(\bx)]\leq 1$, such     
that for any two distinct $h,h'\in \H$, $\E_{\bx}[h(\bx)h'(\bx)]=0$.
\item $p_{\bw}(\bx)$ is differentiable w.r.t. $\bw$, and satisfies
$\E_{\bx}\left[\norm{\frac{\partial}{\partial \bw}p_\bw(\bx)}^2\right]\leq G(\bw)^2$ 
for some scalar $G(\bw)$. 
\item The loss function $\ell$ in \eqref{eq:obj} is either the square loss 
$\ell(\hat{y},y) = \frac{1}{2}(\hat{y}-y)^2$ or a classification loss of 
the 
form $\ell(\hat{y},y)=r(\hat{y}\cdot y)$ for some $1$-Lipschitz function $r$, 
and the 
target function $h$ takes values in $\set{\pm1}$.
\end{itemize}
Then 
\[
\Var(\H,F,\bw) ~\leq~ \frac{G(\bw)^2}{|\H|}.
\]
\end{theorem}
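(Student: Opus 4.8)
The plan is to reduce both loss cases to a single linear-algebraic estimate. First I would write the gradient explicitly. Assuming we may interchange $\nabla_\bw$ and $\E_\bx$ (justified by the stated differentiability of $F$),
\[
\nabla F_h(\bw) = \E_\bx\!\left[\partial_1\ell(p_\bw(\bx),h(\bx))\,\bg(\bx)\right],\qquad \bg(\bx):=\tfrac{\partial}{\partial\bw}p_\bw(\bx),
\]
where $\partial_1$ denotes the derivative of $\ell$ in its first argument. The guiding observation is that the variance in \eqref{vardef} is unchanged if we subtract from $\nabla F_h(\bw)$ any term not depending on $h$, so it suffices to control the part of the gradient that genuinely varies with the target.

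Second, I would extract this $h$-dependent part and show it is \emph{linear} in $h$. For the square loss $\partial_1\ell(\hat y,y)=\hat y-y$, so $\nabla F_h(\bw)=\E_\bx[p_\bw(\bx)\bg(\bx)]-\E_\bx[h(\bx)\bg(\bx)]$: the first term is constant in $h$, the second linear. For the classification loss $\ell(\hat y,y)=r(\hat y y)$ the gradient is $\E_\bx[r'(p_\bw(\bx)h(\bx))\,h(\bx)\,\bg(\bx)]$, which is \emph{not} obviously linear in $h$. Here I would exploit that $h(\bx)\in\sbin$: writing $a(\bx)=\half\big(r'(p_\bw(\bx))+r'(-p_\bw(\bx))\big)$ and $b(\bx)=\half\big(r'(p_\bw(\bx))-r'(-p_\bw(\bx))\big)$, one checks $r'(p_\bw(\bx)h(\bx))=a(\bx)+b(\bx)h(\bx)$, and since $h(\bx)^2=1$ the integrand becomes $a(\bx)h(\bx)\bg(\bx)+b(\bx)\bg(\bx)$. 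Thus the gradient again splits into an $h$-independent term and a term linear in $h$, of the form $\E_\bx[h(\bx)\,\tilde\bg(\bx)]$ with $\tilde\bg=a\,\bg$. Crucially, $1$-Lipschitzness of $r$ gives $|r'|\le1$, hence $|a(\bx)|\le1$ and $\E_\bx\norm{\tilde\bg(\bx)}^2\le\E_\bx\norm{\bg(\bx)}^2\le G(\bw)^2$. Setting $\tilde\bg=\bg$ in the square-loss case unifies both.

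Third, I would finish by a Bessel-type argument. The $h$-independent term cancels exactly in \eqref{vardef}, and bounding the variance of the remaining linear part by its second moment gives $\Var(\H,F,\bw)\le\E_h\norm{\E_\bx[h\,\tilde\bg]}^2$. Coordinate-wise, the $j$-th entry is $\inner{h,\tilde g_j}_\bx=\E_\bx[h(\bx)\tilde g_j(\bx)]$, so $\E_h\norm{\E_\bx[h\tilde\bg]}^2=\frac{1}{|\H|}\sum_j\sum_{h\in\H}\inner{h,\tilde g_j}_\bx^2$. The hypotheses say the $h\in\H$ are orthogonal in $L^2(\bx)$ with $\norm{h}^2\le1$; normalizing them to an orthonormal system and applying Bessel's inequality to each coordinate function $\tilde g_j$ yields $\sum_{h}\inner{h,\tilde g_j}_\bx^2\le\norm{\tilde g_j}^2$ (the norms being $\le1$ only helps the inequality). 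Summing over $j$ gives $\sum_j\norm{\tilde g_j}^2=\E_\bx\norm{\tilde\bg(\bx)}^2\le G(\bw)^2$, and dividing by $|\H|$ produces the claimed bound.

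The main obstacle is the classification-loss case: the gradient is a priori a nonlinear functional of $h$, and the whole argument hinges on the $\sbin$-valued linearization $r'(p_\bw h)=a+bh$ together with $|a|\le1$ from Lipschitzness — without reducing to a linear dependence on $h$ one cannot invoke orthogonality. The remaining steps (interchanging derivative and expectation, and Bessel's inequality with sub-unit norms) are routine.
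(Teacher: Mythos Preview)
Your proposal is correct and follows essentially the same route as the paper: isolate the $h$-dependent part of the gradient as a linear functional $\E_\bx[h(\bx)\,\tilde\bg(\bx)]$, then apply Bessel's inequality coordinate-wise using the orthogonality of $\H$. One small remark: in the classification case your chain rule is actually more careful than the paper's --- you retain the factor $h(\bx)$ in $\partial_1\ell = r'(\hat y y)\,y$, which leads you to $\tilde\bg = a\,\bg$ with $a=\tfrac12(r'(p_\bw)+r'(-p_\bw))$, whereas the paper (apparently dropping that factor) arrives at the antisymmetric piece $b=\tfrac12(r'(p_\bw)-r'(-p_\bw))$; since both $|a|\le1$ and $|b|\le1$ by Lipschitzness, the bound goes through either way.
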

The proof is given in \appref{proof:thm:sqdim}. The theorem implies that if we try to learn an unknown target function, 
possibly coming from a large collection of uncorrelated functions, then the 
sensitivity of the gradient to the target function at any point
decreases linearly with $|\H|$.

Before we make a more general statement, let us return to the case of
parities, and study it through the lens of this
framework. Suppose that our target function is some parity function
chosen uniformly at random, i.e. a random element from the set of
$2^d$ functions
$\H = \{\bx\mapsto (-1)^{\inner{\bx,\bv^*}}:\bv^*\in
\{0,1\}^d\}$. These are binary functions, which are easily seen to be
mutually orthogonal: Indeed, for any $\bv,\bv'$,
\begin{align*}
&\E_{\bx}\left[(-1)^{\inner{\bx,\bv}}(-1)^{\inner{\bx,\bv'}}\right] 
= \E_{\bx}\left[(-1)^{\inner{\bx,\bv+\bv'}}\right] \\
=& \prod_{i=1}^{d}\E\left[(-1)^{x_i(v_i+v'_i)}\right] 
= \prod_{i=1}^{d}\frac{(-1)^{v_i+v'_i}+(-1)^{-(v_i+v'_i)}}{2}
\end{align*}
which is non-zero if and only if $\bv=\bv'$. Therefore, by
\thmref{thm:sqdim}, we get that $ \Var(\H,F,\bw) \leq G(\bw)^2/2^d$ -- that is, 
exponentially small in the dimension $d$. By Chebyshev's inequality, this 
implies that the gradient at any point $\bw$ will be extremely concentrated 
around a fixed point independent of $h$.

This phenomenon of exponentially-small variance can also be observed 
for other distributions, and learning problems other than parities.  Indeed, in
\cite{shamir2016distribution}, it was shown that this also holds in a
more general setup, when the output $y$ corresponds to a linear
function composed with a periodic one, and the input $\bx$ is sampled
from a smooth distribution:
\begin{theorem}[Shamir 2016]
Let $\psi$ be a fixed periodic function, and let 
$\H=\{\bx\mapsto\psi(\bv^{*\top}\bx):\norm{\bv^*}=r\}$ for some $r>0$. 
Suppose $\bx\in\reals^d$ is sampled from an arbitrary mixture of 
distributions with the following property: The square root of the density 
function $\varphi$ has a Fourier transform $\hat{\varphi}$ satisfying 
$\frac{\int_{\bx:\|\bx\|>r}\hat{\varphi}^2(\bx)d\bx}{\int_{\bx}\hat{\varphi}^2(\bx)d\bx}
 \leq 
\exp(-\Omega(r))$. Then if $F$ 
denotes the objective function with respect to the squared loss, 
\[
\Var(\H,F,\bw) \le
  O\left(\exp(-\Omega(d))+\exp(-\Omega(r))\right).\] 
\end{theorem}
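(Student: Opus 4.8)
The plan is to follow the reduction behind \thmref{thm:sqdim}, but to replace the counting argument (which needs a \emph{finite} orthogonal family) by a Fourier-analytic argument tailored to the continuous family indexed by $\bv^*$ on the sphere $\norm{\bv^*}=r$. Write $\mu$ for the (mixture) density, $\varphi=\sqrt{\mu}$, and $g_j(\bx)=\frac{\partial}{\partial w_j}p_{\bw}(\bx)$ for the coordinates of the gradient feature, so that $\sum_j\E_{\bx}[g_j^2(\bx)]=\E_{\bx}\norm{\frac{\partial}{\partial\bw}p_{\bw}(\bx)}^2$ is the energy controlled by $G(\bw)$ in \thmref{thm:sqdim}; let $\inner{a,b}_\mu=\E_{\bx}[a(\bx)b(\bx)]$ denote the $L^2(\mu)$ inner product. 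For the square loss, $\nabla F_h(\bw)=\E_{\bx}[p_{\bw}(\bx)\,\tfrac{\partial}{\partial\bw}p_{\bw}(\bx)]-\E_{\bx}[h(\bx)\,\tfrac{\partial}{\partial\bw}p_{\bw}(\bx)]$; the first term does not depend on $h$ and therefore cancels in \eqref{vardef}. Writing $\bar h=\E_{h'}h'$, this yields
\[
\Var(\H,F,\bw)=\E_h\norm{\E_{\bx}\!\left[(h-\bar h)(\bx)\,\tfrac{\partial}{\partial\bw}p_{\bw}(\bx)\right]}^2=\sum_j\E_h\!\left[\inner{h-\bar h,\,g_j}_\mu^2\right].
\]

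So it suffices to establish the following \emph{core estimate}: for every fixed $g$ with $\E_{\bx}[g^2]\le1$,
\[
\E_{\bv^*}\!\left[\inner{\psi(\bv^{*\top}\cdot)-\bar\psi,\,g}_\mu^2\right]\ \le\ \exp(-\Omega(d))+\exp(-\Omega(r)),
\]
where $\bv^*$ is uniform on $\norm{\bv^*}=r$ and $\bar\psi=\E_{\bv^*}\psi(\bv^{*\top}\cdot)$; applying it with $g=g_j$ and summing over $j$ then absorbs $\sum_j\E_{\bx}[g_j^2]\le G(\bw)^2$ into the $O(\cdot)$ constant.

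To prove the core estimate I would pass to the Fourier domain. Expanding the periodic $\psi$ as a Fourier series $\psi(z)=\sum_k c_k e^{2\pi i kz/T}$, the $k=0$ mode is exactly the $\bv^*$-independent mean $\bar\psi$ and drops out, while each mode $k\ne0$ is a pure frequency $\bx\mapsto e^{2\pi ik\bv^{*\top}\bx/T}=e^{i\boldsymbol{\omega}_k^\top\bx}$ with $\boldsymbol{\omega}_k=\tfrac{2\pi k}{T}\bv^*$ of norm $\tfrac{2\pi|k|}{T}r=\Omega(r)$. By the modulation--shift rule and Plancherel (using $\norm{\hat\varphi}_2^2=\norm{\varphi}_2^2=1$ and $\norm{\widehat{g\varphi}}_2^2=\E_{\bx}[g^2]\le1$),
\[
\inner{e^{i\boldsymbol{\omega}_k^\top\cdot},\,g}_\mu=\int\hat\varphi(\boldsymbol{\xi}-\boldsymbol{\omega}_k)\,\overline{\widehat{g\varphi}(\boldsymbol{\xi})}\,d\boldsymbol{\xi}.
\]
I would then split the integral by the frequency magnitude of $\widehat{g\varphi}$ at a threshold $\rho$ comparable to $r$ (e.g.\ $\rho=r/2$). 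On the high part $\norm{\boldsymbol{\xi}}\ge\rho$, Cauchy--Schwarz reduces the $\bv^*$-averaged contribution to $\int_{\norm{\boldsymbol{\xi}}\ge\rho}\lvert\widehat{g\varphi}(\boldsymbol{\xi})\rvert^2\,\prob_{\bv^*}\!\left[\norm{\boldsymbol{\xi}-\boldsymbol{\omega}_k}\le r\right]d\boldsymbol{\xi}$, and the event $\norm{\boldsymbol{\xi}-\boldsymbol{\omega}_k}\le r$ confines the random direction $\bv^*/r$ to a spherical cap of cap-angle bounded away from $\tfrac{\pi}{2}$, whose measure is $\exp(-\Omega(d))$; this produces the $\exp(-\Omega(d))$ term. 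On the low part $\norm{\boldsymbol{\xi}}<\rho$, one has $\norm{\boldsymbol{\xi}-\boldsymbol{\omega}_k}\ge\norm{\boldsymbol{\omega}_k}-\rho=\Omega(r)$, so $\hat\varphi$ is evaluated entirely in its spectral tail and the hypothesis on $\hat\varphi$ (applied at a radius which is a constant fraction of $r$) bounds this contribution by $\exp(-\Omega(r))$. Summing the two buckets over $k\ne0$ --- higher modes only enlarge $\norm{\boldsymbol{\omega}_k}$ and hence only sharpen both bounds, while $\sum_k\lvert c_k\rvert^2<\infty$ keeps the series convergent --- gives the core estimate.

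The main obstacle is the high-frequency bucket. Since $g=g_j$ is an arbitrary gradient coordinate, $\widehat{g\varphi}$ may place all of its mass at high frequencies pointing in a single direction --- indeed, a feature of the form $g(\bx)\approx\cos(\bu^\top\bx)$ with $\norm{\bu}=r$ will correlate as $\Omega(1)$ with $\psi(\bv^{*\top}\cdot)$ whenever $\bv^*\approx\bu$. No worst-case (fixed-$\bv^*$) bound is therefore available, and the entire gain must come from averaging over the random direction of $\bv^*$ on the high-dimensional sphere, where any such alignment occupies only an $\exp(-\Omega(d))$ fraction. The delicate technical point is to choose the splitting radius $\rho$ so that the cap bound $\exp(-\Omega(d))$ and the spectral-tail bound $\exp(-\Omega(r))$ hold \emph{simultaneously} (the cap argument wants $\rho=\Omega(r)$, the tail argument wants $\norm{\boldsymbol{\omega}_k}-\rho=\Omega(r)$); this trade-off is exactly what forces the conclusion to be the sum of the two exponentials rather than either one alone.
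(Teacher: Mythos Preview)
The paper does not prove this theorem. It is stated with the attribution ``Shamir 2016'' and cited from \cite{shamir2016distribution}; no argument for it appears anywhere in the body or the appendix, so there is no in-paper proof to compare your proposal against.

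For what it is worth, your reconstruction is sound in outline and correctly reuses the reduction from the paper's proof of \thmref{thm:sqdim} (the $p_{\bw}$-term cancels, and the variance reduces to $\sum_j\E_h\inner{h,g_j}_\mu^2$). One small inaccuracy: the $k=0$ Fourier mode of $\psi$ is the constant $c_0$, not $\bar\psi=\E_{\bv^*}\psi(\bv^{*\top}\cdot)$, which still carries the $k\neq0$ modes averaged over the sphere (a nontrivial radial function of $\bx$). This does not matter for the bound, since $\Var(\H,F,\bw)\le\E_h\norm{\nabla F_h(\bw)-\ba}^2$ for \emph{any} fixed $\ba$, so centering at the $c_0$ contribution is already enough. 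The trade-off you isolate at the end---choosing the split radius $\rho$ so that both the spherical-cap estimate $\exp(-\Omega(d))$ and the spectral-tail estimate $\exp(-\Omega(r))$ fire simultaneously, with the fixed period $T$ of $\psi$ absorbed into the hidden constants via $\norm{\boldsymbol{\omega}_1}=2\pi r/T=\Omega(r)$---is exactly the crux, and your identification of it is correct.
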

The condition on the Fourier transform of the density is generally satisfied 
for smooth distributions (e.g. arbitrary Gaussians whose covariance matrices 
are positive definite, with all eigenvalues at least $\Omega(1/r)$). Thus, the 
bound is extremely small as long as the 
norm $r$ and the dimension $d$ are moderately large, and indicates that the 
gradients contains little signal on the underlying target function.

Based on these bounds, one can also formally prove that a gradient-based 
method, under a reasonable model, will fail in returning a reasonable 
predictor, unless the number of iterations 
is exponentially large in $r$ and $d$
\footnote{Formally, this requires an oracle-based model, where
  given a point $\bw$, the algorithm receives the gradient at $\bw$ up
  to some arbitrary error much smaller than machine precision. See 
  \cite[Theorem 4]{shamir2016distribution} for details.} .
This provides strong evidence that
gradient-based methods indeed cannot learn random parities and linear-periodic 
functions. We emphasize that these results hold \emph{regardless of which class 
of predictors we use} (e.g. they can be arbitrarily complex neural
networks) -- the problem lies in using a gradient-based method to
train them. Also, we note that the difficulty lies in the random choice of 
$\bv^*$, and the problem is not difficult if $\bv^*$ is known and fixed in 
advance (for example, for a full parity $\bv^*=(1,\ldots,1)$, this problem is 
known to be solvable with an appropriate LSTM network 
\cite{hochreiter1997long}).

Finally, we remark that the connection between parities, 
difficulty of learning and orthogonal functions is not new, and has already 
been made in the context of statistical query 
learning \cite{kearns1998efficient,blum1994weakly}. This refers to algorithms 
which are constrained to interact with data 
by receiving estimates 
of the expected value of some query over the underlying distribution (e.g. the 
expected value of the first coordinate),  and it is well-known that parities 
cannot be learned with such algorithms. Recently, 
\cite{feldman2015statistical} have formally shown that 
gradient-based methods with an approximate gradient oracle can be implemented 
as a statistical query algorithm, which implies that gradient-based methods are 
indeed unlikely to solve learning problems which are known to be hard in the 
statistical queries framework, in particular parities. In the discussion on 
random parities above, we have simply made the connection between 
gradient-based methods and parities more explicit, by direct examination of 
gradients' variance w.r.t. the target function.

\section{Decomposition vs. End-to-end}\label{sec.Decomposition
  vs. End-to-end}
Many practical learning problems, and more generally, algorithmic
problems, can be viewed as a structured composition of
sub-problems. Applicable approaches for a solution can either be
tackling the problem in an end-to-end manner, or by
decomposition. Whereas for a traditional algorithmic solution, the
``divide-and-conquer'' strategy is an obvious choice, the ability of
deep learning to utilize big data and expressive architectures has
made ``end-to-end training'' an attractive alternative. Prior results
of end-to-end \cite{mnih2015human, graves2013speech} and decomposition
and added feedback \cite{gulccehre2016knowledge, hinton2006reducing,
  szegedy2015going, caruana1998multitask} approaches show success in
both directions. Here, we try to address the following questions: What is the
price of the rather appealing end-to-end approach? Is letting a
network ``learn by itself'' such a bad idea?  When is it necessary, or
worth the effort, to ``help'' it?

There are various aspects which can be considered in this context. For
example, \cite{shalev2016sample} analyzed the difference between the
approaches from the sample complexity point of view. Here, we focus on
the optimization aspect, showing that an end-to-end approach might suffer
from non-informative or noisy gradients, which may significantly affect the training
time. Helping the SGD process by decomposing the problem leads to much
faster training. We present a simple experiment, motivated by questions
every practitioner must answer when facing a new, non trivial problem:
What exactly is the required training data, what network architecture
should be used, and what is the right distribution of development
efforts. These are all correlated questions with no clear answer. Our
experiments and analysis show that making the wrong choice can be
expensive.

\subsection{Experiment}\label{sec:krect}
Our experiment compares the two approaches in a computer vision
setting, where convolutional neural networks (CNN) have become the
most widely used and successful algorithmic architectures.  We define
a family of problems, parameterized by $k\in\N$, and show a gap
(rapidly growing with $k$) between the performances of the end-to-end
and decomposition approaches. 

Let $X$ denote the space of $28\times28$ binary images, with a
distribution $D$ defined by the following sampling procedure:

\begin{itemize}
\item Sample $\theta\sim U([0,\pi])$, $l\sim U([5,28-5])$, $(x,y)\sim
  U([0,27])^2$.
\item The image $\bx_{\theta,l,(x,y)}$ associated with the above sample is set to $0$
  everywhere, except for a straight line of length $l$, centered at
  $(x,y)$, and rotated at an angle $\theta$. Note that as the images
  space is discrete, we round the values corresponding to the points
  on the lines to the closest integer coordinate.
\end{itemize}

Let us define an ``intermediate'' labeling function
$y:X\rightarrow \{\pm1\}$, denoting whether the line in a given image
slopes upwards or downwards, formally:
\[
y(\bx_{\theta,l,(x,y)})=
\begin{cases}
1 & ~\textrm{if}~\theta<\pi/2 \\
-1 & \textrm{otherwise}
\end{cases}.
\]

\begin{figure}[t]
\begin{center}
\includegraphics[width=0.8\textwidth]{\figsdir/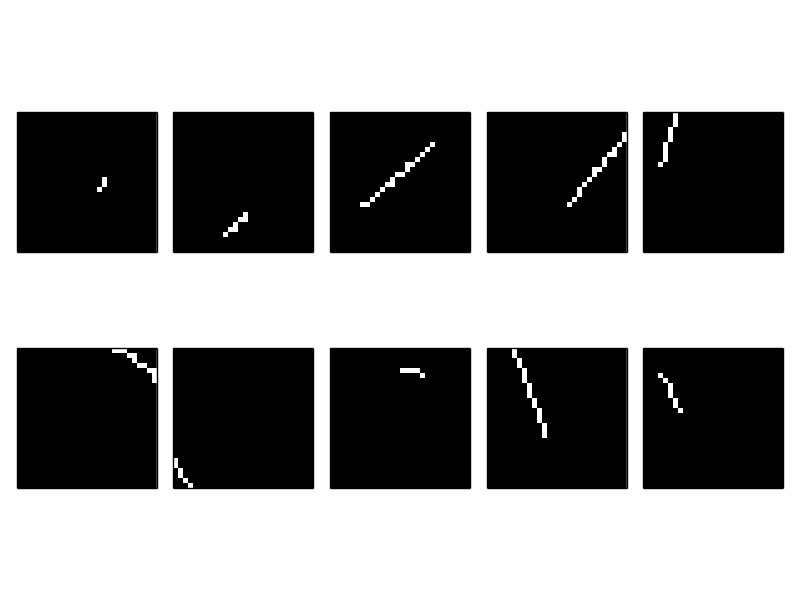}
\end{center}
\vskip -1.5cm
\caption{\secref{sec:krect}'s experiment - examples of samples from
  $X$. The $y$ values of the top and bottom rows are $1$ and $-1$, respectively.} \label{fig:krect_examples}
\end{figure}

\figref{fig:krect_examples} shows a few examples. We can now define
the problem for each $k$. Each input instance is a tuple $\bx_1^k:=(\bx_1,\ldots,\bx_k)$ 
of $k$ images sampled i.i.d. as above. The target output is the parity over the image
labels $y(\bx_1),\ldots,y(\bx_k)$, namely
$\tilde{y}(\bx_1^k)=\prod_{j=1...k}y(\bx_j)$.

Many architectures of DNN can be used for predicting $\tilde{y}(\bx_1^k)$
given $\bx_1^k$. A natural ``high-level'' choice can be:
\begin{itemize}
\item Feed each of the images, separately, to a single CNN (of some
  standard specific architecture, for example, LeNet-like), denoted
  $N^{(1)}_{\bw_1}$ and parameterized by its weights vector $\bw_1$,
  outputting a single scalar, which can be regarded as a ``score''.
\item Concatenate the ``scores'' of a tuple's entries, transform them
  to the range $[0,1]$ using a sigmoid function, and feed the
  resulting vector into another network, $N^{(2)}_{\bw_2}$, of a
  similar architecture to the one defined in \secref{sec.Parities
  and Linear-Periodic Functions}, outputting
  a single ``tuple-score'', which can then be thresholded for
  obtaining the binary prediction. 
\end{itemize}
Let the whole architecture be denoted $N_{\bw}$.  Assuming that
$N^{(1)}$ is expressive enough to provide, at least, a weak learner
for $y$ (a reasonable assumption), and that $N^{(2)}$ can express the
relevant parity function (see \lemref{lem:boringparity} in the
appendix), we obtain that this architecture has the potential for good
performance.

The final piece of the experimental setting is the choice of a loss
function. Clearly, the primary loss which we'd like to minimize is
the expected zero-one loss over the prediction, $N_\bw(\bx_1^k)$, and the
label, $\tilde{y}(\bx_1^k)$, namely:
\[
\tilde{L}_{0-1}(\bw) := \E_{\bx_1^k} \left[ N_\bw(\bx_1^k)\neq \tilde{y}(\bx_1^k) \right]
\]

A ``secondary'' loss which can be used in the decomposition approach
is the zero-one loss over the prediction of $N^{(1)}_{\bw_1}(\bx_1^k)$ and
the respective $y(\bx_1^k)$ value:
\[
L_{0-1}(\bw_1) := \E_{\bx_1^k} \left[ N^{(1)}_{\bw_1}(\bx_1^k)\neq y(\bx_1^k) \right]
\]

Let $\tilde{L},L$ be some differentiable surrogates for
$\tilde{L}_{0-1}, L_{0-1}$.  A classical end-to-end approach will
be to minimize $\tilde{L}$, and only it; this is our ``primary''
objective. We have no explicit desire for $N^{(1)}$ to output any
specific value, and hence $L$ is, a priori, irrelevant. A
decomposition approach would be to minimize both losses, under the
assumption that $L$ can ``direct'' $\bw_1$ towards an ``area'' in
which we know that the resulting outputs of $N^{(1)}$ can be separated
by $N^{(2)}$. Note that using $L$ is only possible when the $y$ values
are known to us.

Empirically, when comparing performances based on the ``primary''
objective, we see that the end-to-end approach is significantly inferior
to the decomposition approach (see
\figref{fig:rect_performance}). Using decomposition, we quickly arrive
at a good solution, regardless of the tuple's length, $k$ (as long as
$k$ is in the range where perfect input to $N^{(2)}$ is solvable by
SGD, as described in \secref{sec.Parities and Linear-Periodic
  Functions}). However, using the end-to-end approach works only for
$k=1,2$, and completely fails already when $k=3$ (or larger). This may
be somewhat surprising, as the end-to-end approach optimizes exactly
the primary objective, composed of two sub-problems each of which is
easily solved on its own, and with no additional irrelevant
objectives.

\setlength\figureheight{3.5cm}
\setlength\figurewidth{3.5cm}
\begin{figure}
\begin{center}
\begin{tikzpicture}

\begin{axis}[
title={$k=1$},
xmin=0, xmax=20000,
ymin=0.3, ymax=1,
ytick={0.3,1}, 
xmajorticks=false,
axis on top,
width=\figurewidth,
height=\figureheight
]
\addplot [red, very thick]
table {%
0 0.50600004196167
100 0.923000037670135
200 0.942000031471252
300 0.943499982357025
400 0.942000031471252
500 0.940500020980835
600 0.942000031471252
700 0.937000036239624
800 0.936000108718872
900 0.938000023365021
1000 0.944500088691711
1100 0.942499995231628
1200 0.94650000333786
1300 0.945500016212463
1400 0.948500096797943
1500 0.942500054836273
1600 0.942000031471252
1700 0.944000124931335
1800 0.944500088691711
1900 0.946000039577484
2000 0.935999989509583
2100 0.942000091075897
2200 0.940000057220459
2300 0.943000078201294
2400 0.945500075817108
2500 0.949500143527985
2600 0.941499948501587
2700 0.94350004196167
2800 0.944500088691711
2900 0.948000073432922
3000 0.944500088691711
3100 0.947500050067902
3200 0.946000039577484
3300 0.940999984741211
3400 0.948000073432922
3500 0.941500067710876
3600 0.950500130653381
3700 0.94950008392334
3800 0.943000018596649
3900 0.945000052452087
4000 0.94350004196167
4100 0.945000052452087
4200 0.939500033855438
4300 0.947000086307526
4400 0.943999946117401
4500 0.9410001039505
4600 0.944500029087067
4700 0.941500067710876
4800 0.944000124931335
4900 0.944500088691711
5000 0.944000124931335
5100 0.946500062942505
5200 0.944999992847443
5300 0.943499982357025
5400 0.942499995231628
5500 0.944500088691711
5600 0.942000031471252
5700 0.945000052452087
5800 0.945000052452087
5900 0.943000018596649
6000 0.94050008058548
6100 0.941000044345856
6200 0.939500033855438
6300 0.94650012254715
6400 0.941000044345856
6500 0.945500016212463
6600 0.945999979972839
6700 0.943000018596649
6800 0.945000112056732
6900 0.9410001039505
7000 0.944000065326691
7100 0.94350004196167
7200 0.945500075817108
7300 0.944500029087067
7400 0.947000086307526
7500 0.945000052452087
7600 0.946000099182129
7700 0.944500088691711
7800 0.945500016212463
7900 0.945500075817108
8000 0.950000047683716
8100 0.94650012254715
8200 0.941999971866608
8300 0.949000120162964
8400 0.944500029087067
8500 0.945000052452087
8600 0.946000039577484
8700 0.942000031471252
8800 0.944000124931335
8900 0.945999979972839
9000 0.942000150680542
9100 0.941500067710876
9200 0.940999984741211
9300 0.945500075817108
9400 0.944000124931335
9500 0.946000099182129
9600 0.939999997615814
9700 0.946500062942505
9800 0.947500050067902
9900 0.949000060558319
10000 0.943000018596649
10100 0.948500037193298
10200 0.946000099182129
10300 0.951000094413757
10400 0.946500062942505
10500 0.945500075817108
10600 0.944000005722046
10700 0.945500075817108
10800 0.946000039577484
10900 0.944000065326691
11000 0.945999979972839
11100 0.943500101566315
11200 0.945500016212463
11300 0.944000065326691
11400 0.943000078201294
11500 0.940000057220459
11600 0.944500088691711
11700 0.944500088691711
11800 0.946000039577484
11900 0.950000047683716
12000 0.949000060558319
12100 0.951000034809113
12200 0.952000081539154
12300 0.948499977588654
12400 0.946000039577484
12500 0.953500032424927
12600 0.952000081539154
12700 0.950000047683716
12800 0.950999975204468
12900 0.949000120162964
13000 0.950500071048737
13100 0.949000120162964
13200 0.951500058174133
13300 0.946500062942505
13400 0.949500024318695
13500 0.951500058174133
13600 0.948500096797943
13700 0.945000052452087
13800 0.948500096797943
13900 0.943000078201294
14000 0.94950008392334
14100 0.94650012254715
14200 0.950500071048737
14300 0.948000073432922
14400 0.948499977588654
14500 0.943000018596649
14600 0.948000073432922
14700 0.943499982357025
14800 0.944500029087067
14900 0.948000013828278
15000 0.947499990463257
15100 0.94350004196167
15200 0.94650000333786
15300 0.948000073432922
15400 0.945000052452087
15500 0.950000047683716
15600 0.951000034809113
15700 0.950000047683716
15800 0.947500050067902
15900 0.946500062942505
16000 0.944500029087067
16100 0.94700014591217
16200 0.94950008392334
16300 0.948000073432922
16400 0.949500024318695
16500 0.948000073432922
16600 0.945000052452087
16700 0.94700014591217
16800 0.940000116825104
16900 0.945000052452087
17000 0.947000086307526
17100 0.946000039577484
17200 0.945000112056732
17300 0.949000000953674
17400 0.944500148296356
17500 0.946000099182129
17600 0.942500114440918
17700 0.946500062942505
17800 0.954000115394592
17900 0.946000039577484
18000 0.946000039577484
18100 0.948000073432922
18200 0.946000039577484
18300 0.949000060558319
18400 0.947000026702881
18500 0.944000065326691
18600 0.948000013828278
18700 0.946000039577484
18800 0.944000065326691
18900 0.948000073432922
19000 0.947000086307526
19100 0.94650000333786
19200 0.944999992847443
19300 0.948000073432922
19400 0.946000039577484
19500 0.945000052452087
19600 0.946000039577484
19700 0.946500062942505
19800 0.948000013828278
19900 0.946500062942505
20000 0.945000052452087
};
\addplot [blue, very thick]
table {%
0 0.494000017642975
100 0.902500033378601
200 0.939500033855438
300 0.941500067710876
400 0.938500046730042
500 0.939000010490417
600 0.944000005722046
700 0.947500050067902
800 0.94650000333786
900 0.944500029087067
1000 0.94950008392334
1100 0.949999988079071
1200 0.94050008058548
1300 0.94050008058548
1400 0.944500029087067
1500 0.9410001039505
1600 0.940999984741211
1700 0.944000005722046
1800 0.943000078201294
1900 0.945000052452087
2000 0.945000052452087
2100 0.943500101566315
2200 0.944000065326691
2300 0.944000124931335
2400 0.946500062942505
2500 0.945000052452087
};

\end{axis}

\end{tikzpicture}
\begin{tikzpicture}

\begin{axis}[
title={$k=2$},
xmin=0, xmax=20000,
ymin=0.3, ymax=1,
ytick={0.3,1}, 
xmajorticks=false,
axis on top,
width=\figurewidth,
height=\figureheight
]
\addplot [red, very thick]
table {%
0 0.503500044345856
100 0.503500044345856
200 0.518500030040741
300 0.880500078201294
400 0.907000064849854
500 0.901500046253204
600 0.893500089645386
700 0.907500028610229
800 0.910500049591064
900 0.908500075340271
1000 0.893000066280365
1100 0.902500092983246
1200 0.897500038146973
1300 0.912000060081482
1400 0.897000014781952
1500 0.918500006198883
1600 0.903999984264374
1700 0.915000081062317
1800 0.912500023841858
1900 0.876000046730042
2000 0.906500101089478
2100 0.90149998664856
2200 0.900500118732452
2300 0.904500007629395
2400 0.883000075817108
2500 0.906500101089478
2600 0.906500041484833
2700 0.907000064849854
2800 0.904500126838684
2900 0.907000064849854
3000 0.897000074386597
3100 0.906000077724457
3200 0.902500092983246
3300 0.907500028610229
3400 0.91100001335144
3500 0.904500067234039
3600 0.907000064849854
3700 0.909000039100647
3800 0.906000018119812
3900 0.910500109195709
4000 0.908500015735626
4100 0.91100001335144
4200 0.90500009059906
4300 0.90149998664856
4400 0.911500096321106
4500 0.901500105857849
4600 0.907999992370605
4700 0.903500020503998
4800 0.905500113964081
4900 0.905500054359436
5000 0.90200012922287
5100 0.893500089645386
5200 0.902999997138977
5300 0.894500017166138
5400 0.907000005245209
5500 0.90200001001358
5600 0.900000095367432
5700 0.898000061511993
5800 0.901000082492828
5900 0.904000043869019
6000 0.888999998569489
6100 0.907000005245209
6200 0.894500076770782
6300 0.900500059127808
6400 0.905500054359436
6500 0.904000103473663
6600 0.913500070571899
6700 0.89900004863739
6800 0.903499960899353
6900 0.907500028610229
7000 0.905499994754791
7100 0.907500028610229
7200 0.89900004863739
7300 0.898000001907349
7400 0.901000022888184
7500 0.907500028610229
7600 0.905500054359436
7700 0.906000077724457
7800 0.905499994754791
7900 0.909000039100647
8000 0.909000039100647
8100 0.907000064849854
8200 0.915000081062317
8300 0.913000047206879
8400 0.907500028610229
8500 0.90200001001358
8600 0.906000018119812
8700 0.913000047206879
8800 0.911000072956085
8900 0.906000077724457
9000 0.908500075340271
9100 0.908500075340271
9200 0.902500092983246
9300 0.908500015735626
9400 0.905500054359436
9500 0.907000064849854
9600 0.912500083446503
9700 0.908500134944916
9800 0.910500049591064
9900 0.911499977111816
10000 0.914500057697296
10100 0.913500070571899
10200 0.907999992370605
10300 0.906500041484833
10400 0.910000026226044
10500 0.90800005197525
10600 0.896499991416931
10700 0.900000095367432
10800 0.909000039100647
10900 0.901000082492828
11000 0.894500076770782
11100 0.909000039100647
11200 0.90800005197525
11300 0.902000069618225
11400 0.900499999523163
11500 0.90500009059906
11600 0.900000035762787
11700 0.906500101089478
11800 0.906000077724457
11900 0.906500101089478
12000 0.909000098705292
12100 0.902500033378601
12200 0.910000085830688
12300 0.907999992370605
12400 0.897500038146973
12500 0.907000064849854
12600 0.908500075340271
12700 0.908500075340271
12800 0.913000047206879
12900 0.908500015735626
13000 0.910500049591064
13100 0.909500062465668
13200 0.910500049591064
13300 0.909500062465668
13400 0.910500049591064
13500 0.905500054359436
13600 0.908500015735626
13700 0.907500028610229
13800 0.90800005197525
13900 0.906000018119812
14000 0.913000047206879
14100 0.913500070571899
14200 0.912500083446503
14300 0.898500025272369
14400 0.905499994754791
14500 0.904000043869019
14600 0.911500036716461
14700 0.905500054359436
14800 0.89900004863739
14900 0.904000043869019
15000 0.910000085830688
15100 0.914500057697296
15200 0.904000103473663
15300 0.904000043869019
15400 0.903500080108643
15500 0.905000030994415
15600 0.903500020503998
15700 0.910500109195709
15800 0.911500036716461
15900 0.910000026226044
16000 0.910000085830688
16100 0.908000111579895
16200 0.912999987602234
16300 0.905500054359436
16400 0.909000098705292
16500 0.909500062465668
16600 0.908500015735626
16700 0.905500054359436
16800 0.899500131607056
16900 0.895500123500824
17000 0.907000064849854
17100 0.906000077724457
17200 0.909000039100647
17300 0.911000072956085
17400 0.910000085830688
17500 0.909500122070312
17600 0.912500023841858
17700 0.914000034332275
17800 0.91650003194809
17900 0.914000034332275
18000 0.910500049591064
18100 0.911500096321106
18200 0.916000127792358
18300 0.909500002861023
18400 0.912999987602234
18500 0.914500057697296
18600 0.915500044822693
18700 0.914000034332275
18800 0.913000047206879
18900 0.917500019073486
19000 0.903000116348267
19100 0.901000022888184
19200 0.90500009059906
19300 0.907000005245209
19400 0.90500009059906
19500 0.910000085830688
19600 0.904500067234039
19700 0.911500096321106
19800 0.91100001335144
19900 0.912500083446503
20000 0.907000064849854
};
\addplot [blue, very thick]
table {%
0 0.496500015258789
100 0.685999989509583
200 0.898500084877014
300 0.906500101089478
400 0.900500059127808
500 0.911500036716461
600 0.904000103473663
700 0.901500105857849
800 0.898500084877014
900 0.908500075340271
1000 0.905499994754791
1100 0.904500067234039
1200 0.907000064849854
1300 0.906500041484833
1400 0.909000039100647
1500 0.914000034332275
1600 0.920000016689301
1700 0.910000085830688
1800 0.907500088214874
1900 0.911500036716461
2000 0.907000005245209
2100 0.906500041484833
2200 0.906499981880188
2300 0.910500049591064
2400 0.906500101089478
2500 0.910500109195709
};

\end{axis}

\end{tikzpicture}
\begin{tikzpicture}

\begin{axis}[
title={$k=3$},
xmin=0, xmax=20000,
ymin=0.3, ymax=1,
ytick={0.3,1}, 
xmajorticks=false,
axis on top,
width=\figurewidth,
height=\figureheight
]
\addplot [blue, very thick]
table {%
0 0.515500009059906
100 0.507000029087067
200 0.734500050544739
300 0.7535001039505
400 0.861500024795532
500 0.866000056266785
600 0.858500003814697
700 0.85450005531311
800 0.856000065803528
900 0.855500102043152
1000 0.852000057697296
1100 0.859000027179718
1200 0.870500028133392
1300 0.868499994277954
1400 0.86599999666214
1500 0.862500071525574
1600 0.861500024795532
1700 0.859000027179718
1800 0.857500076293945
1900 0.861000061035156
2000 0.862500071525574
2100 0.86350005865097
2200 0.862000107765198
2300 0.86350005865097
2400 0.857000112533569
2500 0.861500024795532
};
\addplot [red, very thick]
table {%
0 0.515500009059906
100 0.481000006198883
200 0.484500020742416
300 0.484500020742416
400 0.484499990940094
500 0.484499990940094
600 0.484500020742416
700 0.484500020742416
800 0.484500020742416
900 0.481500029563904
1000 0.476000010967255
1100 0.484499990940094
1200 0.490500032901764
1300 0.484500020742416
1400 0.484500020742416
1500 0.484500020742416
1600 0.484500020742416
1700 0.484500020742416
1800 0.484500020742416
1900 0.484500020742416
2000 0.484500050544739
2100 0.484500020742416
2200 0.515500009059906
2300 0.515500068664551
2400 0.515500009059906
2500 0.484500020742416
2600 0.484500020742416
2700 0.515500009059906
2800 0.484500020742416
2900 0.484500020742416
3000 0.484500020742416
3100 0.484500020742416
3200 0.484500020742416
3300 0.484500020742416
3400 0.484500020742416
3500 0.484500020742416
3600 0.484500020742416
3700 0.484500020742416
3800 0.484500020742416
3900 0.484500020742416
4000 0.484500020742416
4100 0.484499990940094
4200 0.484500020742416
4300 0.484500020742416
4400 0.484500020742416
4500 0.484500020742416
4600 0.484500020742416
4700 0.484499990940094
4800 0.484500020742416
4900 0.484500020742416
5000 0.484500020742416
5100 0.484499990940094
5200 0.484500020742416
5300 0.484500020742416
5400 0.484500020742416
5500 0.484500020742416
5600 0.484500020742416
5700 0.484500020742416
5800 0.484500020742416
5900 0.484500020742416
6000 0.484500020742416
6100 0.484499990940094
6200 0.484499990940094
6300 0.484500020742416
6400 0.484500020742416
6500 0.484500020742416
6600 0.484500020742416
6700 0.484500020742416
6800 0.484500020742416
6900 0.484500020742416
7000 0.484500020742416
7100 0.484500020742416
7200 0.484500020742416
7300 0.484499990940094
7400 0.484500020742416
7500 0.484500050544739
7600 0.484500020742416
7700 0.484500020742416
7800 0.484500020742416
7900 0.484500020742416
8000 0.484500020742416
8100 0.484500020742416
8200 0.484500020742416
8300 0.484499990940094
8400 0.484500020742416
8500 0.484499990940094
8600 0.484500020742416
8700 0.484500020742416
8800 0.484500020742416
8900 0.484500020742416
9000 0.484500020742416
9100 0.484499990940094
9200 0.484500020742416
9300 0.484500020742416
9400 0.484500020742416
9500 0.484500050544739
9600 0.484500020742416
9700 0.484500020742416
9800 0.484499990940094
9900 0.484500020742416
10000 0.484500020742416
10100 0.484500020742416
10200 0.484499990940094
10300 0.484500020742416
10400 0.484500020742416
10500 0.484500020742416
10600 0.484499990940094
10700 0.484500020742416
10800 0.484500020742416
10900 0.484500050544739
11000 0.484500020742416
11100 0.484500020742416
11200 0.484500020742416
11300 0.484500020742416
11400 0.484499990940094
11500 0.484500020742416
11600 0.484500020742416
11700 0.484500020742416
11800 0.484500020742416
11900 0.484500020742416
12000 0.484500020742416
12100 0.484500020742416
12200 0.484499990940094
12300 0.484500020742416
12400 0.484500050544739
12500 0.484500020742416
12600 0.484500020742416
12700 0.484500020742416
12800 0.484500020742416
12900 0.484500020742416
13000 0.484500020742416
13100 0.484499990940094
13200 0.484500020742416
13300 0.484500020742416
13400 0.484500020742416
13500 0.484500020742416
13600 0.484500020742416
13700 0.484500020742416
13800 0.484500020742416
13900 0.484500020742416
14000 0.484500020742416
14100 0.484500020742416
14200 0.484500020742416
14300 0.484500020742416
14400 0.484500020742416
14500 0.484500020742416
14600 0.484500020742416
14700 0.484500050544739
14800 0.484500020742416
14900 0.484500020742416
15000 0.484500020742416
15100 0.484499990940094
15200 0.484500020742416
15300 0.484500020742416
15400 0.484500020742416
15500 0.484500020742416
15600 0.484500020742416
15700 0.484500020742416
15800 0.484500020742416
15900 0.484500020742416
16000 0.484500020742416
16100 0.484499990940094
16200 0.484499990940094
16300 0.484500020742416
16400 0.484500020742416
16500 0.484500020742416
16600 0.484500020742416
16700 0.484500020742416
16800 0.484500020742416
16900 0.484500020742416
17000 0.484500020742416
17100 0.484499990940094
17200 0.484500050544739
17300 0.484500020742416
17400 0.484500020742416
17500 0.484500020742416
17600 0.484500020742416
17700 0.484500020742416
17800 0.484500050544739
17900 0.484500020742416
18000 0.484500020742416
18100 0.484500020742416
18200 0.484500020742416
18300 0.484500020742416
18400 0.484500020742416
18500 0.484500020742416
18600 0.484500020742416
18700 0.484499990940094
18800 0.484500020742416
18900 0.484500020742416
19000 0.484500020742416
19100 0.484499990940094
19200 0.484500020742416
19300 0.484500020742416
19400 0.484500020742416
19500 0.484500020742416
19600 0.484500020742416
19700 0.484500020742416
19800 0.484500020742416
19900 0.484499990940094
20000 0.484500020742416
};
\end{axis}

\end{tikzpicture}
\begin{tikzpicture}

\begin{axis}[
title={$k=4$},
xmin=0, xmax=20000,
ymin=0.3, ymax=1,
ytick={0.3,1}, 
xmajorticks=false,
axis on top,
width=\figurewidth,
height=\figureheight
]
\addplot [blue, very thick]
table {%
0 0.505499958992004
100 0.494500011205673
200 0.494500041007996
300 0.693499982357025
400 0.810500025749207
500 0.832500040531158
600 0.828000068664551
700 0.831000089645386
800 0.820000112056732
900 0.817000091075897
1000 0.82150000333786
1100 0.828499972820282
1200 0.835000157356262
1300 0.839000105857849
1400 0.846500039100647
1500 0.827500104904175
1600 0.839500069618225
1700 0.829000115394592
1800 0.840000033378601
1900 0.844000101089478
2000 0.840000033378601
2100 0.827500104904175
2200 0.836000084877014
2300 0.843500018119812
2400 0.84250009059906
2500 0.84550005197525
};
\addplot [red, very thick]
table {%
0 0.492000043392181
100 0.494500011205673
200 0.494500011205673
300 0.494500011205673
400 0.488000005483627
500 0.488000005483627
600 0.494500011205673
700 0.494500011205673
800 0.492000013589859
900 0.494500041007996
1000 0.500500023365021
1100 0.492500007152557
1200 0.494500011205673
1300 0.494500011205673
1400 0.499500006437302
1500 0.496500015258789
1600 0.494500011205673
1700 0.497499972581863
1800 0.486499965190887
1900 0.48649999499321
2000 0.503000020980835
2100 0.485000014305115
2200 0.49099999666214
2300 0.49850007891655
2400 0.501500010490417
2500 0.495000004768372
2600 0.5
2700 0.507999956607819
2800 0.503000020980835
2900 0.494500011205673
3000 0.497500061988831
3100 0.498999983072281
3200 0.520500004291534
3300 0.494000017642975
3400 0.49250003695488
3500 0.494500011205673
3600 0.49549999833107
3700 0.495999991893768
3800 0.503000020980835
3900 0.50900000333786
4000 0.505500018596649
4100 0.508000016212463
4200 0.512000024318695
4300 0.510499954223633
4400 0.49549999833107
4500 0.492999970912933
4600 0.492000043392181
4700 0.494000017642975
4800 0.494000017642975
4900 0.514499962329865
5000 0.510500013828278
5100 0.5
5200 0.494500011205673
5300 0.507499992847443
5400 0.510500073432922
5500 0.512000024318695
5600 0.510500013828278
5700 0.5
5800 0.507000029087067
5900 0.510500013828278
6000 0.503499984741211
6100 0.504999995231628
6200 0.499500006437302
6300 0.494500041007996
6400 0.505500018596649
6500 0.505500018596649
6600 0.505499958992004
6700 0.505499958992004
6800 0.505500018596649
6900 0.494500011205673
7000 0.494500011205673
7100 0.494500011205673
7200 0.494500011205673
7300 0.494500011205673
7400 0.494500011205673
7500 0.494500011205673
7600 0.494500041007996
7700 0.494500011205673
7800 0.505499958992004
7900 0.494500011205673
8000 0.494500011205673
8100 0.494500011205673
8200 0.494500041007996
8300 0.494500011205673
8400 0.494500011205673
8500 0.505500018596649
8600 0.505499958992004
8700 0.505500018596649
8800 0.505499958992004
8900 0.505499958992004
9000 0.505499958992004
9100 0.494500041007996
9200 0.494500011205673
9300 0.494500011205673
9400 0.494500011205673
9500 0.494500011205673
9600 0.494500011205673
9700 0.494500011205673
9800 0.494500011205673
9900 0.494500011205673
10000 0.494500011205673
10100 0.494500011205673
10200 0.494500011205673
10300 0.494500011205673
10400 0.494500011205673
10500 0.494500041007996
10600 0.494500011205673
10700 0.494500011205673
10800 0.505500018596649
10900 0.505500018596649
11000 0.505499958992004
11100 0.505500018596649
11200 0.505500018596649
11300 0.505499958992004
11400 0.494500011205673
11500 0.494500011205673
11600 0.494500011205673
11700 0.494500011205673
11800 0.494500011205673
11900 0.494500011205673
12000 0.494500041007996
12100 0.494500011205673
12200 0.494500011205673
12300 0.494500011205673
12400 0.494500011205673
12500 0.494500011205673
12600 0.494500011205673
12700 0.494500041007996
12800 0.494500041007996
12900 0.494500011205673
13000 0.494500011205673
13100 0.494500011205673
13200 0.505500018596649
13300 0.505499958992004
13400 0.505499958992004
13500 0.505499958992004
13600 0.494500011205673
13700 0.494500041007996
13800 0.494500011205673
13900 0.494500011205673
14000 0.494500011205673
14100 0.494500011205673
14200 0.494500041007996
14300 0.494500011205673
14400 0.494500011205673
14500 0.494500011205673
14600 0.494500041007996
14700 0.494500041007996
14800 0.494500011205673
14900 0.494500011205673
15000 0.494500041007996
15100 0.494500011205673
15200 0.494500011205673
15300 0.494500011205673
15400 0.505499958992004
15500 0.505499958992004
15600 0.505499958992004
15700 0.505500018596649
15800 0.505500018596649
15900 0.494500011205673
16000 0.494500011205673
16100 0.494500011205673
16200 0.494500011205673
16300 0.494500011205673
16400 0.494500011205673
16500 0.494500011205673
16600 0.494500011205673
16700 0.494500011205673
16800 0.494500011205673
16900 0.494500011205673
17000 0.494500011205673
17100 0.494500011205673
17200 0.494500041007996
17300 0.494500011205673
17400 0.494500011205673
17500 0.494500011205673
17600 0.494500011205673
17700 0.494500041007996
17800 0.494500011205673
17900 0.494500011205673
18000 0.494500011205673
18100 0.494500011205673
18200 0.494500041007996
18300 0.494500011205673
18400 0.494500041007996
18500 0.494500041007996
18600 0.494500011205673
18700 0.494500041007996
18800 0.494500011205673
18900 0.494500011205673
19000 0.494500011205673
19100 0.494500011205673
19200 0.494500011205673
19300 0.494500011205673
19400 0.494500011205673
19500 0.494500041007996
19600 0.494500011205673
19700 0.494500041007996
19800 0.494500011205673
19900 0.494500041007996
20000 0.494500011205673
};
\end{axis}

\end{tikzpicture}
\end{center}
\caption{Performance comparison, \secref{sec:krect}'s experiment. The
  red and blue curves correspond to the end-to-end and
  decomposition approaches, respectively. The plots show the
  zero-one accuracy with respect to the primary objective, over a
  held out test set, as a function of training iterations. We have
  trained the end-to-end network for $20000$ SGD iterations, and the decomposition networks for only $2500$ iterations.} \label{fig:rect_performance}
\end{figure}
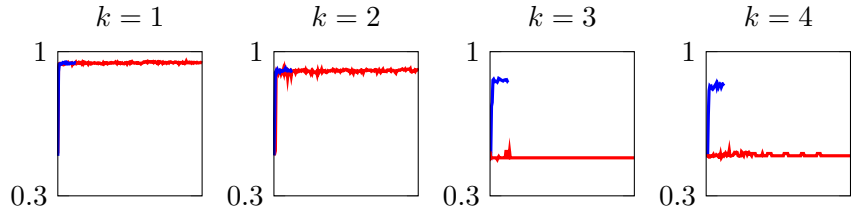

\subsection{Analysis}

We study the experiment from two directions: Theoretically, by analyzing the 
gradient variance (as in Section \ref{sec.Parities
  and Linear-Periodic Functions}), for a somewhat idealized version of the 
  experiment, and empirically, by estimating a signal-to-noise ratio (SNR) measure of 
  the stochastic gradients used by the algorithm. Both approaches point to a 
  similar issue: With the end-to-end approach, the gradients do not seem to 
  be sufficiently informative for the optimization process to succeed. 

Before continuing, we note that a conceptually similar experiment to ours has been reported in
\cite{gulccehre2016knowledge} (also involving a composition of an image recognition task and
a simple Boolean formula, and with qualitatively similar results). However, that experiment
came without a formal analysis, and the failure was attributed to local minima. In contrast,
our analysis indicates that the problem is not
due to local-minima (or saddle points), but from the gradients being non-informative and noisy. 

We begin with a theoretical result, which considers our experimental setup under two simplifying assumptions:
First, the input is assumed to be standard Gaussian, and second, we assume the labels are generated
by a target function of the form $h_{\bu}(\bx_1^k) = \prod_{l=1}^{k}\text{sign}(\bu^\top \bx_l)$. The first assumption is merely to simplify the analysis (similar results can be shown more generally, but the argument becomes more involved). The second assumption is equivalent to assuming that the labels $y(\bx)$ of individual images can be realized by a linear predictor, which is roughly the case for simple image labelling task such as ours. 

\begin{theorem}\label{thm:endtoend}
Let $\bx_1^k$ denote a k-tuple $(\bx_1,\ldots,\bx_k)$ of input instances, and assume
that each $\bx_l$ is i.i.d. standard Gaussian in $\reals^d$. 
Define
\[
h_{\bu}(\bx_1^k) = \prod_{l=1}^{k}\text{sign}(\bu^\top \bx_l),
\]
and the objective (w.r.t. some predictor $p_{\bw}$ parameterized by $\bw$)
\[
F(\bw) = \E_{\bx_1^k}\left[\ell(p_{\bw}(\bx_1^k),h_{\bu}(\bx_1^k)\right].
\]
Where the loss function $\ell$ is either the square loss 
$\ell(\hat{y},y) = \frac{1}{2}(\hat{y}-y)^2$ or a classification loss of 
the 
form $\ell(\hat{y},y)=r(\hat{y}\cdot y)$ for some $1$-Lipschitz function $r$.

Fix some $\bw$, and suppose that $p_{\bw}(\bx)$ is differentiable
w.r.t. $\bw$ and satisfies
$\E_{\bx_1^k}\left[\norm{\frac{\partial}{\partial
      \bw}p_\bw(\bx_1^k}^2\right]\leq G(\bw)^2$.  Then if
$\H=\{h_{\bu}:\bu\in\reals^d,\norm{\bu}=1\}$, then
\[
\Var(\H,F,\bw) ~\leq~ G(\bw)^2\cdot O\left(\sqrt{\frac{k\log(d)}{d}}\right)^{k}.
\]
\end{theorem}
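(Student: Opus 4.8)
The plan is to follow the template of \thmref{thm:sqdim}, but since $\H=\{h_{\bu}:\norm{\bu}=1\}$ is now an infinite, only \emph{approximately} orthogonal family, the Bessel-inequality step that worked for a finite orthonormal set must be replaced by an operator-norm estimate. Write $\bg(\bx_1^k)=\frac{\partial}{\partial\bw}p_{\bw}(\bx_1^k)$, so that $\E_{\bx_1^k}\norm{\bg}^2\le G(\bw)^2$. For the square loss, $\nabla F_{h}(\bw)=\E_{\bx_1^k}[(p_{\bw}-h)\,\bg]$, whose $h$-free part $\E_{\bx_1^k}[p_{\bw}\bg]$ cancels in \eqref{vardef}; writing $c_{\bu}:=\E_{\bx_1^k}[h_{\bu}\,\bg]$ this leaves
\[
\Var(\H,F,\bw)=\E_{\bu}\norm{c_{\bu}-\E_{\bu'}c_{\bu'}}^2\le\E_{\bu}\norm{c_{\bu}}^2 .
\]
For a classification loss $r(\hat y\,y)$ I would first observe that, since $h\in\sbin$, the scalar multiplying $\bg$ in the gradient, $r'(p_{\bw}h)\,h$, can be written as $A(\bx_1^k)+h\,B(\bx_1^k)$ with $A,B$ independent of $h$ and $|B|\le1$ (as $r$ is $1$-Lipschitz); the $A$-term cancels, reducing to the same quantity with $\bg$ replaced by $B\bg$, whose squared norm is still bounded by $G(\bw)^2$. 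So in both cases it suffices to bound $\E_{\bu}\norm{c_{\bu}}^2$.

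Next I would introduce the positive semidefinite integral operator $T$ on $L^2$ of the Gaussian measure, $Tf=\E_{\bu}[\inner{h_{\bu},f}h_{\bu}]$, with kernel $K(\bx,\bx')=\E_{\bu}[h_{\bu}(\bx)h_{\bu}(\bx')]$. Decomposing $\bg=(g_1,\dots)$ componentwise, $\E_{\bu}\norm{c_{\bu}}^2=\sum_j\E_{\bu}\inner{h_{\bu},g_j}^2=\sum_j\inner{Tg_j,g_j}\le\norm{T}_{\mathrm{op}}\sum_j\norm{g_j}^2\le\norm{T}_{\mathrm{op}}\,G(\bw)^2$. Since $\inner{h_{\bu},h_{\bu'}}^2$ is integrable, $T$ is Hilbert--Schmidt, so $\norm{T}_{\mathrm{op}}\le\sqrt{\mathrm{tr}(T^2)}$, and $\mathrm{tr}(T^2)=\E_{\bu,\bu'}\inner{h_{\bu},h_{\bu'}}^2$. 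The theorem thus reduces to estimating this purely geometric quantity, the analogue of $1/|\H|$ in \thmref{thm:sqdim}.

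Here the Gaussian assumption is what makes things clean. By independence of $\bx_1,\dots,\bx_k$ and the sign-correlation identity for a bivariate Gaussian, $\E_{\bz}[\sign(\bu^\top\bz)\sign(\bu'^\top\bz)]=\frac{2}{\pi}\arcsin(\bu^\top\bu')$ for a standard Gaussian $\bz\in\reals^d$, so $\inner{h_{\bu},h_{\bu'}}=\big(\frac{2}{\pi}\arcsin(\bu^\top\bu')\big)^{k}$. Since $\arcsin$ is convex on $[0,1]$ with $\arcsin(0)=0$ and $\arcsin(1)=\frac{\pi}{2}$, it lies below its chord, giving $\big|\frac{2}{\pi}\arcsin\rho\big|\le|\rho|$ and hence $\inner{h_{\bu},h_{\bu'}}^2\le(\bu^\top\bu')^{2k}$. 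Using the spherical moment $\E_{\bu,\bu'}[(\bu^\top\bu')^{2k}]=\frac{(2k-1)!!}{d(d+2)\cdots(d+2k-2)}\le(2k-1)!!/d^{k}\le(2k/d)^{k}$ then yields $\mathrm{tr}(T^2)\le(2k/d)^{k}$ and therefore $\norm{T}_{\mathrm{op}}\le(2k/d)^{k/2}=O(\sqrt{k/d})^{k}$. Substituting back gives $\Var(\H,F,\bw)\le G(\bw)^2\,O(\sqrt{k/d})^{k}$, which is at least as strong as the stated bound (the extra $\sqrt{\log d}$ factor is slack, most likely arising from a cruder, net-based estimate of the pairwise correlations).

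The main obstacle is precisely the step replacing Bessel's inequality: controlling $\norm{T}_{\mathrm{op}}$ for a continuum of target functions. The clean route above succeeds only because the standard-Gaussian input makes the pairwise correlations factorize across the $k$ coordinates into the exact $\arcsin$ formula; without Gaussianity one loses this product structure and must instead argue through anti-concentration of $\bu^\top\bx_l$ together with a quantitative almost-orthogonality estimate, which is presumably what the paper means when it says the general argument becomes more involved. A secondary point to verify carefully is the uniform treatment of both loss families --- namely that the $h$-independent part of the gradient really does cancel and that the residual effective gradient $B\bg$ still satisfies the $G(\bw)^2$ bound.
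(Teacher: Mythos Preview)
Your proof is correct and in fact sharper than the paper's, but the route is genuinely different. Both arguments begin with the same reduction: after cancelling the $h$-independent part of the gradient (and handling the classification loss exactly as you outline), everything reduces to bounding $\E_{\bu}\inner{h_{\bu},g}_{L_2}^2$ for a scalar $g$ with $\norm{g}_{L_2}^2\le G(\bw)^2$. Both also use the same pointwise estimate $|\inner{h_{\bu},h_{\bu'}}|\le|\bu^\top\bu'|^{k}$, proved via the $\arcsin$ identity for bivariate Gaussians.

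The divergence is in how the average over $\bu$ is controlled. The paper proceeds by a \emph{net} argument: by measure concentration it picks $n=d^k$ unit vectors $\bu_1,\dots,\bu_n$ with pairwise inner products $O(\sqrt{k\log(d)/d})$, so the corresponding $h_{\bu_i}$ have pairwise $L_2$ correlations at most $c=O(\sqrt{k\log(d)/d})^k$. A finite Gram-matrix lemma then gives $\frac{1}{n}\sum_i\inner{h_{\bu_i},g}^2\le\norm{g}^2(\frac{1}{n}+c)$, and a randomization trick (apply a Haar-random rotation to all the $\bu_i$ and take expectation) converts this into the spherical average $\E_{\bu}\inner{h_{\bu},g}^2$. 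The $\log d$ in the final bound is exactly the price of building the net.

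You instead bound the operator norm of $T=\E_{\bu}[h_{\bu}\otimes h_{\bu}]$ directly by its Hilbert--Schmidt norm, computing $\mathrm{tr}(T^2)=\E_{\bu,\bu'}\inner{h_{\bu},h_{\bu'}}^2\le\E_{\bu,\bu'}(\bu^\top\bu')^{2k}\le(2k/d)^k$ via the exact spherical moment. This avoids the net entirely, yielding $O(\sqrt{k/d})^k$ with no logarithmic loss, and is arguably the cleaner argument. What the paper's approach buys is that its key lemma (the Gram-matrix bound $\frac{1}{n}+c$) is a discrete, elementary statement requiring no integral-operator machinery and making the analogy with \thmref{thm:sqdim} transparent; what your approach buys is a tighter constant and a shorter proof. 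Your diagnosis that the $\sqrt{\log d}$ is slack from a net-based estimate is exactly right.
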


The proof is given in
\secref{proof:thm:endtoend}. The theorem shows that the
``signal'' regarding $h_\bu$ (or, if applying to our experiment,
the signal for learning $N^{(1)}$, had $y$ been drawn uniformly at
random from some set of functions over $X$) decreases exponentially
with $k$. This is similar to the parity result in Section \ref{sec.Parities
	and Linear-Periodic Functions}, but with an important difference: Whereas the base of the exponent there was $1/2$,
here it is the much smaller quantity $k\log(d)/\sqrt{d}$ (e.g. in our experiment, we have $k\leq 4$ and $d=28^2$). 
This indicates that already for very small values of $k$, the information contained in the gradients about $\bu$ can become extremely small, and prevent gradient-based methods from succeeding, fully according with our experiment. 

To complement this analysis (which applies to an idealized version of our experiment), we consider a related ``signal-to-noise'' (SNR) quantity, which can be empirically estimated in our actual experiment. To motivate it, note that a key quantity used in the proof of \thmref{thm:endtoend}, for
estimating the amount of signal carried by the gradient, is the squared
norm of the correlation between the gradient of the predictor $p_\bw$,
$g(\bx_1^k):=\frac{\partial}{\partial \bw}p_\bw(\bx_1^k)$ and the target
function $h_\bu$, which we denote by $\textrm{Sig}_\bu$:
\[
\textrm{Sig}_\bu:=\left\|\E_{\bx_1^k}\left[h_\bu(\bx_1^k)g(\bx_1^k)\right]\right\|^2.
\]
We will consider the ratio between this quantity and a ``noise'' term
$\textrm{Noi}_\bu$, i.e. the variance of this correlation over the samples:
\[
\textrm{Noi}_\bu:=  \E_{\bx_1^k}\left\|h_\bu(\bx_1^k)g(\bx_1^k) -
  \E_{\bx_1^k}\left[h_\bu(\bx_1^k)g(\bx_1^k)\right]\right\|^2 .
\]
Since here the randomness is with respect to the data rather than the target function (as in \thmref{thm:endtoend}), we can estimate this SNR ratio in our experiment. It is well-known (e.g. \cite{ghadimi2013stochastic}) that the amount of noise in the stochastic gradient estimates used by stochastic gradient descent crucially affects its convergence rate. Hence, smaller SNR should be correlated with worse performance. 

We empirically estimated this SNR measure,
$\textrm{Sig}_y/\textrm{Noi}_y$, for the gradients w.r.t. the weights
of the last layer of $N^{(1)}$ (which potentially learns our
intermediate labeling function $y$) at the initialization point in
parameter space. The SNR estimate for various values of $k$ are
plotted in \figref{fig:rect_SNR}. We indeed see that when $k\ge 3$,
the SNR appears to approach extremely small values, where the
estimator's noise, and the additional noise introduced by a finite
floating point representation, can completely mask the signal, which
can explain the failure in this case.

\setlength\figureheight{5cm}
\setlength\figurewidth{8cm}
\begin{figure}
\begin{center}
\begin{tikzpicture}

\begin{axis}[
xmin=1, xmax=4,
xtick={1,2,3,4},
ytick={-7,-15},
ymin=-15, ymax=-7,
axis on top,
width=\figurewidth,
height=\figureheight
]
\addplot [thick, blue]
table {%
1 -7.0935594727961
2 -7.1261229165805915
3 -7.1394724581214852
4 -7.1426319443290813
};
\addplot [thick, red]
table {%
1 -7.1186027251371593
2 -13.415309454829117
3 -14.00475224985728
4 -14.545383954625903
};
\end{axis}

\end{tikzpicture}
\end{center}
\caption{\secref{sec:krect}'s experiment: comparing the SNR for the
  end-to-end approach (red) and the decomposition approach (blue), as
  a function of $k$, in $\log_e$
  scale. } \label{fig:rect_SNR}
\end{figure}
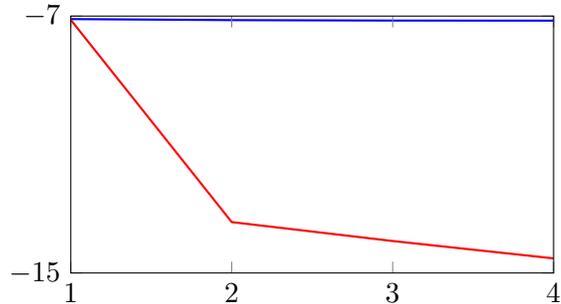

In \secref{exp:dec_vs_e2e} in the Appendix, we also present a second, more
synthetic, experiment, which demonstrates a case where the decomposition approach directly
decreases the stochastic noise in the SGD optimization process, hence benefiting 
the convergence rate. 


\section{Architecture and Conditioning}\label{sec.Piece-wise Linear
  AutoEncoders}

Network architecture choice is a crucial element in the success of
deep learning. New variants and development of novel architectures are
one of the main tools for achieving practical breakthroughs
\cite{He_2016_CVPR, szegedy2015going}. When choosing an architecture,
one consideration is how to inject prior knowledge on the problem at
hand, improving the network's expressiveness for that problem, while not
dramatically increasing sample complexity.  Another aspect involves
improving the computational complexity of training. In this section we
formally show how the choice of architecture affects the training time
through the lens of the condition number of the problem.

The study and practice of conditioning techniques, for convex and
non-convex problems, gained much attention recently (e.g.,
\cite{ioffe2015batch, kingma2014adam, duchi2011adaptive, shalev2011large}).
Here we show how architectural choice may have a dramatic effect on
the applicability of better conditioning techniques. 

The learning problem we consider in this section is that of encoding
one-dimensional, piecewise linear curves. We show how different
architectures, all of them of sufficient expressive power for solving
the problem, have orders-of-magnitude difference in their condition
numbers. In particular, this becomes apparent when considering
convolutional vs. fully connected layers. This sheds a new light over
the success of convolutional neural networks, which is generally
attributed to their sample complexity benefits. Moreover, we show how
conditioning, applied in conjunction with a better architecture
choice, can further decrease the condition number by orders of
magnitude.  The direct effect on the convergence rate is analyzed, and
is aligned with the significant performance gaps observed
empirically. We also demonstrate how performance may not significantly
improve by employing deeper and more powerful architectures, as well
as the price that comes with choosing a sub-optimal architecture.

\subsection{Experiments and Analysis}
We experiment with various deep learning solutions for encoding the
structure of one-dimensional, continuous, piecewise linear (PWL)
curves.  Any PWL curve with $k$ pieces can be written as:
$
f(x) = b + \sum_{i=1}^k a_i [x - \theta_i]_+ 
$,
where $a_i$ is the difference between the slope at the $i$'th segment
and the $(i-1)$'th segment. For example, the curve below can
be parametrized by $b = 1, a = (1, -2, 3), \theta = (0,2,6)$.

\begin{center}
\begin{tikzpicture}[scale=0.5, transform shape]
\draw[step=1cm,gray,very thin] (-2,-2) grid (9,4);
\draw[black,->] (0,-1) -- (0,4);
\draw[black,->] (-1,0) -- (9,0);
\draw[blue,very thick] (-1,1) -- (0,1) -- (2,3) -- (6,-1) -- (8,3);
\end{tikzpicture}
\end{center}

The problem we consider is that of receiving a vector of the values of
$f$ at $x \in \{0,1,\ldots,n-1\}$, namely
$\mathbf{f} := (f(0), f(1), \ldots, f(n-1))$, and outputting the
values of $b,\{a_i,\theta_i\}_{i=1}^k$. We can think of this problem
as an encoding problem, since we would like to be able to rebuild $f$
from the values of $b,\{a_i,\theta_i\}_{i=1}^k$.  Observe that
$b = f(0)$, so from now on, let us assume without loss of generality
that $b = 0$. 

Throughout our experiments, we use $n=100$, $k=3$. We sample
$\set{\theta_i}_{i\in[k]}$ uniformly without replacement from 
$\{0,1,\ldots,n-1\}$, and
sample each $a_i$ i.i.d. uniformly from $[-1,1]$.

\subsubsection{Convex Problem, Large Condition Number}\label{sec:FtoK}
As we assume that each $\theta_i$ is an integer in $\{0,1,\ldots,n-1\}$, we can 
represent $\{a_i,\theta_i\}_{i=1}^k$ as a vector
$\mathbf{p} \in \reals^n$ such that $p_j = 0$ unless there is some $i$
such that $\theta_i = j$, and in this case we set $p_j = a_i$. That
is, $p_j = \sum_{i=1}^k a_i \, \boldsymbol{1}_{[\theta_i = j-1]}$. 

This allows us to formalize the problem as a convex
optimization problem. Define a matrix $W \in \reals^{n,n}$ such that
$W_{i,j} = [i-j + 1]_+$. It is not difficult to show that
$\mathbf{f} = W \mathbf{p}$. Moreover, $W$ can be shown to be invertible, so we 
can extract
$\mathbf{p}$ from $\mathbf{f}$ by $\mathbf{p} = W^{-1} \mathbf{f}$.

We hence start by attempting to learn this linear transformation directly, using
a connected architecture of one layer, with $n$ output channels. Let the
weights of this layer be denoted $\hat{U}$. We therefore minimize the
objective:
\begin{equation}\label{obj:pwl1}
\min_{\hat{U}}\E_\f\left[\half (W^{-1} \mathbf{f} - \hat{U} \mathbf{f})^2\right]
\end{equation}
where $\f$ is sampled according to some distribution.  As a convex,
realizable (by $\hat{U}=W^{-1}$) problem, convergence is guaranteed, and we
can explicitly analyze its rate. However, perhaps unexpectedly, we observe a 
very slow rate of convergence to a satisfactory solution, where significant 
inaccuracies are present at the non-smoothness points. \figref{fig:FtoK} 
illustrates the results.

To analyze the convergence rate of this approach, and to benchmark the 
performance of the next set of experiments, we start
off by giving an explicit expression for $W^{-1}$:
\begin{lemma} \label{lem:invW}
The inverse of $W$ is the matrix $U$ s.t. $U_{i,i} = U_{i+2,i} = 1,
U_{i+1,i} = -2$,  and the rest of the coordinates of $U$ are zero.
\end{lemma}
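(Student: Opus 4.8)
The plan is to prove the lemma by direct verification that $WU = I$; since $W$ and $U$ are both $n\times n$, this one-sided identity already certifies $U = W^{-1}$ (and, as a byproduct, that $W$ is invertible, so nothing need be assumed about $W$ beforehand). First I would record the entries of $W$ explicitly: because $W_{i,j}=[i-j+1]_+$ vanishes whenever $i<j$ and equals $i-j+1$ when $i\ge j$, the matrix $W$ is lower triangular, and its $j$-th column is the vector whose $i$-th entry is $[i-j+1]_+$.

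Next I would exploit the banded column structure of $U$. For $j\le n-2$ the $j$-th column of $U$ is $\mathbf{e}_j-2\mathbf{e}_{j+1}+\mathbf{e}_{j+2}$, so the $j$-th column of $WU$ is $W\mathbf{e}_j-2W\mathbf{e}_{j+1}+W\mathbf{e}_{j+2}$, whose $i$-th entry is
\[
[i-j+1]_+ \;-\; 2[i-j]_+ \;+\; [i-j-1]_+ .
\]
Writing $m=i-j$, this is exactly the discrete second difference of the ReLU function evaluated at $m$, and I would finish the interior case by a three-way case analysis: it equals $0$ for $m\ge 1$ (since $(m+1)-2m+(m-1)=0$), equals $1$ for $m=0$ (as $[1]_+-2[0]_++[-1]_+=1$), and equals $0$ for $m\le -1$ (all three terms vanish). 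Hence the entry is precisely the Kronecker delta $\delta_{ij}$, i.e. the $j$-th column of $WU$ is $\mathbf{e}_j$.

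The only place requiring care — and the step I would treat as the main (though minor) obstacle — is the boundary, namely the last two columns $j=n-1$ and $j=n$, where the off-diagonal entries $U_{j+1,j}$ and $U_{j+2,j}$ fall outside the matrix and are simply absent. I would dispatch these uniformly by observing that the ``missing'' contributions $W\mathbf{e}_{n+1}$ and $W\mathbf{e}_{n+2}$ would, if they existed, have $i$-th entries $[i-n]_+$ and $[i-n-1]_+$, both of which are zero for every valid row index $i\le n$. Consequently the displayed formula and its case analysis apply verbatim for $j=n-1$ and $j=n$ as well, giving $WU\mathbf{e}_j=\mathbf{e}_j$ for all $j$ and hence $WU=I$. (Equivalently, one could verify $UW=I$ through the identical second-difference computation carried out on rows of $U$, with the symmetric boundary check now at rows $i=1,2$.)
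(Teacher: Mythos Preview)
Your proof is correct and follows essentially the same approach as the paper: reduce the product to a discrete second difference of the ReLU function $m\mapsto[m]_+$ and observe that this equals the Kronecker delta. The only cosmetic difference is that the paper verifies $UW=I$ (working with rows of $U$) while you verify $WU=I$ (working with columns of $U$); your treatment of the boundary columns $j=n-1,n$ is in fact more explicit than the paper's handling of the analogous boundary rows.
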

The proof is given in \appref{proof:lem:invW}. Next, we analyze the
iteration complexity of SGD for learning the matrix $U$. To that end,
we give an explicit expression for the expected value of the learned
weight matrix at each iteration $t$, denoted as $\hat{U}^t$:
\begin{lemma} \label{lem:hatUt} Assume $\hat{U}^0=0$, and that
  $\E_\f[ U\f \f^\top U^\top] = \lambda I$ for some $\lambda$. Then, running 
  SGD with learning
  rate $\eta$ over objective \ref{obj:pwl1} for $t$ iterations yields:
\[
\E \hat{U}_{t} = \eta \lambda W^\top  \sum_{i=0}^{t-1} (I-\eta \lambda W W^\top)^i
\]
\end{lemma}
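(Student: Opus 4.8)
The plan is to derive the SGD recurrence for the expected iterate and then solve it explicitly by unrolling. First I would compute the per-example gradient of the objective in \eqref{obj:pwl1}. Writing $U=W^{-1}$ for the target matrix and reading the squared term as $\half\norm{U\f-\hat{U}\f}^2$, the gradient with respect to $\hat{U}$ on a single sample $\f$ is $(\hat{U}-U)\f\f^\top$, so the SGD update with step size $\eta$ and fresh sample $\f_t$ at iteration $t$ reads
\[
\hat{U}^{t+1} = \hat{U}^t - \eta(\hat{U}^t - U)\f_t\f_t^\top .
\]

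Next I would take expectations over all the sampling randomness. The key observation is that $\hat{U}^t$ is a deterministic function of the earlier samples $\f_0,\ldots,\f_{t-1}$ and is therefore independent of the fresh sample $\f_t$. Conditioning on the past and using this independence lets me factor the expectation, so that with $A^t:=\E\hat{U}^t$ and $\Sigma:=\E[\f\f^\top]$ one obtains
\[
A^{t+1} = A^t - \eta(A^t - U)\Sigma .
\]
This factorization is the one place that requires care, since both $\hat{U}^t$ and $\f_t\f_t^\top$ are random; the rest is routine linear algebra.

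To put the recurrence in the stated form I would invoke the hypothesis $\E[U\f\f^\top U^\top]=\lambda I$, i.e. $U\Sigma U^\top=\lambda I$. Since $U=W^{-1}$ is invertible this gives $\Sigma=\lambda W W^\top$. Substituting and using $U W W^\top = W^{-1} W W^\top = W^\top$ collapses the recurrence to
\[
A^{t+1} = A^t\bigl(I - \eta\lambda W W^\top\bigr) + \eta\lambda W^\top .
\]

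Finally I would solve this affine matrix recurrence. Writing $M:=I-\eta\lambda W W^\top$ and $c:=\eta\lambda W^\top$, we have $A^{t+1}=A^t M + c$ with $A^0=0$ (from $\hat{U}^0=0$). A straightforward induction unrolls this to $A^t=c\sum_{i=0}^{t-1}M^i$, which is exactly
\[
\E\hat{U}^t = \eta\lambda W^\top \sum_{i=0}^{t-1}\bigl(I - \eta\lambda W W^\top\bigr)^i ,
\]
as claimed. I expect the main obstacle to be conceptual rather than computational: correctly justifying the independence that lets the expectation of the product $(\hat{U}^t-U)\f_t\f_t^\top$ factor as $(A^t-U)\Sigma$, after which the derivation is mechanical.
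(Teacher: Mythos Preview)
Your proposal is correct and follows essentially the same route as the paper: compute the per-sample gradient $(\hat{U}-U)\f\f^\top$, take expectations using the independence of $\hat{U}^t$ and the fresh sample (together with $\E[U\f\f^\top U^\top]=\lambda I$, equivalently $\E[\mathbf{p}\mathbf{p}^\top]=\lambda I$) to obtain the affine recurrence $A^{t+1}=A^t(I-\eta\lambda WW^\top)+\eta\lambda W^\top$, and unroll from $A^0=0$. If anything, you are slightly more explicit than the paper in justifying the factorization of the expectation.
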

The proof is given in \appref{proof:lem:hatUt}. Note that the assumption
that $\E_\f[ U\f \f^\top U^\top] = \lambda I$ holds under the
distributional assumption over the curves, as changes of direction in the curve are independent, and are
sampled each time from the same distribution.  The following 
theorem
establishes a lower bound on $\norm{\E \hat{U}_{t+1}-U}$, which by Jensen's 
inequality, implies a lower bound on $\E\norm{\hat{U}_{t+1}-U}$, the expected 
distance of $\hat{U}_{t+1}$ from $U$. Note that the lower bound holds 
even if we use all the data for
updating (that is, gradient descent and not stochastic gradient descent).
\begin{theorem}\label{thm:cond_converge}
Let $W = Q S V^\top$ be the singular value decomposition
  of $W$. If $\eta\, \lambda \,S_{1,1}^2 \ge 1$ then $\E
  \hat{U}_{t+1}$ diverges. Otherwise, we have 
\[
t+1 \le \frac{S_{1,1}^2}{2\,S_{n,n}^2} ~~~\Rightarrow~~~ \| \E
\hat{U}_{t+1} - U\| \ge 0.5 ~,
\]
where the norm is the spectral norm.  Furthermore, the condition number
$\frac{S_{1,1}^2}{S_{n,n}^2}$ (where $S_{1,1},S_{n,n}$ are the top and 
bottom singular values of $W$) is $\Omega(n^{3.5})$.
\end{theorem}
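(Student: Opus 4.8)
The plan is to turn the closed form of \lemref{lem:hatUt} into an explicit expression for the \emph{error} matrix $\E\hat{U}_t - U$, and then read off all three claims from the singular values of $W$. First I would sum the geometric series: since $I-(I-\eta\lambda WW^\top)=\eta\lambda WW^\top$ is invertible,
\[
\sum_{i=0}^{t-1}(I-\eta\lambda WW^\top)^i = \frac{1}{\eta\lambda}(WW^\top)^{-1}\left(I-(I-\eta\lambda WW^\top)^t\right),
\]
so that, using $W^\top(WW^\top)^{-1}=W^{-1}=U$, \lemref{lem:hatUt} collapses to $\E\hat{U}_t = U(I-(I-\eta\lambda WW^\top)^t)$, i.e. $\E\hat{U}_t-U = -\,U\,(I-\eta\lambda WW^\top)^t$. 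Plugging in the SVD $W=QSV^\top$ (so $U=VS^{-1}Q^\top$ and $(I-\eta\lambda WW^\top)^t=Q(I-\eta\lambda S^2)^tQ^\top$) and using orthogonality of $Q,V$ yields the diagonal form $\E\hat{U}_t-U = -\,V\,S^{-1}(I-\eta\lambda S^2)^t\,Q^\top$, whose spectral norm is simply
\[
\norm{\E\hat{U}_t-U} = \max_{j}\frac{\bigl|1-\eta\lambda S_{j,j}^2\bigr|^{t}}{S_{j,j}}.
\]
This single identity drives everything.

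The divergence claim is read off by isolating the top mode $j=1$: once $\eta\lambda S_{1,1}^2\ge 1$ the factor $1-\eta\lambda S_{1,1}^2$ leaves the interior of the contraction region $(-1,1)$, so the $S_{1,1}$-coordinate of the error ceases to contract and $\E\hat{U}_{t+1}$ fails to converge to $U$. For the iteration lower bound (the ``Otherwise'' regime $\eta\lambda S_{1,1}^2<1$) I would instead lower bound the maximum by the \emph{slowest} mode $j=n$,
\[
\norm{\E\hat{U}_{t+1}-U} \ge \frac{\bigl(1-\eta\lambda S_{n,n}^2\bigr)^{t+1}}{S_{n,n}},
\]
and apply Bernoulli's inequality $(1-x)^m\ge 1-mx$ together with the step budget $t+1\le S_{1,1}^2/(2S_{n,n}^2)$. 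This gives $(t+1)\eta\lambda S_{n,n}^2 \le \thalf\,\eta\lambda S_{1,1}^2 < \thalf$, hence $(1-\eta\lambda S_{n,n}^2)^{t+1}\ge\thalf$. Combining with $1/S_{n,n}=\norm{U}\ge 2$ (the spectral norm of $U$ is at least its largest entry $|{-2}|=2$) yields $\norm{\E\hat{U}_{t+1}-U}\ge 1\ge 0.5$, as claimed.

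Finally, for the condition number I would estimate the two extreme singular values of $W$ separately. For $S_{n,n}$: since $S_{n,n}=1/\norm{W^{-1}}=1/\norm{U}$ and $U$ is the banded matrix with entries in $\{1,-2,1\}$ from \lemref{lem:invW}, its row and column $\ell_1$-norms are at most $4$, so $\norm{U}=\Theta(1)$ and thus $S_{n,n}=\Theta(1)$. For $S_{1,1}$: I would exhibit a test vector, e.g. the all-ones vector $\mathbf{1}$, for which the $i$-th entry of $W\mathbf{1}$ equals $\sum_{j\le i}(i-j+1)=i(i+1)/2$, so $\norm{W\mathbf{1}}^2=\Theta(n^5)$ while $\norm{\mathbf{1}}^2=n$, giving $S_{1,1}\ge \norm{W\mathbf{1}}/\norm{\mathbf{1}}=\Omega(n^2)$. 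Together these give $S_{1,1}^2/S_{n,n}^2=\Omega(n^4)=\Omega(n^{3.5})$. I expect this last step --- pinning down the extreme singular values of the Toeplitz matrix $W$ with explicit constants, and in particular making the lower bound on $S_{1,1}$ rigorous --- to be the main obstacle; the reduction via the geometric sum and the SVD diagonalization is essentially mechanical.
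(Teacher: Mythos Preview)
Your derivation of the error identity $\E\hat{U}_t-U=-V\,S^{-1}(I-\eta\lambda S^2)^t\,Q^\top$ and the ensuing lower bound via Bernoulli's inequality is essentially the paper's argument, with only cosmetic differences: you collapse the finite geometric sum directly, whereas the paper writes the error as the tail $\sum_{i\ge t+1}$ of the infinite series; you bound $S_{n,n}^{-1}=\norm{U}\ge 2$ from the largest entry of $U$, whereas the paper shows $S_{n,n}\le 1$ via $e_1^\top WW^\top e_1=1$. Both routes land on the same diagonalized norm $\max_j S_{j,j}^{-1}\lvert 1-\eta\lambda S_{j,j}^2\rvert^{t+1}$.

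The condition-number estimate, however, takes a genuinely different route and in fact yields more than the paper claims. The paper works on the $U$ side: it shows $\lambda_{\max}(U^\top U)\ge 1$ via $Ue_n=e_n$, and then constructs a rather delicate test vector $v$ (quadratic on the first $\sqrt{n}$ coordinates, linear thereafter) for which $\norm{Uv}^2/\norm{v}^2=O(n^{-3.5})$, giving $S_{1,1}^2\ge\Omega(n^{3.5})$. Your argument stays on the $W$ side: the all-ones vector already gives $\norm{W\mathbf{1}}^2/\norm{\mathbf{1}}^2=\Theta(n^4)$, and the bandedness of $U$ gives $S_{n,n}=1/\norm{U}=\Theta(1)$. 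This is both shorter and stronger --- you actually obtain $S_{1,1}^2/S_{n,n}^2=\Omega(n^4)$, which of course implies the stated $\Omega(n^{3.5})$. Your worry that ``making the lower bound on $S_{1,1}$ rigorous'' is the main obstacle is unfounded: the computation $\norm{W\mathbf{1}}^2=\sum_{i=1}^n\bigl(i(i+1)/2\bigr)^2=\Theta(n^5)$ is already a complete proof.

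One small caveat (shared with the paper): the divergence claim as you phrase it is not literally correct. At $\eta\lambda S_{1,1}^2=1$ the factor $1-\eta\lambda S_{1,1}^2$ equals $0$, which is well inside $(-1,1)$; the top mode actually contracts for all $\eta\lambda S_{1,1}^2\in(0,2)$. This is an imprecision in the theorem statement itself rather than a flaw in your reduction.
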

The proof is given in \appref{proof:thm:cond_converge}. The theorem
implies that the condition number of W, and hence, the number of GD
iterations required for convergence, scales quite poorly with $n$. In
the next subsection, we will try to decrease the condition number of
the problem.
\subsubsection{Improved Condition Number through Convolutional
  Architecture}\label{sec:FtoKConv}
Examining the explicit expression for $U$ given in \lemref{lem:invW},
we see that $U \mathbf{f}$ can be written as a one-dimensional
convolution of $\mathbf{f}$ with the kernel $[1,-2,1]$. Therefore, the
mapping from $\mathbf{f}$ to $\mathbf{p}$ is realizable using a
convolutional layer.

Empirically, convergence to an accurate solution is faster using this
architecture. \figref{fig:FtoKConv} illustrates a few
examples. To theoretically understand the benefit of using a
convolution, from the perspective of the required number of iterations
for training, we will consider the new problem's condition number,
providing understanding of the gap in training time.  In the previous
section we saw that GD requires $\Omega(n^{3.5})$ iterations to learn
the full matrix $U$. In the appendix (sections \ref{sec:GDanalysis}
and \ref{sec:proofCondFtoKConv}) we show that under some mild assumptions, the 
condition number is only 
$\Theta(n^3)$, and GD requires only that order of iterations to learn the 
optimal 
filter $[1,-2,1]$.

\subsubsection{Additional Improvement through Explicit Conditioning}\label{sec:FtoKConvCond}

In \secref{sec:FtoKConv}, despite observing an improvement from the fully
connected architecture, we saw that GD still requires $\Omega(n^3)$
iterations even for the simple problem of learning the filter
$[1,-2,1]$. This motivates an application of additional conditioning
techniques, in the hope for extra performance gains.

First, let us explicitly represent the convolutional architecture as
a linear regression problem. We perform Vec2Row operation on
$\mathbf{f}$ as follows: given a sample $\f$, construct a matrix, $F$,
of size $n \times 3$, such that the $t$'th row of $F$ is
$[\f_{t-1}, \f_t, \f_{t+1}]$. Then, we obtain a vanilla linear
regression problem in $\reals^3$, with the filter $[1,-2,1]$ as its
solution. Given a sample $\f$, we can now approximate the correlation
matrix of $F$, denoted $C \in \reals^{3,3}$, by setting
$C_{i,j} = \E_{\f,t} [\f_{t-2+i} \f_{t-2+j}]$. We then
calculate the matrix $C^{-1/2}$ and replace every instance (namely, a
row of $F$)
$[\f_{t-1},\f_t,\f_{t+1}]$ by the instance
$[\f_{t-1},\f_t,\f_{t+1}]C^{-1/2}$. By construction, the correlation
matrix of the resulting instances is approximately the identity
matrix, hence the condition number is approximately $1$. It follows
(see again \appref{sec:GDanalysis}) that SGD converges using order of
$\log(1/\epsilon)$ iterations, independently of $n$. Empirically, we
quickly converge to extremely accurate results, illustrated in
\figref{fig:FtoKConvCond}.

We note that the use of a convolution architecture is crucial for the
efficiency of the conditioning; had the dimension of the problem not
been reduced so dramatically, the difficulty of estimating a large $n\times n$
correlation matrix scales strongly with $n$, and furthermore, its inversion
becomes a costly operation. The combined use of a better architecture and
of conditioning is what allows us to gain this dramatic improvement.

\subsubsection{Perhaps I should use a deeper network?}\label{sec:FAutoEncoder}

The solution arrived at in \secref{sec:FtoKConvCond} indicates that a suitable 
architecture choice and conditioning scheme
can provide training time speedups of multiple orders of
magnitude. Moreover, the benefit of reducing the number of parameters,
in the transition from a fully connected architecture to a
convolutional one, is shown to be helpful in terms of convergence
time. However, we should not rule out the possibility that a deeper,
wider network will not suffer from the deficiencies analyzed above for
the convex case.

Motivated by the success of deep auto-encoders, we experiment with a
deeper architecture for encoding $\f$. Namely, we minimize
$
\min_{\bv_1,\bv_2}\E_\f[(\f-M_{\bv_2}(N_{\bv_1}(\f)))^2]
$,
Where $N_{\bv_1},M_{\bv_2}$ are deep networks parametrized by their
weight vectors $\bv_1,\bv_2$, with the output of $N$ being of
dimension $2k$, enough for realization of the encoding problem. Each
of the two networks has three layers with ReLU activations, except for the output
layer of $M$ having a linear activation. The dimensions of the layers
are, $500,100,2k$ for $N$, and $100,100,n$ for $M$.

Aligned with the intuition gained through the previous experiments, we
observe that additional expressive power, when unnecessary, does not
solve inherent optimization problems, as this stronger Auto-Encoder
fails to capture the fine details of $\f$ at its non-smooth
points. See \figref{fig:FAutoEncoder} for examples.

\setlength\figureheight{3.3cm}
\setlength\figurewidth{3.3cm}
\begin{figure}
\centering
\begin{subfigure}[b]{1\textwidth}
\begin{center}
\begin{tikzpicture}

\begin{axis}[
xmin=-4.95, xmax=103.95, ticks=none,
ymin=-21.45, ymax=10.45,
width=\figurewidth,
height=\figureheight,
tick align=outside,
x grid style={lightgray!92.026143790849673!black},
y grid style={lightgray!92.026143790849673!black}
]
\addplot [line width=1.64pt, blue]
table {%
0 0
1 0
2 0
3 0
4 0
5 0
6 0
7 0
8 0
9 0
10 0
11 0
12 0
13 0
14 0
15 0
16 0
17 0
18 0
19 0
20 0
21 0
22 0
23 0
24 0
25 0
26 0
27 0
28 0
29 0
30 0
31 0
32 0
33 0
34 0
35 0
36 0
37 0
38 1
39 2
40 3
41 4
42 5
43 6
44 7
45 8
46 9
47 8
48 7
49 6
50 5
51 4
52 3
53 2
54 1
55 0
56 -1
57 -2
58 -3
59 -4
60 -5
61 -6
62 -7
63 -8
64 -9
65 -10
66 -11
67 -12
68 -13
69 -14
70 -15
71 -16
72 -17
73 -18
74 -19
75 -20
76 -19
77 -18
78 -17
79 -16
80 -15
81 -14
82 -13
83 -12
84 -11
85 -10
86 -9
87 -8
88 -7
89 -6
90 -5
91 -4
92 -3
93 -2
94 -1
95 0
96 1
97 2
98 3
99 4
};
\addplot [thick, red]
table {%
0 0
1 0
2 0
3 -0.0069754864089191
4 -0.0219952799379826
5 -0.0401506870985031
6 -0.0522629581391811
7 -0.0748085081577301
8 -0.0863587409257889
9 -0.0861943662166595
10 -0.0864416360855103
11 -0.0774574130773544
12 -0.0725926011800766
13 -0.0438455753028393
14 -0.0143482964485884
15 -0.0109460353851318
16 -0.0128098092973232
17 -0.0226919911801815
18 -0.0365021526813507
19 -0.0601185858249664
20 -0.0830197036266327
21 -0.0882921516895294
22 -0.101158432662487
23 -0.122324973344803
24 -0.1628657579422
25 -0.231216788291931
26 -0.291593968868256
27 -0.32721346616745
28 -0.339224368333817
29 -0.340061604976654
30 -0.336611926555634
31 -0.309971988201141
32 -0.229629158973694
33 -0.0846951380372047
34 0.17443560063839
35 0.519306302070618
36 0.951756775379181
37 1.47076058387756
38 2.09253263473511
39 2.76302719116211
40 3.4736647605896
41 4.15829181671143
42 4.8021068572998
43 5.39760828018188
44 5.90670108795166
45 6.28059577941895
46 6.45575141906738
47 6.45186471939087
48 6.2802848815918
49 5.9281177520752
50 5.42011737823486
51 4.74338340759277
52 3.93264102935791
53 3.00787305831909
54 1.99264788627625
55 0.918838262557983
56 -0.202609300613403
57 -1.34975516796112
58 -2.51276159286499
59 -3.69437170028687
60 -4.88368797302246
61 -6.09204864501953
62 -7.27498245239258
63 -8.47039318084717
64 -9.63816452026367
65 -10.7905912399292
66 -11.8757572174072
67 -12.9187850952148
68 -13.9133968353271
69 -14.8343715667725
70 -15.6476764678955
71 -16.3420066833496
72 -16.9118499755859
73 -17.3281154632568
74 -17.5877780914307
75 -17.695837020874
76 -17.6044006347656
77 -17.3553619384766
78 -16.9007663726807
79 -16.2998352050781
80 -15.5691337585449
81 -14.7232418060303
82 -13.7809295654297
83 -12.7708177566528
84 -11.6999988555908
85 -10.5968732833862
86 -9.46065902709961
87 -8.30097389221191
88 -7.11483812332153
89 -5.95156002044678
90 -4.81209230422974
91 -3.70399188995361
92 -2.64303302764893
93 -1.61542129516602
94 -0.651542663574219
95 0.259548187255859
96 1.10955715179443
97 1.95956754684448
98 2.8095760345459
99 3.65958786010742
};
\end{axis}

\end{tikzpicture}
\begin{tikzpicture}

\begin{axis}[
xmin=-4.95, xmax=103.95, ticks=none,
ymin=-21.45, ymax=10.45,
width=\figurewidth,
height=\figureheight,
tick align=outside,
x grid style={lightgray!92.026143790849673!black},
y grid style={lightgray!92.026143790849673!black}
]
\addplot [line width=1.64pt, blue]
table {%
0 0
1 0
2 0
3 0
4 0
5 0
6 0
7 0
8 0
9 0
10 0
11 0
12 0
13 0
14 0
15 0
16 0
17 0
18 0
19 0
20 0
21 0
22 0
23 0
24 0
25 0
26 0
27 0
28 0
29 0
30 0
31 0
32 0
33 0
34 0
35 0
36 0
37 0
38 1
39 2
40 3
41 4
42 5
43 6
44 7
45 8
46 9
47 8
48 7
49 6
50 5
51 4
52 3
53 2
54 1
55 0
56 -1
57 -2
58 -3
59 -4
60 -5
61 -6
62 -7
63 -8
64 -9
65 -10
66 -11
67 -12
68 -13
69 -14
70 -15
71 -16
72 -17
73 -18
74 -19
75 -20
76 -19
77 -18
78 -17
79 -16
80 -15
81 -14
82 -13
83 -12
84 -11
85 -10
86 -9
87 -8
88 -7
89 -6
90 -5
91 -4
92 -3
93 -2
94 -1
95 0
96 1
97 2
98 3
99 4
};
\addplot [thick, red]
table {%
0 0
1 0
2 0
3 0.00520124426111579
4 0.00550634087994695
5 0.0192065015435219
6 0.0232600625604391
7 0.0315447449684143
8 0.0309752374887466
9 0.0447717681527138
10 0.0579165816307068
11 0.0564558207988739
12 0.0767451524734497
13 0.0814575403928757
14 0.0796049386262894
15 0.0890195071697235
16 0.106299936771393
17 0.108772486448288
18 0.110724538564682
19 0.12733481824398
20 0.139832571148872
21 0.143483772873878
22 0.136486500501633
23 0.122313439846039
24 0.134967610239983
25 0.141626983880997
26 0.146947115659714
27 0.165399134159088
28 0.196008861064911
29 0.198405355215073
30 0.203785717487335
31 0.190550744533539
32 0.151898041367531
33 0.0894738435745239
34 0.0699301660060883
35 0.109896421432495
36 0.229292213916779
37 0.54422527551651
38 1.14978194236755
39 2.0017786026001
40 3.02620577812195
41 4.13820934295654
42 5.31281757354736
43 6.41174697875977
44 7.31764221191406
45 7.91841411590576
46 8.13288879394531
47 7.90876197814941
48 7.3004322052002
49 6.43122005462646
50 5.38422727584839
51 4.28268146514893
52 3.14477038383484
53 2.0464940071106
54 0.974892139434814
55 -0.0325908660888672
56 -1.04815757274628
57 -2.04015684127808
58 -3.01262497901917
59 -3.97721862792969
60 -4.96242094039917
61 -5.93002223968506
62 -6.89974498748779
63 -7.86384963989258
64 -8.8622989654541
65 -9.89096736907959
66 -10.9209308624268
67 -11.9629783630371
68 -13.0557994842529
69 -14.1713218688965
70 -15.2792873382568
71 -16.3642101287842
72 -17.3570404052734
73 -18.1755886077881
74 -18.7312068939209
75 -18.9611663818359
76 -18.7498016357422
77 -18.2044486999512
78 -17.4007949829102
79 -16.4213085174561
80 -15.3350439071655
81 -14.2225332260132
82 -13.121337890625
83 -12.0406455993652
84 -10.9991016387939
85 -9.99376678466797
86 -9.01261711120605
87 -8.04351997375488
88 -7.09054231643677
89 -6.10859966278076
90 -5.09753799438477
91 -4.12708568572998
92 -3.12814140319824
93 -2.12369322776794
94 -1.13551926612854
95 -0.113242387771606
96 0.889230012893677
97 1.89170432090759
98 2.89417600631714
99 3.89665126800537
};
\end{axis}

\end{tikzpicture}
\begin{tikzpicture}

\begin{axis}[
xmin=-4.95, xmax=103.95, ticks=none,
ymin=-21.45, ymax=10.45,
width=\figurewidth,
height=\figureheight,
tick align=outside,
x grid style={lightgray!92.026143790849673!black},
y grid style={lightgray!92.026143790849673!black}
]
\addplot [line width=1.64pt, blue]
table {%
0 0
1 0
2 0
3 0
4 0
5 0
6 0
7 0
8 0
9 0
10 0
11 0
12 0
13 0
14 0
15 0
16 0
17 0
18 0
19 0
20 0
21 0
22 0
23 0
24 0
25 0
26 0
27 0
28 0
29 0
30 0
31 0
32 0
33 0
34 0
35 0
36 0
37 0
38 1
39 2
40 3
41 4
42 5
43 6
44 7
45 8
46 9
47 8
48 7
49 6
50 5
51 4
52 3
53 2
54 1
55 0
56 -1
57 -2
58 -3
59 -4
60 -5
61 -6
62 -7
63 -8
64 -9
65 -10
66 -11
67 -12
68 -13
69 -14
70 -15
71 -16
72 -17
73 -18
74 -19
75 -20
76 -19
77 -18
78 -17
79 -16
80 -15
81 -14
82 -13
83 -12
84 -11
85 -10
86 -9
87 -8
88 -7
89 -6
90 -5
91 -4
92 -3
93 -2
94 -1
95 0
96 1
97 2
98 3
99 4
};
\addplot [thick, red]
table {%
0 0
1 0
2 0
3 0.00430600577965379
4 0.00976956356316805
5 0.0172063056379557
6 0.0268700607120991
7 0.0339580029249191
8 0.0471889451146126
9 0.0497073158621788
10 0.066035307943821
11 0.0717687085270882
12 0.0788590237498283
13 0.0917624309659004
14 0.0952340364456177
15 0.107478715479374
16 0.108700633049011
17 0.11788772046566
18 0.125913947820663
19 0.134193241596222
20 0.136922404170036
21 0.151234537363052
22 0.15989638864994
23 0.158478021621704
24 0.171548053622246
25 0.17242431640625
26 0.179476737976074
27 0.193019673228264
28 0.199987515807152
29 0.197422489523888
30 0.221175163984299
31 0.236544206738472
32 0.244769126176834
33 0.240172788500786
34 0.202943712472916
35 0.198594808578491
36 0.269093096256256
37 0.570316672325134
38 1.25479817390442
39 2.21243715286255
40 3.22447276115417
41 4.28554391860962
42 5.36711025238037
43 6.48031139373779
44 7.54996967315674
45 8.41318321228027
46 8.75176429748535
47 8.41291522979736
48 7.58028221130371
49 6.52234697341919
50 5.41618347167969
51 4.36698913574219
52 3.37953400611877
53 2.41142988204956
54 1.4309309720993
55 0.471616744995117
56 -0.490329444408417
57 -1.47625684738159
58 -2.46502757072449
59 -3.46573448181152
60 -4.46440362930298
61 -5.45684623718262
62 -6.43478870391846
63 -7.43172121047974
64 -8.40059757232666
65 -9.37980079650879
66 -10.3536167144775
67 -11.3226156234741
68 -12.2821750640869
69 -13.2542629241943
70 -14.2553634643555
71 -15.3423976898193
72 -16.4264678955078
73 -17.4426040649414
74 -18.20166015625
75 -18.5165176391602
76 -18.1785888671875
77 -17.3687438964844
78 -16.3087120056152
79 -15.1670894622803
80 -14.0441417694092
81 -12.9793586730957
82 -11.9487075805664
83 -10.9373779296875
84 -9.95260047912598
85 -8.96228408813477
86 -7.93113136291504
87 -6.90121459960938
88 -5.85157346725464
89 -4.82369613647461
90 -3.80328512191772
91 -2.7867169380188
92 -1.75597023963928
93 -0.71326732635498
94 0.317503213882446
95 1.33998799324036
96 2.35794496536255
97 3.37590193748474
98 4.39385986328125
99 5.41181659698486
};
\end{axis}

\end{tikzpicture}
\begin{tikzpicture}

\begin{axis}[
xmin=-4.95, xmax=103.95, ticks=none, ticks=none,
ymin=-4.25771758258343, ymax=86.1075103610754,
width=\figurewidth,
height=\figureheight,
tick align=outside,
x grid style={lightgray!92.026143790849673!black},
y grid style={lightgray!92.026143790849673!black}
]
\addplot [line width=1.64pt, blue]
table {%
0 0
1 0
2 0
3 0
4 0
5 0
6 0
7 0
8 0
9 0
10 0
11 0
12 1
13 2
14 3
15 4
16 5
17 6
18 7
19 8
20 9
21 10
22 11
23 12
24 13
25 14
26 15
27 16
28 17
29 18
30 19
31 20
32 21
33 22
34 23
35 24
36 25
37 26
38 27
39 28
40 29
41 30
42 31
43 32
44 33
45 34
46 35
47 36
48 37
49 38
50 39
51 40
52 38
53 36
54 37
55 38
56 39
57 40
58 41
59 42
60 43
61 44
62 45
63 46
64 47
65 48
66 49
67 50
68 51
69 52
70 53
71 54
72 55
73 56
74 57
75 58
76 59
77 60
78 61
79 62
80 63
81 64
82 65
83 66
84 67
85 68
86 69
87 70
88 71
89 72
90 73
91 74
92 75
93 76
94 77
95 78
96 79
97 80
98 81
99 82
};
\addplot [thick, red]
table {%
0 0
1 0
2 0
3 -0.0815853178501129
4 -0.101195633411407
5 -0.150207221508026
6 -0.111796021461487
7 -0.0290400981903076
8 0.154956638813019
9 0.345345616340637
10 0.677585124969482
11 1.0058718919754
12 1.49811148643494
13 2.10073781013489
14 2.85144424438477
15 3.84757471084595
16 4.93437480926514
17 5.99332666397095
18 7.07460594177246
19 8.20680904388428
20 9.3115930557251
21 10.2597255706787
22 11.1447706222534
23 12.0367918014526
24 12.9658946990967
25 14.0248498916626
26 15.0161046981812
27 15.9698238372803
28 16.9130554199219
29 17.9530258178711
30 18.9313545227051
31 19.9533519744873
32 20.8889083862305
33 21.8603668212891
34 22.8620109558105
35 23.8618202209473
36 24.8352279663086
37 25.8042373657227
38 26.7587966918945
39 27.8032970428467
40 28.9158134460449
41 30.0143127441406
42 31.0972557067871
43 31.9618797302246
44 32.8354949951172
45 33.5057106018066
46 34.1768836975098
47 34.887996673584
48 35.509033203125
49 35.9617919921875
50 36.3289031982422
51 36.6375732421875
52 36.945987701416
53 37.3377838134766
54 37.7293815612793
55 38.2470626831055
56 38.7413673400879
57 39.2631607055664
58 39.8073310852051
59 40.3791656494141
60 41.038387298584
61 41.810489654541
62 42.6431007385254
63 43.6559371948242
64 44.6661567687988
65 45.7005386352539
66 46.5768737792969
67 47.5006484985352
68 48.422477722168
69 49.4595603942871
70 50.5114974975586
71 51.4654350280762
72 52.2953186035156
73 53.1215515136719
74 54.1341018676758
75 55.1742706298828
76 56.089225769043
77 57.0574417114258
78 57.9523735046387
79 58.8960876464844
80 59.8224449157715
81 60.7595062255859
82 61.6241836547852
83 62.4704895019531
84 63.4053421020508
85 64.3891448974609
86 65.3195953369141
87 66.2478256225586
88 67.1543350219727
89 68.0852279663086
90 68.9380111694336
91 69.8029022216797
92 70.6551513671875
93 71.3615036010742
94 72.1387634277344
95 72.9993286132812
96 73.8566360473633
97 74.7139587402344
98 75.5712738037109
99 76.4286041259766
};
\end{axis}

\end{tikzpicture}
\begin{tikzpicture}

\begin{axis}[
xmin=-4.95, xmax=103.95, ticks=none,
ymin=-4.89082613587379, ymax=86.1376583874226,
width=\figurewidth,
height=\figureheight,
tick align=outside,
x grid style={lightgray!92.026143790849673!black},
y grid style={lightgray!92.026143790849673!black}
]
\addplot [line width=1.64pt, blue]
table {%
0 0
1 0
2 0
3 0
4 0
5 0
6 0
7 0
8 0
9 0
10 0
11 0
12 1
13 2
14 3
15 4
16 5
17 6
18 7
19 8
20 9
21 10
22 11
23 12
24 13
25 14
26 15
27 16
28 17
29 18
30 19
31 20
32 21
33 22
34 23
35 24
36 25
37 26
38 27
39 28
40 29
41 30
42 31
43 32
44 33
45 34
46 35
47 36
48 37
49 38
50 39
51 40
52 38
53 36
54 37
55 38
56 39
57 40
58 41
59 42
60 43
61 44
62 45
63 46
64 47
65 48
66 49
67 50
68 51
69 52
70 53
71 54
72 55
73 56
74 57
75 58
76 59
77 60
78 61
79 62
80 63
81 64
82 65
83 66
84 67
85 68
86 69
87 70
88 71
89 72
90 73
91 74
92 75
93 76
94 77
95 78
96 79
97 80
98 81
99 82
};
\addplot [thick, red]
table {%
0 0
1 0
2 0
3 -0.0191307291388512
4 -0.0822013169527054
5 -0.20269051194191
6 -0.248358264565468
7 -0.38944336771965
8 -0.512822866439819
9 -0.685454487800598
10 -0.753167748451233
11 -0.576001405715942
12 -0.104711174964905
13 0.71304190158844
14 1.61602294445038
15 2.52861785888672
16 3.42311525344849
17 4.33338022232056
18 5.26632833480835
19 6.06789350509644
20 6.85193824768066
21 7.62861204147339
22 8.55390644073486
23 9.49983692169189
24 10.3343477249146
25 11.2506675720215
26 12.1328687667847
27 12.9943857192993
28 13.9071407318115
29 14.8386335372925
30 15.7445030212402
31 16.6001586914062
32 17.5020771026611
33 18.4746265411377
34 19.3676624298096
35 20.2478218078613
36 21.2162380218506
37 22.2863845825195
38 23.19407081604
39 24.1570205688477
40 25.13014793396
41 26.13059425354
42 27.0794105529785
43 28.0471038818359
44 29.1483497619629
45 30.2816982269287
46 31.2546234130859
47 32.1805000305176
48 33.0764465332031
49 33.6021347045898
50 33.8714866638184
51 33.7884826660156
52 33.6140632629395
53 33.355525970459
54 33.2745170593262
55 33.5972557067871
56 34.3243827819824
57 35.2187423706055
58 36.1256256103516
59 37.0751724243164
60 38.1571884155273
61 39.2271270751953
62 40.3733062744141
63 41.4587516784668
64 42.5435180664062
65 43.609489440918
66 44.6279754638672
67 45.5382843017578
68 46.5032119750977
69 47.4876899719238
70 48.4935989379883
71 49.4886245727539
72 50.4043960571289
73 51.2829627990723
74 52.3207206726074
75 53.3559379577637
76 54.3449325561523
77 55.3118362426758
78 56.4788398742676
79 57.5311851501465
80 58.5301246643066
81 59.4877090454102
82 60.4907913208008
83 61.4837532043457
84 62.4576263427734
85 63.4129180908203
86 64.3620986938477
87 65.2597351074219
88 66.3444213867188
89 67.3293304443359
90 68.3332290649414
91 69.424072265625
92 70.4248504638672
93 71.3705673217773
94 72.2979125976562
95 73.2194366455078
96 74.1723327636719
97 75.125244140625
98 76.0781402587891
99 77.0310516357422
};
\end{axis}

\end{tikzpicture}
\begin{tikzpicture}

\begin{axis}[
xmin=-4.95, xmax=103.95, ticks=none,
ymin=-4.72865976989269, ymax=86.1299361795187,
width=\figurewidth,
height=\figureheight,
tick align=outside,
x grid style={lightgray!92.026143790849673!black},
y grid style={lightgray!92.026143790849673!black}
]
\addplot [line width=1.64pt, blue]
table {%
0 0
1 0
2 0
3 0
4 0
5 0
6 0
7 0
8 0
9 0
10 0
11 0
12 1
13 2
14 3
15 4
16 5
17 6
18 7
19 8
20 9
21 10
22 11
23 12
24 13
25 14
26 15
27 16
28 17
29 18
30 19
31 20
32 21
33 22
34 23
35 24
36 25
37 26
38 27
39 28
40 29
41 30
42 31
43 32
44 33
45 34
46 35
47 36
48 37
49 38
50 39
51 40
52 38
53 36
54 37
55 38
56 39
57 40
58 41
59 42
60 43
61 44
62 45
63 46
64 47
65 48
66 49
67 50
68 51
69 52
70 53
71 54
72 55
73 56
74 57
75 58
76 59
77 60
78 61
79 62
80 63
81 64
82 65
83 66
84 67
85 68
86 69
87 70
88 71
89 72
90 73
91 74
92 75
93 76
94 77
95 78
96 79
97 80
98 81
99 82
};
\addplot [thick, red]
table {%
0 0
1 0
2 0
3 -0.0511850193142891
4 -0.106824479997158
5 -0.168257832527161
6 -0.252829313278198
7 -0.32282555103302
8 -0.3877894282341
9 -0.481128305196762
10 -0.598723590373993
11 -0.475399196147919
12 0.282951414585114
13 1.19805383682251
14 2.2030131816864
15 3.14505696296692
16 4.12963247299194
17 4.99853420257568
18 5.96522283554077
19 6.88568782806396
20 7.82530403137207
21 8.80324935913086
22 9.70383644104004
23 10.6972198486328
24 11.5800886154175
25 12.5417375564575
26 13.541672706604
27 14.4706583023071
28 15.3935146331787
29 16.3706665039062
30 17.3227195739746
31 18.2931499481201
32 19.234748840332
33 20.1779594421387
34 21.1173229217529
35 22.0931816101074
36 23.023983001709
37 23.9896087646484
38 24.954833984375
39 25.8439464569092
40 26.8438720703125
41 27.8363151550293
42 28.7280406951904
43 29.6434593200684
44 30.5288887023926
45 31.4583587646484
46 32.414665222168
47 33.3690948486328
48 34.427619934082
49 35.4089965820312
50 35.9663429260254
51 35.7945556640625
52 34.8926429748535
53 34.0403823852539
54 33.8197631835938
55 34.2997055053711
56 35.1955871582031
57 36.1902885437012
58 37.217414855957
59 38.1081733703613
60 39.0213394165039
61 39.8639907836914
62 40.7085647583008
63 41.5482330322266
64 42.4169425964355
65 43.1684341430664
66 43.9963455200195
67 44.8974418640137
68 45.7567253112793
69 46.6430740356445
70 47.3545112609863
71 48.1698532104492
72 49.0182456970215
73 49.7463417053223
74 50.4819869995117
75 51.3078193664551
76 52.1427536010742
77 52.9640274047852
78 53.7516593933105
79 54.4541778564453
80 55.1950302124023
81 56.0123710632324
82 56.8109550476074
83 57.6630401611328
84 58.4425964355469
85 59.217170715332
86 59.9669647216797
87 60.7727355957031
88 61.5378456115723
89 62.3914413452148
90 63.1636352539062
91 63.8679656982422
92 64.6380157470703
93 65.3859024047852
94 66.1170806884766
95 66.921501159668
96 67.7388458251953
97 68.5561904907227
98 69.3735427856445
99 70.1908950805664
};
\end{axis}

\end{tikzpicture}
\end{center}
\caption{\secref{sec:FtoK}'s experiment - linear architecture.} \label{fig:FtoK}
\end{subfigure}

\begin{subfigure}[b]{1\textwidth}
\begin{center}
\begin{tikzpicture}

\begin{axis}[
xmin=-4.95, xmax=103.95, ticks=none,
ymin=-21.45, ymax=10.45,
width=\figurewidth,
height=\figureheight,
tick align=outside,
x grid style={lightgray!92.026143790849673!black},
y grid style={lightgray!92.026143790849673!black}
]
\addplot [line width=1.64pt, blue]
table {%
0 0
1 0
2 0
3 0
4 0
5 0
6 0
7 0
8 0
9 0
10 0
11 0
12 0
13 0
14 0
15 0
16 0
17 0
18 0
19 0
20 0
21 0
22 0
23 0
24 0
25 0
26 0
27 0
28 0
29 0
30 0
31 0
32 0
33 0
34 0
35 0
36 0
37 0
38 1
39 2
40 3
41 4
42 5
43 6
44 7
45 8
46 9
47 8
48 7
49 6
50 5
51 4
52 3
53 2
54 1
55 0
56 -1
57 -2
58 -3
59 -4
60 -5
61 -6
62 -7
63 -8
64 -9
65 -10
66 -11
67 -12
68 -13
69 -14
70 -15
71 -16
72 -17
73 -18
74 -19
75 -20
76 -19
77 -18
78 -17
79 -16
80 -15
81 -14
82 -13
83 -12
84 -11
85 -10
86 -9
87 -8
88 -7
89 -6
90 -5
91 -4
92 -3
93 -2
94 -1
95 0
96 1
97 2
98 3
99 4
};
\addplot [thick, red]
table {%
0 0
1 0
2 0
3 0
4 0
5 0
6 0
7 0
8 0
9 0
10 0
11 0
12 0
13 0
14 0
15 0
16 0
17 0
18 0
19 0
20 0
21 0
22 0
23 0
24 0
25 0
26 0
27 0
28 0
29 0
30 0
31 0
32 0
33 0
34 0
35 0
36 0
37 0
38 0.0323295891284943
39 0.0820146352052689
40 0.148726463317871
41 0.232136443257332
42 0.331915855407715
43 0.447736144065857
44 0.579268574714661
45 0.726184487342834
46 0.888155281543732
47 1.00019299983978
48 1.09191739559174
49 1.16365706920624
50 1.21574068069458
51 1.24849689006805
52 1.26225423812866
53 1.25734162330627
54 1.2340874671936
55 1.19282054901123
56 1.13386964797974
57 1.05756306648254
58 0.964229941368103
59 0.854198694229126
60 0.72779780626297
61 0.585355937480927
62 0.427202224731445
63 0.253664910793304
64 0.065072774887085
65 -0.138245582580566
66 -0.355961441993713
67 -0.58774596452713
68 -0.833270847797394
69 -1.09220743179321
70 -1.36422669887543
71 -1.64900028705597
72 -1.94619917869568
73 -2.25549530982971
74 -2.5765597820282
75 -2.9090633392334
76 -3.18801879882812
77 -3.44304633140564
78 -3.67447304725647
79 -3.88262844085693
80 -4.06784152984619
81 -4.23044109344482
82 -4.37075471878052
83 -4.48911142349243
84 -4.58584070205688
85 -4.6612696647644
86 -4.71572923660278
87 -4.74954557418823
88 -4.76304912567139
89 -4.75656795501709
90 -4.73043060302734
91 -4.68496608734131
92 -4.62050342559814
93 -4.53736925125122
94 -4.43589401245117
95 -4.31640625
96 -4.17923450469971
97 -4.02470731735229
98 -3.85315322875977
99 -3.66490077972412
};
\end{axis}

\end{tikzpicture}
\begin{tikzpicture}

\begin{axis}[
xmin=-4.95, xmax=103.95, ticks=none,
ymin=-21.45, ymax=10.45,
width=\figurewidth,
height=\figureheight,
tick align=outside,
x grid style={lightgray!92.026143790849673!black},
y grid style={lightgray!92.026143790849673!black}
]
\addplot [line width=1.64pt, blue]
table {%
0 0
1 0
2 0
3 0
4 0
5 0
6 0
7 0
8 0
9 0
10 0
11 0
12 0
13 0
14 0
15 0
16 0
17 0
18 0
19 0
20 0
21 0
22 0
23 0
24 0
25 0
26 0
27 0
28 0
29 0
30 0
31 0
32 0
33 0
34 0
35 0
36 0
37 0
38 1
39 2
40 3
41 4
42 5
43 6
44 7
45 8
46 9
47 8
48 7
49 6
50 5
51 4
52 3
53 2
54 1
55 0
56 -1
57 -2
58 -3
59 -4
60 -5
61 -6
62 -7
63 -8
64 -9
65 -10
66 -11
67 -12
68 -13
69 -14
70 -15
71 -16
72 -17
73 -18
74 -19
75 -20
76 -19
77 -18
78 -17
79 -16
80 -15
81 -14
82 -13
83 -12
84 -11
85 -10
86 -9
87 -8
88 -7
89 -6
90 -5
91 -4
92 -3
93 -2
94 -1
95 0
96 1
97 2
98 3
99 4
};
\addplot [thick, red]
table {%
0 0
1 0
2 0
3 0
4 0
5 0
6 0
7 0
8 0
9 0
10 0
11 0
12 0
13 0
14 0
15 0
16 0
17 0
18 0
19 0
20 0
21 0
22 0
23 0
24 0
25 0
26 0
27 0
28 0
29 0
30 0
31 0
32 0
33 0
34 0
35 0
36 0
37 0
38 0.848757266998291
39 1.70052218437195
40 2.55505585670471
41 3.41211915016174
42 4.27147388458252
43 5.13288021087646
44 5.99609899520874
45 6.8608922958374
46 7.72702121734619
47 6.89673137664795
48 6.06128549575806
49 5.22091770172119
50 4.37587070465088
51 3.52638626098633
52 2.67269611358643
53 1.81504344940186
54 0.953664779663086
55 0.0888023376464844
56 -0.779304504394531
57 -1.65042304992676
58 -2.52431106567383
59 -3.40072250366211
60 -4.27942657470703
61 -5.16019058227539
62 -6.04276084899902
63 -6.92690086364746
64 -7.81238555908203
65 -8.69896507263184
66 -9.58639717102051
67 -10.4744510650635
68 -11.3628845214844
69 -12.2514495849609
70 -13.1399307250977
71 -14.028076171875
72 -14.9156360626221
73 -15.8023834228516
74 -16.6880798339844
75 -17.5724868774414
76 -16.7578392028809
77 -15.935417175293
78 -15.1054344177246
79 -14.2681617736816
80 -13.4238090515137
81 -12.572624206543
82 -11.71484375
83 -10.850715637207
84 -9.98049926757812
85 -9.1043701171875
86 -8.22261810302734
87 -7.33548736572266
88 -6.44316864013672
89 -5.54595947265625
90 -4.64405822753906
91 -3.73772430419922
92 -2.82715606689453
93 -1.91265106201172
94 -0.994422912597656
95 -0.0727081298828125
96 0.852264404296875
97 1.78023529052734
98 2.71097564697266
99 3.64425659179688
};
\end{axis}

\end{tikzpicture}
\begin{tikzpicture}

\begin{axis}[
xmin=-4.95, xmax=103.95, ticks=none,
ymin=-21.4518639564514, ymax=10.4531399726868,
width=\figurewidth,
height=\figureheight,
tick align=outside,
x grid style={lightgray!92.026143790849673!black},
y grid style={lightgray!92.026143790849673!black}
]
\addplot [line width=1.64pt, blue]
table {%
0 0
1 0
2 0
3 0
4 0
5 0
6 0
7 0
8 0
9 0
10 0
11 0
12 0
13 0
14 0
15 0
16 0
17 0
18 0
19 0
20 0
21 0
22 0
23 0
24 0
25 0
26 0
27 0
28 0
29 0
30 0
31 0
32 0
33 0
34 0
35 0
36 0
37 0
38 1
39 2
40 3
41 4
42 5
43 6
44 7
45 8
46 9
47 8
48 7
49 6
50 5
51 4
52 3
53 2
54 1
55 0
56 -1
57 -2
58 -3
59 -4
60 -5
61 -6
62 -7
63 -8
64 -9
65 -10
66 -11
67 -12
68 -13
69 -14
70 -15
71 -16
72 -17
73 -18
74 -19
75 -20
76 -19
77 -18
78 -17
79 -16
80 -15
81 -14
82 -13
83 -12
84 -11
85 -10
86 -9
87 -8
88 -7
89 -6
90 -5
91 -4
92 -3
93 -2
94 -1
95 0
96 1
97 2
98 3
99 4
};
\addplot [thick, red]
table {%
0 0
1 0
2 0
3 0
4 0
5 0
6 0
7 0
8 0
9 0
10 0
11 0
12 0
13 0
14 0
15 0
16 0
17 0
18 0
19 0
20 0
21 0
22 0
23 0
24 0
25 0
26 0
27 0
28 0
29 0
30 0
31 0
32 0
33 0
34 0
35 0
36 0
37 0
38 0.999992251396179
39 2.00005507469177
40 3.00019359588623
41 4.000412940979
42 5.00071954727173
43 6.00111675262451
44 7.00161123275757
45 8.00220775604248
46 9.0029125213623
47 8.00374317169189
48 7.00455093383789
49 6.00533008575439
50 5.00607442855835
51 4.00678157806396
52 3.00744438171387
53 2.008056640625
54 1.00861358642578
55 0.00911140441894531
56 -0.990453720092773
57 -1.9900951385498
58 -2.98980903625488
59 -3.98960113525391
60 -4.98948287963867
61 -5.98945999145508
62 -6.98953056335449
63 -7.98970794677734
64 -8.98998069763184
65 -9.9903736114502
66 -10.9908828735352
67 -11.9915142059326
68 -12.9922695159912
69 -13.9931640625
70 -14.9942054748535
71 -15.9953804016113
72 -16.9967041015625
73 -17.9981918334961
74 -18.999828338623
75 -20.001636505127
76 -19.0036277770996
77 -18.0056495666504
78 -17.0077133178711
79 -16.0097961425781
80 -15.0118980407715
81 -14.0140228271484
82 -13.0161399841309
83 -12.0182647705078
84 -11.0204086303711
85 -10.0225219726562
86 -9.02463531494141
87 -8.02671813964844
88 -7.02877044677734
89 -6.03080749511719
90 -5.03280639648438
91 -4.03476715087891
92 -3.03668975830078
93 -2.03854370117188
94 -1.04035949707031
95 -0.0421142578125
96 0.956192016601562
97 1.95456695556641
98 2.95304870605469
99 3.95159149169922
};
\end{axis}

\end{tikzpicture}
\begin{tikzpicture}

\begin{axis}[
xmin=-4.95, xmax=103.95, ticks=none,
ymin=-4.1, ymax=86.1,
width=\figurewidth,
height=\figureheight,
tick align=outside,
x grid style={lightgray!92.026143790849673!black},
y grid style={lightgray!92.026143790849673!black}
]
\addplot [line width=1.64pt, blue]
table {%
0 0
1 0
2 0
3 0
4 0
5 0
6 0
7 0
8 0
9 0
10 0
11 0
12 1
13 2
14 3
15 4
16 5
17 6
18 7
19 8
20 9
21 10
22 11
23 12
24 13
25 14
26 15
27 16
28 17
29 18
30 19
31 20
32 21
33 22
34 23
35 24
36 25
37 26
38 27
39 28
40 29
41 30
42 31
43 32
44 33
45 34
46 35
47 36
48 37
49 38
50 39
51 40
52 38
53 36
54 37
55 38
56 39
57 40
58 41
59 42
60 43
61 44
62 45
63 46
64 47
65 48
66 49
67 50
68 51
69 52
70 53
71 54
72 55
73 56
74 57
75 58
76 59
77 60
78 61
79 62
80 63
81 64
82 65
83 66
84 67
85 68
86 69
87 70
88 71
89 72
90 73
91 74
92 75
93 76
94 77
95 78
96 79
97 80
98 81
99 82
};
\addplot [thick, red]
table {%
0 0
1 0
2 0
3 0
4 0
5 0
6 0
7 0
8 0
9 0
10 0
11 0
12 0.0323295891284943
13 0.0820146352052689
14 0.148726463317871
15 0.232136443257332
16 0.331915855407715
17 0.447736144065857
18 0.579268574714661
19 0.726184487342834
20 0.888155281543732
21 1.06485223770142
22 1.25594663619995
23 1.46110999584198
24 1.68001341819763
25 1.91232848167419
26 2.1577262878418
27 2.41587853431702
28 2.68645620346069
29 2.96913075447083
30 3.26357364654541
31 3.56945610046387
32 3.88644957542419
33 4.21422529220581
34 4.55245494842529
35 4.90080881118774
36 5.25895929336548
37 5.62657737731934
38 6.00333452224731
39 6.38890171051025
40 6.78295135498047
41 7.18515300750732
42 7.59518003463745
43 8.0127010345459
44 8.43739032745361
45 8.86891841888428
46 9.30695533752441
47 9.75117301940918
48 10.2012434005737
49 10.6568393707275
50 11.1176280975342
51 11.5832834243774
52 11.9564876556396
53 12.2818355560303
54 12.6569728851318
55 13.0376329421997
56 13.4234867095947
57 13.814208984375
58 14.2094688415527
59 14.6089382171631
60 15.0122852325439
61 15.4191875457764
62 15.8293104171753
63 16.2423286437988
64 16.6579113006592
65 17.0757331848145
66 17.4954624176025
67 17.9167709350586
68 18.3393325805664
69 18.7628154754639
70 19.1868934631348
71 19.611234664917
72 20.0355110168457
73 20.459400177002
74 20.8825645446777
75 21.3046836853027
76 21.7254180908203
77 22.1444511413574
78 22.5614471435547
79 22.9760780334473
80 23.3880157470703
81 23.796932220459
82 24.2024993896484
83 24.6043891906738
84 25.002269744873
85 25.3958129882812
86 25.7846984863281
87 26.1685829162598
88 26.5471477508545
89 26.9200630187988
90 27.2870006561279
91 27.6476249694824
92 28.0016136169434
93 28.3486366271973
94 28.6883716583252
95 29.0204772949219
96 29.3446369171143
97 29.6605110168457
98 29.9677810668945
99 30.2661113739014
};
\end{axis}

\end{tikzpicture}
\begin{tikzpicture}

\begin{axis}[
xmin=-4.95, xmax=103.95, ticks=none,
ymin=-4.1, ymax=86.1,
width=\figurewidth,
height=\figureheight,
tick align=outside,
x grid style={lightgray!92.026143790849673!black},
y grid style={lightgray!92.026143790849673!black}
]
\addplot [line width=1.64pt, blue]
table {%
0 0
1 0
2 0
3 0
4 0
5 0
6 0
7 0
8 0
9 0
10 0
11 0
12 1
13 2
14 3
15 4
16 5
17 6
18 7
19 8
20 9
21 10
22 11
23 12
24 13
25 14
26 15
27 16
28 17
29 18
30 19
31 20
32 21
33 22
34 23
35 24
36 25
37 26
38 27
39 28
40 29
41 30
42 31
43 32
44 33
45 34
46 35
47 36
48 37
49 38
50 39
51 40
52 38
53 36
54 37
55 38
56 39
57 40
58 41
59 42
60 43
61 44
62 45
63 46
64 47
65 48
66 49
67 50
68 51
69 52
70 53
71 54
72 55
73 56
74 57
75 58
76 59
77 60
78 61
79 62
80 63
81 64
82 65
83 66
84 67
85 68
86 69
87 70
88 71
89 72
90 73
91 74
92 75
93 76
94 77
95 78
96 79
97 80
98 81
99 82
};
\addplot [thick, red]
table {%
0 0
1 0
2 0
3 0
4 0
5 0
6 0
7 0
8 0
9 0
10 0
11 0
12 0.848757266998291
13 1.70052218437195
14 2.55505585670471
15 3.41211915016174
16 4.27147388458252
17 5.13288021087646
18 5.99609899520874
19 6.8608922958374
20 7.72702121734619
21 8.59424591064453
22 9.46233177185059
23 10.3310317993164
24 11.2001104354858
25 12.0693340301514
26 12.9384574890137
27 13.8072443008423
28 14.6754550933838
29 15.5428514480591
30 16.4091968536377
31 17.2742481231689
32 18.1377639770508
33 18.9995193481445
34 19.859260559082
35 20.7167510986328
36 21.5717582702637
37 22.424036026001
38 23.2733497619629
39 24.1194610595703
40 24.9621238708496
41 25.8011074066162
42 26.6361751556396
43 27.4670848846436
44 28.2935943603516
45 29.1154651641846
46 29.9324626922607
47 30.7443428039551
48 31.5508651733398
49 32.351806640625
50 33.1469078063965
51 33.9359436035156
52 32.1723937988281
53 30.393274307251
54 31.1453380584717
55 31.8918056488037
56 32.6324462890625
57 33.367015838623
58 34.095272064209
59 34.8169822692871
60 35.5318984985352
61 36.239803314209
62 36.9404373168945
63 37.6335716247559
64 38.3189659118652
65 38.9963684082031
66 39.6655540466309
67 40.3262748718262
68 40.9783020019531
69 41.6213836669922
70 42.255298614502
71 42.8797912597656
72 43.4946327209473
73 44.0995826721191
74 44.694393157959
75 45.2788391113281
76 45.8526725769043
77 46.415657043457
78 46.9675674438477
79 47.5081405639648
80 48.0371475219727
81 48.5543670654297
82 49.0595321655273
83 49.5524063110352
84 50.0327606201172
85 50.5003623962402
86 50.9549751281738
87 51.3963317871094
88 51.8242149353027
89 52.2383880615234
90 52.6386375427246
91 53.0246429443359
92 53.3962478637695
93 53.7531967163086
94 54.0951995849609
95 54.4220657348633
96 54.7335586547852
97 55.0293884277344
98 55.3093948364258
99 55.5732879638672
};
\end{axis}

\end{tikzpicture}
\begin{tikzpicture}

\begin{axis}[
xmin=-4.95, xmax=103.95, ticks=none,
ymin=-4.13963394165039, ymax=86.9323127746582,
width=\figurewidth,
height=\figureheight,
tick align=outside,
x grid style={lightgray!92.026143790849673!black},
y grid style={lightgray!92.026143790849673!black}
]
\addplot [line width=1.64pt, blue]
table {%
0 0
1 0
2 0
3 0
4 0
5 0
6 0
7 0
8 0
9 0
10 0
11 0
12 1
13 2
14 3
15 4
16 5
17 6
18 7
19 8
20 9
21 10
22 11
23 12
24 13
25 14
26 15
27 16
28 17
29 18
30 19
31 20
32 21
33 22
34 23
35 24
36 25
37 26
38 27
39 28
40 29
41 30
42 31
43 32
44 33
45 34
46 35
47 36
48 37
49 38
50 39
51 40
52 38
53 36
54 37
55 38
56 39
57 40
58 41
59 42
60 43
61 44
62 45
63 46
64 47
65 48
66 49
67 50
68 51
69 52
70 53
71 54
72 55
73 56
74 57
75 58
76 59
77 60
78 61
79 62
80 63
81 64
82 65
83 66
84 67
85 68
86 69
87 70
88 71
89 72
90 73
91 74
92 75
93 76
94 77
95 78
96 79
97 80
98 81
99 82
};
\addplot [thick, red]
table {%
0 0
1 0
2 0
3 0
4 0
5 0
6 0
7 0
8 0
9 0
10 0
11 0
12 0.999992251396179
13 2.00005507469177
14 3.00019359588623
15 4.000412940979
16 5.00071954727173
17 6.00111675262451
18 7.00161123275757
19 8.00220775604248
20 9.0029125213623
21 10.0037279129028
22 11.0046615600586
23 12.0057172775269
24 13.0069007873535
25 14.0082187652588
26 15.0096778869629
27 16.0112781524658
28 17.0130271911621
29 18.0149345397949
30 19.0170059204102
31 20.0192375183105
32 21.0216388702393
33 22.0242252349854
34 23.026985168457
35 24.0299243927002
36 25.0330619812012
37 26.0363941192627
38 27.0399322509766
39 28.0436706542969
40 29.0476226806641
41 30.051794052124
42 31.0561866760254
43 32.0608062744141
44 33.0656547546387
45 34.0707473754883
46 35.0760803222656
47 36.0816688537598
48 37.087516784668
49 38.093620300293
50 39.099983215332
51 40.1066246032715
52 38.1135520935059
53 36.1205673217773
54 37.1276054382324
55 38.1349029541016
56 39.1424674987793
57 40.1503067016602
58 41.1584167480469
59 42.1668090820312
60 43.1754951477051
61 44.1844635009766
62 45.1937370300293
63 46.2033157348633
64 47.2132034301758
65 48.2233924865723
66 49.2339096069336
67 50.2447357177734
68 51.2558937072754
69 52.2673797607422
70 53.2792091369629
71 54.2913780212402
72 55.303897857666
73 56.3167610168457
74 57.3299942016602
75 58.3435859680176
76 59.3575325012207
77 60.3718566894531
78 61.3865776062012
79 62.401668548584
80 63.417163848877
81 64.4330520629883
82 65.4493103027344
83 66.4660034179688
84 67.4831161499023
85 68.5006103515625
86 69.5185699462891
87 70.5369415283203
88 71.5557403564453
89 72.5750045776367
90 73.5946578979492
91 74.6148223876953
92 75.635383605957
93 76.6564331054688
94 77.6779556274414
95 78.6999282836914
96 79.7223968505859
97 80.7453384399414
98 81.7687301635742
99 82.7926788330078
};
\end{axis}

\end{tikzpicture}
\end{center}
\caption{\secref{sec:FtoKConv}'s experiment - convolutional
  architecture. } \label{fig:FtoKConv}
\end{subfigure}

\begin{subfigure}[b]{1\textwidth}
\begin{center}
\begin{tikzpicture}

\begin{axis}[
xmin=-4.95, xmax=103.95, ticks=none,
ymin=-21.4500240325928, ymax=10.4500011444092,
width=\figurewidth,
height=\figureheight,
tick align=outside,
x grid style={lightgray!92.026143790849673!black},
y grid style={lightgray!92.026143790849673!black}
]
\addplot [line width=1.64pt, blue]
table {%
0 0
1 0
2 0
3 0
4 0
5 0
6 0
7 0
8 0
9 0
10 0
11 0
12 0
13 0
14 0
15 0
16 0
17 0
18 0
19 0
20 0
21 0
22 0
23 0
24 0
25 0
26 0
27 0
28 0
29 0
30 0
31 0
32 0
33 0
34 0
35 0
36 0
37 0
38 1
39 2
40 3
41 4
42 5
43 6
44 7
45 8
46 9
47 8
48 7
49 6
50 5
51 4
52 3
53 2
54 1
55 0
56 -1
57 -2
58 -3
59 -4
60 -5
61 -6
62 -7
63 -8
64 -9
65 -10
66 -11
67 -12
68 -13
69 -14
70 -15
71 -16
72 -17
73 -18
74 -19
75 -20
76 -19
77 -18
78 -17
79 -16
80 -15
81 -14
82 -13
83 -12
84 -11
85 -10
86 -9
87 -8
88 -7
89 -6
90 -5
91 -4
92 -3
93 -2
94 -1
95 0
96 1
97 2
98 3
99 4
};
\addplot [thick, red]
table {%
0 0
1 0
2 0
3 0
4 0
5 0
6 0
7 0
8 0
9 0
10 0
11 0
12 0
13 0
14 0
15 0
16 0
17 0
18 0
19 0
20 0
21 0
22 0
23 0
24 0
25 0
26 0
27 0
28 0
29 0
30 0
31 0
32 0
33 0
34 0
35 0
36 0
37 0
38 0.99999988079071
39 1.99999976158142
40 2.99999952316284
41 3.99999952316284
42 4.99999952316284
43 5.99999952316284
44 6.99999904632568
45 7.99999856948853
46 8.99999904632568
47 7.99999809265137
48 6.99999713897705
49 5.99999713897705
50 4.99999618530273
51 3.99999523162842
52 2.9999942779541
53 1.99999332427979
54 0.999992370605469
55 -7.62939453125e-06
56 -1.00000953674316
57 -2.00000953674316
58 -3.00000953674316
59 -4.0000114440918
60 -5.0000114440918
61 -6.0000171661377
62 -7.00001907348633
63 -8.0000171661377
64 -9.00001907348633
65 -10.0000190734863
66 -11.0000190734863
67 -12.0000228881836
68 -13.000020980835
69 -14.0000171661377
70 -15.0000114440918
71 -16.0000152587891
72 -17.0000228881836
73 -18.0000305175781
74 -19.0000228881836
75 -20.0000228881836
76 -19.0000190734863
77 -18.0000267028809
78 -17.0000305175781
79 -16.0000305175781
80 -15.0000267028809
81 -14.0000267028809
82 -13.0000343322754
83 -12.0000381469727
84 -11.0000381469727
85 -10.0000228881836
86 -9.00003814697266
87 -8.00003814697266
88 -7.00003051757812
89 -6.00003051757812
90 -5.00003051757812
91 -4.00003814697266
92 -3.00003814697266
93 -2.00004577636719
94 -1.00003814697266
95 -3.0517578125e-05
96 0.999946594238281
97 1.99994659423828
98 2.99995422363281
99 3.99994659423828
};
\end{axis}

\end{tikzpicture}
\begin{tikzpicture}

\begin{axis}[
xmin=-4.95, xmax=103.95, ticks=none,
ymin=-21.4500014305115, ymax=10.450030040741,
width=\figurewidth,
height=\figureheight,
tick align=outside,
x grid style={lightgray!92.026143790849673!black},
y grid style={lightgray!92.026143790849673!black}
]
\addplot [line width=1.64pt, blue]
table {%
0 0
1 0
2 0
3 0
4 0
5 0
6 0
7 0
8 0
9 0
10 0
11 0
12 0
13 0
14 0
15 0
16 0
17 0
18 0
19 0
20 0
21 0
22 0
23 0
24 0
25 0
26 0
27 0
28 0
29 0
30 0
31 0
32 0
33 0
34 0
35 0
36 0
37 0
38 1
39 2
40 3
41 4
42 5
43 6
44 7
45 8
46 9
47 8
48 7
49 6
50 5
51 4
52 3
53 2
54 1
55 0
56 -1
57 -2
58 -3
59 -4
60 -5
61 -6
62 -7
63 -8
64 -9
65 -10
66 -11
67 -12
68 -13
69 -14
70 -15
71 -16
72 -17
73 -18
74 -19
75 -20
76 -19
77 -18
78 -17
79 -16
80 -15
81 -14
82 -13
83 -12
84 -11
85 -10
86 -9
87 -8
88 -7
89 -6
90 -5
91 -4
92 -3
93 -2
94 -1
95 0
96 1
97 2
98 3
99 4
};
\addplot [thick, red]
table {%
0 0
1 0
2 0
3 0
4 0
5 0
6 0
7 0
8 0
9 0
10 0
11 0
12 0
13 0
14 0
15 0
16 0
17 0
18 0
19 0
20 0
21 0
22 0
23 0
24 0
25 0
26 0
27 0
28 0
29 0
30 0
31 0
32 0
33 0
34 0
35 0
36 0
37 0
38 0.999988555908203
39 1.99997985363007
40 2.99997425079346
41 3.9999725818634
42 4.99997425079346
43 5.99998092651367
44 6.99999141693115
45 8.00000762939453
46 9.00002861022949
47 8.00008010864258
48 7.00013256072998
49 6.00018453598022
50 5.00023794174194
51 4.00028991699219
52 3.00034046173096
53 2.00039100646973
54 1.00043773651123
55 0.000486373901367188
56 -0.999471664428711
57 -1.99942970275879
58 -2.99939155578613
59 -3.99935722351074
60 -4.99932670593262
61 -5.99930000305176
62 -6.99927711486816
63 -7.99926376342773
64 -8.99924850463867
65 -9.99924850463867
66 -10.9992485046387
67 -11.9992561340332
68 -12.9992752075195
69 -13.9993000030518
70 -14.999324798584
71 -15.9993629455566
72 -16.9994163513184
73 -17.9994621276855
74 -18.9995307922363
75 -19.9996109008789
76 -18.9997177124023
77 -17.999828338623
78 -16.9999504089355
79 -16.000057220459
80 -15.0002059936523
81 -14.0003128051758
82 -13.0004577636719
83 -12.0005874633789
84 -11.0007247924805
85 -10.000862121582
86 -9.00099182128906
87 -8.00115203857422
88 -7.00127410888672
89 -6.00143432617188
90 -5.00156402587891
91 -4.00171661376953
92 -3.00184631347656
93 -2.00200653076172
94 -1.00213623046875
95 -0.0022735595703125
96 0.997596740722656
97 1.99745178222656
98 2.99734497070312
99 3.99716949462891
};
\end{axis}

\end{tikzpicture}
\begin{tikzpicture}

\begin{axis}[
xmin=-4.95, xmax=103.95, ticks=none,
ymin=-21.4500014305115, ymax=10.450030040741,
width=\figurewidth,
height=\figureheight,
tick align=outside,
x grid style={lightgray!92.026143790849673!black},
y grid style={lightgray!92.026143790849673!black}
]
\addplot [line width=1.64pt, blue]
table {%
0 0
1 0
2 0
3 0
4 0
5 0
6 0
7 0
8 0
9 0
10 0
11 0
12 0
13 0
14 0
15 0
16 0
17 0
18 0
19 0
20 0
21 0
22 0
23 0
24 0
25 0
26 0
27 0
28 0
29 0
30 0
31 0
32 0
33 0
34 0
35 0
36 0
37 0
38 1
39 2
40 3
41 4
42 5
43 6
44 7
45 8
46 9
47 8
48 7
49 6
50 5
51 4
52 3
53 2
54 1
55 0
56 -1
57 -2
58 -3
59 -4
60 -5
61 -6
62 -7
63 -8
64 -9
65 -10
66 -11
67 -12
68 -13
69 -14
70 -15
71 -16
72 -17
73 -18
74 -19
75 -20
76 -19
77 -18
78 -17
79 -16
80 -15
81 -14
82 -13
83 -12
84 -11
85 -10
86 -9
87 -8
88 -7
89 -6
90 -5
91 -4
92 -3
93 -2
94 -1
95 0
96 1
97 2
98 3
99 4
};
\addplot [thick, red]
table {%
0 0
1 0
2 0
3 0
4 0
5 0
6 0
7 0
8 0
9 0
10 0
11 0
12 0
13 0
14 0
15 0
16 0
17 0
18 0
19 0
20 0
21 0
22 0
23 0
24 0
25 0
26 0
27 0
28 0
29 0
30 0
31 0
32 0
33 0
34 0
35 0
36 0
37 0
38 0.999988555908203
39 1.99997985363007
40 2.99997425079346
41 3.9999725818634
42 4.99997425079346
43 5.99998092651367
44 6.99999141693115
45 8.00000762939453
46 9.00002861022949
47 8.00008010864258
48 7.00013256072998
49 6.00018453598022
50 5.00023794174194
51 4.00028991699219
52 3.00034046173096
53 2.00039100646973
54 1.00043773651123
55 0.000486373901367188
56 -0.999471664428711
57 -1.99942970275879
58 -2.99939155578613
59 -3.99935722351074
60 -4.99932670593262
61 -5.99930000305176
62 -6.99927711486816
63 -7.99926376342773
64 -8.99924850463867
65 -9.99924850463867
66 -10.9992485046387
67 -11.9992561340332
68 -12.9992752075195
69 -13.9993000030518
70 -14.999324798584
71 -15.9993629455566
72 -16.9994163513184
73 -17.9994621276855
74 -18.9995307922363
75 -19.9996109008789
76 -18.9997177124023
77 -17.999828338623
78 -16.9999504089355
79 -16.000057220459
80 -15.0002059936523
81 -14.0003128051758
82 -13.0004577636719
83 -12.0005874633789
84 -11.0007247924805
85 -10.000862121582
86 -9.00099182128906
87 -8.00115203857422
88 -7.00127410888672
89 -6.00143432617188
90 -5.00156402587891
91 -4.00171661376953
92 -3.00184631347656
93 -2.00200653076172
94 -1.00213623046875
95 -0.0022735595703125
96 0.997596740722656
97 1.99745178222656
98 2.99734497070312
99 3.99716949462891
};
\end{axis}

\end{tikzpicture}
\begin{tikzpicture}

\begin{axis}[
xmin=-4.95, xmax=103.95, ticks=none,
ymin=-4.1, ymax=86.1,
width=\figurewidth,
height=\figureheight,
tick align=outside,
x grid style={lightgray!92.026143790849673!black},
y grid style={lightgray!92.026143790849673!black}
]
\addplot [line width=1.64pt, blue]
table {%
0 0
1 0
2 0
3 0
4 0
5 0
6 0
7 0
8 0
9 0
10 0
11 0
12 1
13 2
14 3
15 4
16 5
17 6
18 7
19 8
20 9
21 10
22 11
23 12
24 13
25 14
26 15
27 16
28 17
29 18
30 19
31 20
32 21
33 22
34 23
35 24
36 25
37 26
38 27
39 28
40 29
41 30
42 31
43 32
44 33
45 34
46 35
47 36
48 37
49 38
50 39
51 40
52 38
53 36
54 37
55 38
56 39
57 40
58 41
59 42
60 43
61 44
62 45
63 46
64 47
65 48
66 49
67 50
68 51
69 52
70 53
71 54
72 55
73 56
74 57
75 58
76 59
77 60
78 61
79 62
80 63
81 64
82 65
83 66
84 67
85 68
86 69
87 70
88 71
89 72
90 73
91 74
92 75
93 76
94 77
95 78
96 79
97 80
98 81
99 82
};
\addplot [thick, red]
table {%
0 0
1 0
2 0
3 0
4 0
5 0
6 0
7 0
8 0
9 0
10 0
11 0
12 0.99999988079071
13 1.99999976158142
14 2.99999952316284
15 3.99999952316284
16 4.99999952316284
17 5.99999952316284
18 6.99999904632568
19 7.99999856948853
20 8.99999904632568
21 9.99999713897705
22 10.9999971389771
23 11.9999971389771
24 12.9999961853027
25 13.9999952316284
26 14.9999942779541
27 15.9999952316284
28 16.9999942779541
29 17.9999923706055
30 18.9999904632568
31 19.9999904632568
32 20.9999923706055
33 21.9999885559082
34 22.9999923706055
35 23.9999885559082
36 24.9999904632568
37 25.9999866485596
38 26.9999866485596
39 27.9999866485596
40 28.9999866485596
41 29.9999847412109
42 30.9999847412109
43 31.9999847412109
44 32.9999847412109
45 33.9999809265137
46 34.9999885559082
47 35.9999771118164
48 36.9999694824219
49 37.9999656677246
50 38.9999618530273
51 39.9999504089355
52 37.9999618530273
53 35.9999580383301
54 36.9999618530273
55 37.9999504089355
56 38.9999580383301
57 39.9999656677246
58 40.9999656677246
59 41.9999694824219
60 42.9999809265137
61 43.9999771118164
62 44.9999847412109
63 45.9999885559082
64 46.9999961853027
65 47.9999885559082
66 48.9999923706055
67 49.9999961853027
68 50.9999847412109
69 51.9999923706055
70 53
71 54.0000038146973
72 55
73 56.0000114440918
74 57
75 58.0000114440918
76 58.9999961853027
77 59.9999961853027
78 60.9999923706055
79 62.0000114440918
80 63.0000038146973
81 64.0000076293945
82 64.9999771118164
83 65.9999771118164
84 66.9999771118164
85 67.9999694824219
86 68.9999694824219
87 69.9999847412109
88 70.9999694824219
89 71.9999847412109
90 72.9999694824219
91 73.9999771118164
92 74.9999771118164
93 75.9999542236328
94 76.9999771118164
95 77.9999847412109
96 79.0000076293945
97 79.9999923706055
98 80.9999923706055
99 81.9999923706055
};
\end{axis}

\end{tikzpicture}
\begin{tikzpicture}

\begin{axis}[
xmin=-4.95, xmax=103.95, ticks=none,
ymin=-4.10242385864258, ymax=86.1509010314941,
width=\figurewidth,
height=\figureheight,
tick align=outside,
x grid style={lightgray!92.026143790849673!black},
y grid style={lightgray!92.026143790849673!black}
]
\addplot [line width=1.64pt, blue]
table {%
0 0
1 0
2 0
3 0
4 0
5 0
6 0
7 0
8 0
9 0
10 0
11 0
12 1
13 2
14 3
15 4
16 5
17 6
18 7
19 8
20 9
21 10
22 11
23 12
24 13
25 14
26 15
27 16
28 17
29 18
30 19
31 20
32 21
33 22
34 23
35 24
36 25
37 26
38 27
39 28
40 29
41 30
42 31
43 32
44 33
45 34
46 35
47 36
48 37
49 38
50 39
51 40
52 38
53 36
54 37
55 38
56 39
57 40
58 41
59 42
60 43
61 44
62 45
63 46
64 47
65 48
66 49
67 50
68 51
69 52
70 53
71 54
72 55
73 56
74 57
75 58
76 59
77 60
78 61
79 62
80 63
81 64
82 65
83 66
84 67
85 68
86 69
87 70
88 71
89 72
90 73
91 74
92 75
93 76
94 77
95 78
96 79
97 80
98 81
99 82
};
\addplot [thick, red]
table {%
0 0
1 0
2 0
3 0
4 0
5 0
6 0
7 0
8 0
9 0
10 0
11 0
12 0.999988555908203
13 1.99997985363007
14 2.99997425079346
15 3.9999725818634
16 4.99997425079346
17 5.99998092651367
18 6.99999141693115
19 8.00000762939453
20 9.00002861022949
21 10.000057220459
22 11.0000915527344
23 12.0001316070557
24 13.0001802444458
25 14.0002365112305
26 15.0003004074097
27 16.0003719329834
28 17.0004501342773
29 18.0005378723145
30 19.0006370544434
31 20.0007419586182
32 21.0008602142334
33 22.0009880065918
34 23.0011291503906
35 24.0012836456299
36 25.0014476776123
37 26.0016250610352
38 27.0018119812012
39 28.002010345459
40 29.0022239685059
41 30.0024490356445
42 31.0026912689209
43 32.0029449462891
44 33.0032081604004
45 34.0034980773926
46 35.0037841796875
47 36.0041007995605
48 37.0044212341309
49 38.004768371582
50 39.0051307678223
51 40.0055084228516
52 38.0059356689453
53 36.0063781738281
54 37.0067939758301
55 38.0072174072266
56 39.0076751708984
57 40.0081481933594
58 41.0086288452148
59 42.0091323852539
60 43.0096549987793
61 44.0102005004883
62 45.0107536315918
63 46.0113410949707
64 47.0119400024414
65 48.0125541687012
66 49.0132026672363
67 50.0138702392578
68 51.0145530700684
69 52.0152587890625
70 53.0159912109375
71 54.0167427062988
72 55.0175170898438
73 56.0183067321777
74 57.0191383361816
75 58.0199813842773
76 59.0208702087402
77 60.0217399597168
78 61.0226631164551
79 62.023609161377
80 63.0246086120605
81 64.0255966186523
82 65.0266342163086
83 66.0276870727539
84 67.0287551879883
85 68.0298614501953
86 69.0309982299805
87 70.0321807861328
88 71.0333404541016
89 72.0345458984375
90 73.0357971191406
91 74.0370864868164
92 75.0383834838867
93 76.0397262573242
94 77.0411071777344
95 78.0425415039062
96 79.0439682006836
97 80.0454635620117
98 81.0469436645508
99 82.0484771728516
};
\end{axis}

\end{tikzpicture}
\begin{tikzpicture}

\begin{axis}[
xmin=-4.95, xmax=103.95, ticks=none,
ymin=-4.10242385864258, ymax=86.1509010314941,
width=\figurewidth,
height=\figureheight,
tick align=outside,
x grid style={lightgray!92.026143790849673!black},
y grid style={lightgray!92.026143790849673!black}
]
\addplot [line width=1.64pt, blue]
table {%
0 0
1 0
2 0
3 0
4 0
5 0
6 0
7 0
8 0
9 0
10 0
11 0
12 1
13 2
14 3
15 4
16 5
17 6
18 7
19 8
20 9
21 10
22 11
23 12
24 13
25 14
26 15
27 16
28 17
29 18
30 19
31 20
32 21
33 22
34 23
35 24
36 25
37 26
38 27
39 28
40 29
41 30
42 31
43 32
44 33
45 34
46 35
47 36
48 37
49 38
50 39
51 40
52 38
53 36
54 37
55 38
56 39
57 40
58 41
59 42
60 43
61 44
62 45
63 46
64 47
65 48
66 49
67 50
68 51
69 52
70 53
71 54
72 55
73 56
74 57
75 58
76 59
77 60
78 61
79 62
80 63
81 64
82 65
83 66
84 67
85 68
86 69
87 70
88 71
89 72
90 73
91 74
92 75
93 76
94 77
95 78
96 79
97 80
98 81
99 82
};
\addplot [thick, red]
table {%
0 0
1 0
2 0
3 0
4 0
5 0
6 0
7 0
8 0
9 0
10 0
11 0
12 0.999988555908203
13 1.99997985363007
14 2.99997425079346
15 3.9999725818634
16 4.99997425079346
17 5.99998092651367
18 6.99999141693115
19 8.00000762939453
20 9.00002861022949
21 10.000057220459
22 11.0000915527344
23 12.0001316070557
24 13.0001802444458
25 14.0002365112305
26 15.0003004074097
27 16.0003719329834
28 17.0004501342773
29 18.0005378723145
30 19.0006370544434
31 20.0007419586182
32 21.0008602142334
33 22.0009880065918
34 23.0011291503906
35 24.0012836456299
36 25.0014476776123
37 26.0016250610352
38 27.0018119812012
39 28.002010345459
40 29.0022239685059
41 30.0024490356445
42 31.0026912689209
43 32.0029449462891
44 33.0032081604004
45 34.0034980773926
46 35.0037841796875
47 36.0041007995605
48 37.0044212341309
49 38.004768371582
50 39.0051307678223
51 40.0055084228516
52 38.0059356689453
53 36.0063781738281
54 37.0067939758301
55 38.0072174072266
56 39.0076751708984
57 40.0081481933594
58 41.0086288452148
59 42.0091323852539
60 43.0096549987793
61 44.0102005004883
62 45.0107536315918
63 46.0113410949707
64 47.0119400024414
65 48.0125541687012
66 49.0132026672363
67 50.0138702392578
68 51.0145530700684
69 52.0152587890625
70 53.0159912109375
71 54.0167427062988
72 55.0175170898438
73 56.0183067321777
74 57.0191383361816
75 58.0199813842773
76 59.0208702087402
77 60.0217399597168
78 61.0226631164551
79 62.023609161377
80 63.0246086120605
81 64.0255966186523
82 65.0266342163086
83 66.0276870727539
84 67.0287551879883
85 68.0298614501953
86 69.0309982299805
87 70.0321807861328
88 71.0333404541016
89 72.0345458984375
90 73.0357971191406
91 74.0370864868164
92 75.0383834838867
93 76.0397262573242
94 77.0411071777344
95 78.0425415039062
96 79.0439682006836
97 80.0454635620117
98 81.0469436645508
99 82.0484771728516
};
\end{axis}

\end{tikzpicture}
\end{center}
\caption{\secref{sec:FtoKConvCond}'s experiment - convolutional
  architecture with conditioning.} \label{fig:FtoKConvCond}
\end{subfigure}

\begin{subfigure}[b]{1\textwidth}
\begin{center}
\begin{tikzpicture}

\begin{axis}[
xmin=-4.95, xmax=103.95, ticks=none,
ymin=-21.45, ymax=10.45,
width=\figurewidth,
height=\figureheight,
tick align=outside,
x grid style={lightgray!92.026143790849673!black},
y grid style={lightgray!92.026143790849673!black}
]
\addplot [line width=1.64pt, blue]
table {%
0 0
1 0
2 0
3 0
4 0
5 0
6 0
7 0
8 0
9 0
10 0
11 0
12 0
13 0
14 0
15 0
16 0
17 0
18 0
19 0
20 0
21 0
22 0
23 0
24 0
25 0
26 0
27 0
28 0
29 0
30 0
31 0
32 0
33 0
34 0
35 0
36 0
37 0
38 0
39 1
40 2
41 3
42 4
43 5
44 6
45 7
46 8
47 9
48 8
49 7
50 6
51 5
52 4
53 3
54 2
55 1
56 0
57 -1
58 -2
59 -3
60 -4
61 -5
62 -6
63 -7
64 -8
65 -9
66 -10
67 -11
68 -12
69 -13
70 -14
71 -15
72 -16
73 -17
74 -18
75 -19
76 -20
77 -19
78 -18
79 -17
80 -16
81 -15
82 -14
83 -13
84 -12
85 -11
86 -10
87 -9
88 -8
89 -7
90 -6
91 -5
92 -4
93 -3
94 -2
95 -1
96 0
97 1
98 2
99 3
};
\addplot [thick, red]
table {%
0 -0.895403742790222
1 -1.22367548942566
2 -0.995910048484802
3 -2.7716965675354
4 0.374166965484619
5 -1.97640955448151
6 -0.592536747455597
7 1.2096688747406
8 0.532388091087341
9 1.7824102640152
10 3.17051577568054
11 2.30676555633545
12 0.175725549459457
13 3.7025203704834
14 1.04728174209595
15 3.41768574714661
16 1.42631328105927
17 2.18472170829773
18 4.58659172058105
19 -0.1012904047966
20 3.83838272094727
21 3.64327907562256
22 1.1259263753891
23 3.37723469734192
24 1.81837344169617
25 2.58925175666809
26 2.58598256111145
27 4.89510297775269
28 3.30877017974854
29 0.586589157581329
30 2.936199426651
31 3.93065786361694
32 1.68755125999451
33 1.40529608726501
34 2.01077628135681
35 3.29160666465759
36 3.52808523178101
37 3.20651054382324
38 2.54087591171265
39 1.94736862182617
40 1.29008364677429
41 1.90241742134094
42 0.0911586433649063
43 -1.42338502407074
44 -0.0676268637180328
45 -1.78553342819214
46 -3.11880683898926
47 1.37473380565643
48 -1.34222805500031
49 -3.25036597251892
50 -1.36063647270203
51 -3.48775219917297
52 -4.03801107406616
53 -2.26598024368286
54 -4.07351922988892
55 -4.88585376739502
56 -5.44001913070679
57 -7.54631662368774
58 -6.43288040161133
59 -5.7834324836731
60 -6.15128421783447
61 -6.79979133605957
62 -7.73288583755493
63 -5.66089010238647
64 -8.18808174133301
65 -7.14514255523682
66 -6.21271944046021
67 -8.36264228820801
68 -7.25728845596313
69 -6.8155689239502
70 -10.3991432189941
71 -9.69802284240723
72 -7.79610109329224
73 -10.3732261657715
74 -7.68062782287598
75 -9.73330593109131
76 -11.0236158370972
77 -7.73386573791504
78 -10.9291000366211
79 -8.83367824554443
80 -7.61941003799438
81 -10.6383047103882
82 -6.91062116622925
83 -7.19870185852051
84 -7.47081661224365
85 -9.76853179931641
86 -8.48686790466309
87 -9.84215545654297
88 -8.47906684875488
89 -9.15750980377197
90 -7.4945387840271
91 -8.46850872039795
92 -7.88849878311157
93 -7.1205153465271
94 -5.67998504638672
95 -7.92092800140381
96 -5.51077508926392
97 -9.13301467895508
98 -7.18940591812134
99 -5.11364364624023
};
\end{axis}

\end{tikzpicture}
\begin{tikzpicture}

\begin{axis}[
xmin=-4.95, xmax=103.95, ticks=none,
ymin=-21.45, ymax=10.45,
width=\figurewidth,
height=\figureheight,
tick align=outside,
x grid style={lightgray!92.026143790849673!black},
y grid style={lightgray!92.026143790849673!black}
]
\addplot [line width=1.64pt, blue]
table {%
0 0
1 0
2 0
3 0
4 0
5 0
6 0
7 0
8 0
9 0
10 0
11 0
12 0
13 0
14 0
15 0
16 0
17 0
18 0
19 0
20 0
21 0
22 0
23 0
24 0
25 0
26 0
27 0
28 0
29 0
30 0
31 0
32 0
33 0
34 0
35 0
36 0
37 0
38 0
39 1
40 2
41 3
42 4
43 5
44 6
45 7
46 8
47 9
48 8
49 7
50 6
51 5
52 4
53 3
54 2
55 1
56 0
57 -1
58 -2
59 -3
60 -4
61 -5
62 -6
63 -7
64 -8
65 -9
66 -10
67 -11
68 -12
69 -13
70 -14
71 -15
72 -16
73 -17
74 -18
75 -19
76 -20
77 -19
78 -18
79 -17
80 -16
81 -15
82 -14
83 -13
84 -12
85 -11
86 -10
87 -9
88 -8
89 -7
90 -6
91 -5
92 -4
93 -3
94 -2
95 -1
96 0
97 1
98 2
99 3
};
\addplot [thick, red]
table {%
0 0.0333801880478859
1 -0.106046333909035
2 -0.108085364103317
3 0.0911639705300331
4 0.0486109256744385
5 -0.287403911352158
6 -0.207376673817635
7 -0.0346805974841118
8 -0.169859692454338
9 0.0341572910547256
10 -0.184240251779556
11 -0.306138008832932
12 0.108227878808975
13 0.307169228792191
14 0.0659080669283867
15 0.210263431072235
16 0.344531238079071
17 0.0719461515545845
18 0.364930093288422
19 0.38566979765892
20 0.577516555786133
21 0.674111366271973
22 0.810160577297211
23 0.971998453140259
24 1.0531051158905
25 0.95813262462616
26 1.091423869133
27 1.24409008026123
28 1.24957168102264
29 1.4067051410675
30 1.31738138198853
31 1.20708525180817
32 1.38729870319366
33 1.0615131855011
34 1.31254148483276
35 0.867791354656219
36 1.01795315742493
37 0.712983548641205
38 0.83282482624054
39 0.658086717128754
40 0.503829658031464
41 0.429709613323212
42 0.652105748653412
43 0.0811723172664642
44 -0.61363422870636
45 -0.191954672336578
46 -0.334244161844254
47 -0.716805934906006
48 -0.911810100078583
49 -1.04762375354767
50 -1.48804879188538
51 -1.68867838382721
52 -2.01615357398987
53 -2.36088442802429
54 -2.89044523239136
55 -3.63182234764099
56 -3.30388832092285
57 -3.78013849258423
58 -4.5233850479126
59 -4.95771408081055
60 -5.1730055809021
61 -5.69522094726562
62 -6.25450992584229
63 -6.51033353805542
64 -6.88766860961914
65 -7.28895092010498
66 -7.63965368270874
67 -8.49337482452393
68 -8.82928943634033
69 -9.35289859771729
70 -9.84965038299561
71 -10.2455072402954
72 -10.5276327133179
73 -11.2299509048462
74 -11.3830251693726
75 -11.9648103713989
76 -12.411922454834
77 -12.7016010284424
78 -13.134033203125
79 -12.936222076416
80 -13.191987991333
81 -12.8057746887207
82 -13.0029363632202
83 -12.8065366744995
84 -12.9644317626953
85 -12.5615301132202
86 -12.4001321792603
87 -11.7282018661499
88 -11.8054304122925
89 -10.8510999679565
90 -10.73410987854
91 -10.1154890060425
92 -9.98893928527832
93 -9.29999256134033
94 -8.63604068756104
95 -7.73266792297363
96 -7.51014804840088
97 -7.04870653152466
98 -6.47504901885986
99 -5.91030693054199
};
\end{axis}

\end{tikzpicture}
\begin{tikzpicture}

\begin{axis}[
xmin=-4.95, xmax=103.95, ticks=none,
ymin=-21.45, ymax=10.45,
width=\figurewidth,
height=\figureheight,
tick align=outside,
x grid style={lightgray!92.026143790849673!black},
y grid style={lightgray!92.026143790849673!black}
]
\addplot [line width=1.64pt, blue]
table {%
0 0
1 0
2 0
3 0
4 0
5 0
6 0
7 0
8 0
9 0
10 0
11 0
12 0
13 0
14 0
15 0
16 0
17 0
18 0
19 0
20 0
21 0
22 0
23 0
24 0
25 0
26 0
27 0
28 0
29 0
30 0
31 0
32 0
33 0
34 0
35 0
36 0
37 0
38 0
39 1
40 2
41 3
42 4
43 5
44 6
45 7
46 8
47 9
48 8
49 7
50 6
51 5
52 4
53 3
54 2
55 1
56 0
57 -1
58 -2
59 -3
60 -4
61 -5
62 -6
63 -7
64 -8
65 -9
66 -10
67 -11
68 -12
69 -13
70 -14
71 -15
72 -16
73 -17
74 -18
75 -19
76 -20
77 -19
78 -18
79 -17
80 -16
81 -15
82 -14
83 -13
84 -12
85 -11
86 -10
87 -9
88 -8
89 -7
90 -6
91 -5
92 -4
93 -3
94 -2
95 -1
96 0
97 1
98 2
99 3
};
\addplot [thick, red]
table {%
0 0.194749966263771
1 0.200206100940704
2 0.405748099088669
3 0.158102497458458
4 -0.0840198546648026
5 0.597282111644745
6 0.432281136512756
7 0.266960740089417
8 0.386163473129272
9 0.510665416717529
10 0.381205558776855
11 0.740670382976532
12 0.276468396186829
13 0.710728764533997
14 0.56970351934433
15 0.286250680685043
16 0.396598488092422
17 0.511542022228241
18 0.59746128320694
19 0.341365128755569
20 0.165563255548477
21 0.414099484682083
22 0.113098070025444
23 0.570797979831696
24 0.346702128648758
25 0.568476319313049
26 0.482431262731552
27 0.456416606903076
28 0.459087908267975
29 0.60431843996048
30 0.418678343296051
31 0.488544881343842
32 0.522432506084442
33 0.613852858543396
34 0.446736812591553
35 0.868480026721954
36 0.999678552150726
37 1.35945463180542
38 1.66694331169128
39 1.83940148353577
40 2.01377201080322
41 2.63549494743347
42 2.78470993041992
43 3.10657525062561
44 3.83159589767456
45 3.29968881607056
46 3.72159218788147
47 3.70421195030212
48 3.72751617431641
49 3.56267642974854
50 2.78807878494263
51 2.92758655548096
52 1.90526151657104
53 1.10344135761261
54 0.11705008149147
55 -0.730757892131805
56 -1.68853962421417
57 -2.78294706344604
58 -3.52750754356384
59 -4.95605659484863
60 -5.5374903678894
61 -6.84408617019653
62 -7.71405267715454
63 -8.3218412399292
64 -9.40552520751953
65 -10.7349967956543
66 -11.5681896209717
67 -12.3188142776489
68 -13.0950317382812
69 -14.1139039993286
70 -14.9775886535645
71 -15.5091304779053
72 -15.5949058532715
73 -16.0115547180176
74 -16.0255069732666
75 -16.085018157959
76 -15.9620590209961
77 -15.5332012176514
78 -15.2893915176392
79 -15.2312440872192
80 -14.3429908752441
81 -13.2511873245239
82 -12.9595804214478
83 -12.5459547042847
84 -12.2996006011963
85 -10.9950389862061
86 -10.0456256866455
87 -9.95797061920166
88 -9.65723609924316
89 -8.36069202423096
90 -7.40903759002686
91 -6.6023268699646
92 -6.11694192886353
93 -5.37817430496216
94 -4.67748355865479
95 -4.03761577606201
96 -3.74309253692627
97 -3.01009821891785
98 -2.4201831817627
99 -1.1100709438324
};
\end{axis}

\end{tikzpicture}
\begin{tikzpicture}

\begin{axis}[
xmin=-4.95, xmax=103.95, ticks=none,
ymin=-4.76336696147919, ymax=85.0839698553085,
width=\figurewidth,
height=\figureheight,
tick align=outside,
x grid style={lightgray!92.026143790849673!black},
y grid style={lightgray!92.026143790849673!black}
]
\addplot [line width=1.64pt, blue]
table {%
0 0
1 0
2 0
3 0
4 0
5 0
6 0
7 0
8 0
9 0
10 0
11 0
12 0
13 1
14 2
15 3
16 4
17 5
18 6
19 7
20 8
21 9
22 10
23 11
24 12
25 13
26 14
27 15
28 16
29 17
30 18
31 19
32 20
33 21
34 22
35 23
36 24
37 25
38 26
39 27
40 28
41 29
42 30
43 31
44 32
45 33
46 34
47 35
48 36
49 37
50 38
51 39
52 40
53 38
54 36
55 37
56 38
57 39
58 40
59 41
60 42
61 43
62 44
63 45
64 46
65 47
66 48
67 49
68 50
69 51
70 52
71 53
72 54
73 55
74 56
75 57
76 58
77 59
78 60
79 61
80 62
81 63
82 64
83 65
84 66
85 67
86 68
87 69
88 70
89 71
90 72
91 73
92 74
93 75
94 76
95 77
96 78
97 79
98 80
99 81
};
\addplot [thick, red]
table {%
0 -0.332733005285263
1 -0.679397106170654
2 1.12745547294617
3 0.353993088006973
4 1.00569844245911
5 1.36709582805634
6 1.12408649921417
7 1.59927999973297
8 1.79631793498993
9 0.993868947029114
10 3.24978303909302
11 2.83456897735596
12 4.06464385986328
13 4.84128427505493
14 4.90220022201538
15 5.43416404724121
16 5.53436136245728
17 6.63312721252441
18 6.85744190216064
19 7.80787420272827
20 9.66762351989746
21 10.0295734405518
22 10.5832223892212
23 11.2151155471802
24 11.6974000930786
25 12.3560791015625
26 13.6167211532593
27 14.0031452178955
28 14.868839263916
29 16.4642314910889
30 17.0676822662354
31 17.7539558410645
32 18.8200607299805
33 19.7920875549316
34 20.6288967132568
35 21.117094039917
36 21.987512588501
37 23.4710807800293
38 24.8132419586182
39 25.2071781158447
40 25.9330234527588
41 26.2205791473389
42 27.8027572631836
43 29.0667018890381
44 29.6543312072754
45 30.4693946838379
46 31.250373840332
47 32.2413330078125
48 33.0597801208496
49 33.9929351806641
50 35.0117950439453
51 36.6367530822754
52 37.0387001037598
53 37.7130012512207
54 37.9556312561035
55 40.0899276733398
56 40.7315444946289
57 41.6150283813477
58 42.4803886413574
59 42.9920997619629
60 44.0776519775391
61 45.5484619140625
62 45.5960922241211
63 46.6819534301758
64 47.5450325012207
65 48.6810722351074
66 49.486011505127
67 50.763557434082
68 52.7692756652832
69 52.0955581665039
70 54.4095039367676
71 54.4398422241211
72 56.2058067321777
73 55.4142837524414
74 57.1804656982422
75 58.3132209777832
76 59.6377563476562
77 60.2879257202148
78 61.1888465881348
79 61.8098983764648
80 62.888053894043
81 63.3913993835449
82 64.9843063354492
83 65.0934829711914
84 66.3161544799805
85 68.2463531494141
86 67.7268447875977
87 69.057991027832
88 69.5005264282227
89 71.5194702148438
90 71.8283920288086
91 72.3289947509766
92 73.6308059692383
93 73.7366409301758
94 76.0916595458984
95 75.9114303588867
96 76.2895965576172
97 77.1172027587891
98 78.1685943603516
99 79.8869705200195
};
\end{axis}

\end{tikzpicture}
\begin{tikzpicture}

\begin{axis}[
xmin=-4.95, xmax=103.95, ticks=none,
ymin=-4.31871961206198, ymax=85.2356223031879,
width=\figurewidth,
height=\figureheight,
tick align=outside,
x grid style={lightgray!92.026143790849673!black},
y grid style={lightgray!92.026143790849673!black}
]
\addplot [line width=1.64pt, blue]
table {%
0 0
1 0
2 0
3 0
4 0
5 0
6 0
7 0
8 0
9 0
10 0
11 0
12 0
13 1
14 2
15 3
16 4
17 5
18 6
19 7
20 8
21 9
22 10
23 11
24 12
25 13
26 14
27 15
28 16
29 17
30 18
31 19
32 20
33 21
34 22
35 23
36 24
37 25
38 26
39 27
40 28
41 29
42 30
43 31
44 32
45 33
46 34
47 35
48 36
49 37
50 38
51 39
52 40
53 38
54 36
55 37
56 38
57 39
58 40
59 41
60 42
61 43
62 44
63 45
64 46
65 47
66 48
67 49
68 50
69 51
70 52
71 53
72 54
73 55
74 56
75 57
76 58
77 59
78 60
79 61
80 62
81 63
82 64
83 65
84 66
85 67
86 68
87 69
88 70
89 71
90 72
91 73
92 74
93 75
94 76
95 77
96 78
97 79
98 80
99 81
};
\addplot [thick, red]
table {%
0 0.251457840204239
1 -0.0909299850463867
2 -0.248067706823349
3 0.398036450147629
4 0.368852198123932
5 0.459675759077072
6 0.844348073005676
7 1.52651965618134
8 1.67094206809998
9 1.92557752132416
10 2.82246017456055
11 3.26306009292603
12 3.88077855110168
13 4.39021968841553
14 5.03241729736328
15 5.79660558700562
16 6.51027154922485
17 7.00199556350708
18 7.49726390838623
19 8.24932670593262
20 8.85189628601074
21 9.91836452484131
22 10.6119070053101
23 11.2017345428467
24 12.0909986495972
25 12.7594003677368
26 13.8357572555542
27 14.428786277771
28 15.4826354980469
29 16.0224895477295
30 16.9144554138184
31 17.7460327148438
32 18.7697620391846
33 19.6624126434326
34 20.5305690765381
35 21.2642612457275
36 22.2510070800781
37 23.0456256866455
38 23.8736820220947
39 24.8629589080811
40 25.7428913116455
41 26.6879711151123
42 27.494556427002
43 28.6087493896484
44 29.5838375091553
45 30.3365936279297
46 31.221061706543
47 32.1996231079102
48 33.1190567016602
49 34.0492630004883
50 35.0533294677734
51 35.9786262512207
52 36.5745162963867
53 37.530029296875
54 38.4168663024902
55 39.2452697753906
56 40.2124938964844
57 41.1113090515137
58 42.1528205871582
59 42.9317474365234
60 43.9811096191406
61 44.7959327697754
62 45.6234016418457
63 46.6481399536133
64 47.3850898742676
65 48.3278350830078
66 49.3630294799805
67 50.0184364318848
68 51.140625
69 51.8545074462891
70 52.8212966918945
71 53.7955169677734
72 54.7710723876953
73 55.5005187988281
74 56.3895378112793
75 57.4186477661133
76 58.2782211303711
77 59.0409927368164
78 60.0478057861328
79 60.9094505310059
80 61.8235359191895
81 62.8571014404297
82 63.9335098266602
83 64.9245452880859
84 65.8269958496094
85 66.8975982666016
86 67.90234375
87 68.8482437133789
88 69.8836975097656
89 70.8637008666992
90 71.8819580078125
91 72.9538879394531
92 73.9208297729492
93 74.8040313720703
94 76.0075454711914
95 77.0924606323242
96 78.0594482421875
97 79.0617752075195
98 79.9980773925781
99 81.1649703979492
};
\end{axis}

\end{tikzpicture}
\begin{tikzpicture}

\begin{axis}[
xmin=-4.95, xmax=103.95, ticks=none,
ymin=-4.78822700679302, ymax=87.199374422431,
width=\figurewidth,
height=\figureheight,
tick align=outside,
x grid style={lightgray!92.026143790849673!black},
y grid style={lightgray!92.026143790849673!black}
]
\addplot [line width=1.64pt, blue]
table {%
0 0
1 0
2 0
3 0
4 0
5 0
6 0
7 0
8 0
9 0
10 0
11 0
12 0
13 1
14 2
15 3
16 4
17 5
18 6
19 7
20 8
21 9
22 10
23 11
24 12
25 13
26 14
27 15
28 16
29 17
30 18
31 19
32 20
33 21
34 22
35 23
36 24
37 25
38 26
39 27
40 28
41 29
42 30
43 31
44 32
45 33
46 34
47 35
48 36
49 37
50 38
51 39
52 40
53 38
54 36
55 37
56 38
57 39
58 40
59 41
60 42
61 43
62 44
63 45
64 46
65 47
66 48
67 49
68 50
69 51
70 52
71 53
72 54
73 55
74 56
75 57
76 58
77 59
78 60
79 61
80 62
81 63
82 64
83 65
84 66
85 67
86 68
87 69
88 70
89 71
90 72
91 73
92 74
93 75
94 76
95 77
96 78
97 79
98 80
99 81
};
\addplot [thick, red]
table {%
0 0.248849347233772
1 -0.127667531371117
2 -0.0352695845067501
3 -0.0296861100941896
4 -0.217177972197533
5 -0.373887687921524
6 -0.582915723323822
7 -0.606972396373749
8 -0.582009553909302
9 -0.299872100353241
10 -0.0944805294275284
11 0.294728219509125
12 0.746397256851196
13 1.49758839607239
14 2.1867368221283
15 2.84450745582581
16 3.69681739807129
17 4.76267433166504
18 5.95887994766235
19 6.97324466705322
20 8.02901363372803
21 9.32173728942871
22 10.2493658065796
23 11.3991756439209
24 12.5309734344482
25 13.8219432830811
26 14.8892641067505
27 15.9973201751709
28 16.9851608276367
29 18.08984375
30 19.0674800872803
31 19.8960132598877
32 20.826229095459
33 21.7114963531494
34 22.6522541046143
35 23.537712097168
36 24.3591461181641
37 25.2938213348389
38 26.0907783508301
39 26.9907684326172
40 28.0359802246094
41 28.9910163879395
42 29.7622261047363
43 30.4501800537109
44 31.3591213226318
45 32.110481262207
46 33.0809135437012
47 33.8928184509277
48 34.8238983154297
49 35.6946449279785
50 36.3256149291992
51 37.3425521850586
52 37.88818359375
53 38.621940612793
54 39.4770545959473
55 40.2383575439453
56 40.9060249328613
57 41.7398109436035
58 42.654224395752
59 43.4967765808105
60 44.2259674072266
61 45.4504356384277
62 45.9030609130859
63 46.9705810546875
64 47.7577247619629
65 48.559009552002
66 49.5593452453613
67 50.3001480102539
68 51.2706642150879
69 52.1636924743652
70 53.0974540710449
71 53.8909873962402
72 54.8242988586426
73 55.8385276794434
74 56.7694664001465
75 57.7930221557617
76 58.6734008789062
77 59.5710983276367
78 60.6713027954102
79 61.6913986206055
80 62.6907272338867
81 63.8137474060059
82 64.8122024536133
83 65.8570175170898
84 66.968864440918
85 67.9185562133789
86 68.7853317260742
87 69.8566207885742
88 70.8310165405273
89 71.9321823120117
90 73.1243133544922
91 73.9716796875
92 75.1252822875977
93 76.2546844482422
94 77.3117828369141
95 78.195426940918
96 79.5832977294922
97 80.5721588134766
98 81.9652862548828
99 83.0181198120117
};
\end{axis}

\end{tikzpicture}
\end{center}
\caption{\ref{sec:FAutoEncoder}'s experiment - vanilla auto encoder.} \label{fig:FAutoEncoder}
\end{subfigure}
\caption{Examples for decoded outputs of \secref{sec.Piece-wise Linear
  AutoEncoders}'s experiments, learning
  to encode PWL curves. In blue are the
  original curves. In red are the decoded curves. The plot shows the
  outputs for two curves, after 500, 10000, and 50000 iterations, from
  left to right.} 
\label{fig:sec.Piece-wise Linear AutoEncoders}
\end{figure}
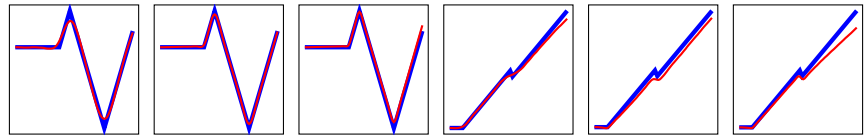
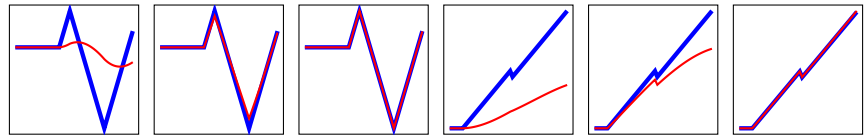
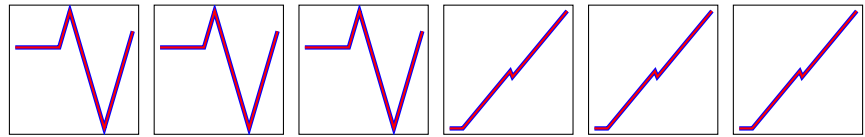
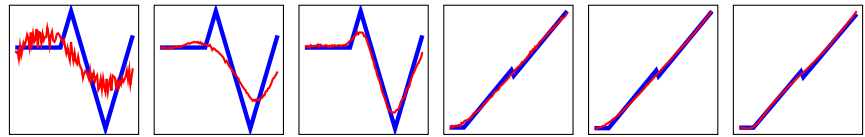

\section{Flat Activations}\label{sec.Non Continuous
  non-linearities}

We now examine a different aspect of gradient-based learning which
poses difficulties for optimization: namely, flatness of the loss
surface due to saturation of the activation functions, leading to
vanishing gradients and a slow-down of the training process. This
problem is amplified in deeper architectures, since it is likely that
the backpropagated message to lower layers in the architecture would
vanish due to a saturated activation somewhere along the way. This is
a major problem when using sigmoids as a gating mechanisms in
Recurrent Neural Networks such as LSTMS and GRUs
\cite{graves2005framewise, chung2014empirical}.

While non-local search-based optimization for large scale problems
seems to be beyond reach, variants on the gradient
update, whether by adding momentum, higher order methods, or
normalized gradients, are quite successful, leading to consideration of
update schemes deviating from ``vanilla'' gradient updates. 

In this section, we consider a family of activation functions which
amplify the ``vanishing gradient due to saturated activation''
problem; they are piecewise flat. Using such activations in a neural
network architecture will result in a gradient equal to 0, which will
be completely useless. We consider different ways to implement,
approximate or learn such activations, such that the error will
effectively propagate through them. Using a different variant of a
local search-based update, based on \cite{kalai2009isotron,kakade2011efficient} 
, we
arrive at an efficient solution. Convergence guarantees exist for a one-layer
architecture. We leave further study of deeper networks to future
work. 

\subsection{Experimental Setup}
Consider the following optimization setup. 
The sample space $X\subset \reals^d$ is symmetrically
 distributed. The 
 target function $y:\reals^d\rightarrow\reals$ is of the
  form $y(\bx) = u(\bv^{*\top} \bx+b^*)$, 
where $\bv^*\in\reals^d$, $b^*\in\reals$, and $u:\reals\rightarrow\reals$ is a
monotonically non-decreasing function. 
The objective of the optimization problem is given by:
\[
\min_{\bw} \E_x \left[\ell\left( u(N_{\bw}(\bx)), y(\bx)\right)\right]
\]
where $N_{\bw}$ is some neural network parametrized by $\bw$, and
$\ell$ is some loss function (for example, the squared or absolute
difference).

For the experiments, we use $u$ of the form:
\[
  u(r)=z_0 + \sum_{i\in[\numsteps]}\mathbf{1}_{[r>z_i]}\cdot \left(
    z_i-z_{i-1}\right),
\]
where $z_0 < z_1 < \ldots < z_{\numsteps}$ are known. In words, 
given $r$, the function rounds down to the nearest $z_i$. We
also experiment with normally distributed $X$. Our theoretical
analysis is not restricted to $u$ of this specific form, nor to normal
$X$. All figures are found in \figref{fig:steps}.

Of course, applying gradient-based methods to solve this problem
directly, is doomed to fail as the derivative of $u$ is identically
0. Is there anything which can be done instead?

\subsection{Non-Flat Approximation Experiment}\label{sec:non_flat}
We start off by trying to approximate $u$ using a non flat function
$\tilde{u}$ defined by
\[ 
\tilde{u}(r) = z_0 + \sum_{i\in[\numsteps]}
  (z_i-z_{i-1})\cdot\sigma(c\cdot(r-z_i)),
\] 
where $c$ is some constant, and $\sigma$ is the sigmoid
function $\sigma(z)=(1+\exp(-z))^{-1}$.
Intuitively, we approximate the ``steps'' in $u$ using a sum
of sigmoids, each of amplitude corresponding to the step's height, and
centered at the step's position. This is similar to the motivation for
using sigmoids as activation functions and as gates in LSTM cells ---
a non-flat approximation of the step function. Below is an example for $u$, and 
its
approximation $\tilde{u}$.

\setlength\figureheight{4cm}
\setlength\figurewidth{7cm}
\begin{center}
\begin{tikzpicture}
\input{\figsdir/smooth_approx_mytikz.tex}
\end{tikzpicture}
\end{center}

 The objective is the
expected squared loss, propagated through $\tilde{u}$, namely
\[ 
  \min_{\bv,b} \E_\bx \left[\left( \tilde{u}(\bv^\top \bx+b) -
      y(\bx)\right)^2\right].
\] 
Although the objective is not completely flat,
and is continuous, it suffers from the flatness and non continuity deficiencies of the
original $u$, and training using this objective is much slower, and
sometimes completely failing. In particular, sensitivity to the initialization of
bias term is observed, where the wrong initialization can cause the starting
point to be in a very wide flat region of $u$, and hence a very flat
region of $\tilde{u}$. 

\subsection{End-to-End Experiment}\label{sec:e2e}
Next, we attempt to solve the problem using improper learning, with
the objective now being:
\[
\min_{\bw} \E_\bx \left[\left( N_\bw(x) - y(\bx)\right)^2\right]
\]
where $N_\bw$ is a network parametrized by its weight vector $\bw$. We use
a simple architecture of four fully connected layers, the first three
with ReLU activations and 100 output channels, and the last, with only
one output channel and no activation function. 

As covered in \secref{sec.Piece-wise Linear
  AutoEncoders}, difficulty arises when regressing to non smooth
functions. In this case, with $u$ not even being continuous, the
inaccuracies in capturing the non continuity points are brought to the
forefront.  Moreover, this solution has its extra price in terms of
sample complexity, training time, and test time, due to the use of a
much larger than necessary network. An advantage is of course the
minimal prior knowledge about $u$ which is required. While this
approach manages to find a reasonable solution, it is far from being
perfect. 

\subsection{Multi-Class Experiment}\label{sec:mc}
In this experiment, we approach the problem as a general multi-class
classification problem, with each value of the image of $u$ is treated
as a separate class. We use a similar architecture to that of the end-to-end
experiment, with one less hidden layer, and with the final layer
outputting $\numsteps$ outputs, each corresponding to one of the steps defined
by the $z_i$s.
A problem here is the inaccuracies at the boundaries between
classes, due to the lack of structure imposed over the predictor. 
The fact that the linear connection between $x$ and the input
to $u$ is not imposed through the architecture results in
``blurry'' boundaries. In addition, the fact that we rely on an
``improper'' approach, in the sense that we ignore the ordering
imposed by $u$, results in higher sample complexity. 

\subsection{The ``Forward-Only'' Update Rule}\label{sec:fo}
Let us go back to a direct formulation of the problem, in the form of the 
objective function 
\[
  \min_{\bw} F(\bw) = \E_x \left[(u(\bw^\top \bx)-y(\bx))^2\right]
\] 
where $y(\bx) = u(\bv^{*\top}\bx)$. The gradient update rule in this case is
$\bw^{(t+1)} = \bw^{(t)} - \eta \nabla F(\bw^{(t)})$, where for our
objective we have
\[
\nabla F(\bw) = \E_\bx \left[ (u(\bw^\top
  \bx) - y(\bx) )\cdot u'(\bw^\top \bx) \cdot \bx\right]
\]
Since $u'$ is zero a.e., the gradient update is meaningless. 
\cite{kalai2009isotron,kakade2011efficient} 
proposed to replace the gradient with the following:
\begin{equation}\label{glmtron_update}
\tilde{\nabla} F(\bw) = \E_\bx \left[ (u(\bw^\top
  \bx) - y(\bx) )\cdot \bx\right]
\end{equation}
In terms of the backpropagation algorithm, this kind of update can be
interpreted as replacing the backpropagation message for the
activation function $u$ with an identity message. For notation
simplicity, we omitted the bias terms $b,b^*$, but the same
Forward-only concept is applied to them too.

This method empirically achieves the best results, both in terms of
final accuracy, training time, and test time cost.  As mentioned
before, the method is due to \cite{kalai2009isotron,
  kakade2011efficient}, where it is proven to converge to an $\epsilon$-optimal 
  solution in $O(L^2/\epsilon^2)$, under the additional assumptions that the 
  function $u$ is $L$-Lipschitz, and that $\bw$ is constrained to have bounded 
  norm. For completeness, we provide a short proof in  
  \appref{proof:glmtron}.

\setlength\figureheight{4cm}
\setlength\figurewidth{6cm}
\begin{figure}
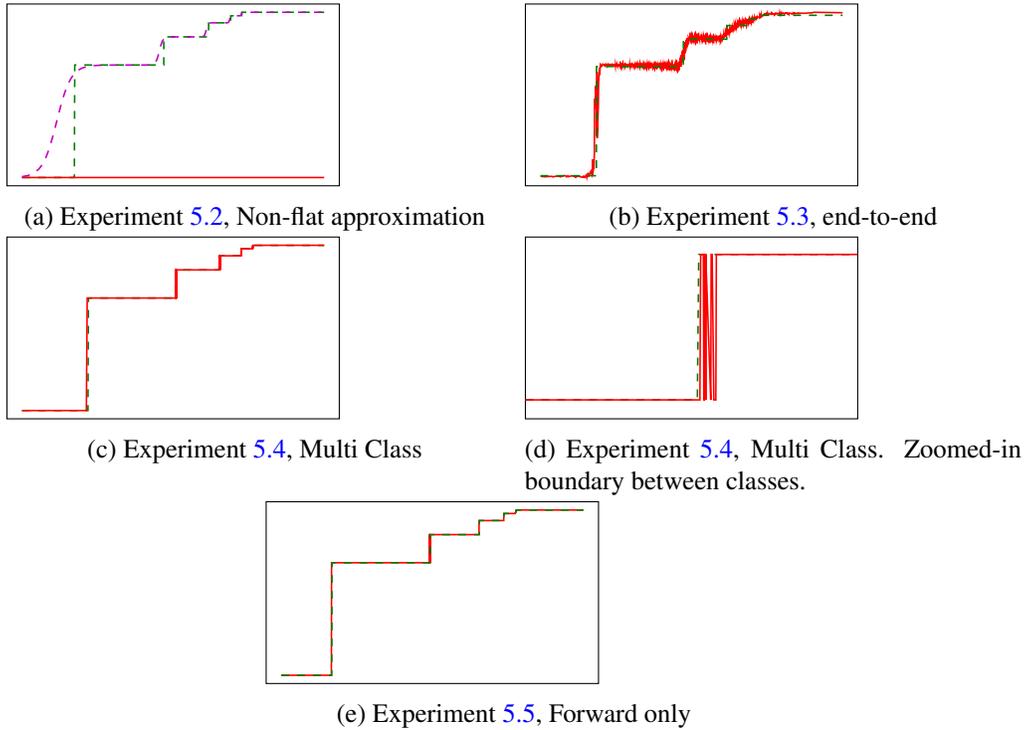

\begin{center}
\begin{subfigure}[t]{0.4\textwidth}
\input{\figsdir/DifferentiableApproximationFloorLearn_mytikz.tex}
\caption{Experiment \ref{sec:non_flat}, Non-flat approximation}
\end{subfigure}
~
\begin{subfigure}[t]{0.4\textwidth}
\input{\figsdir/EndToEndFloorLearn_mytikz.tex}
\caption{Experiment \ref{sec:e2e}, end-to-end}
\end{subfigure}

\begin{subfigure}[t]{0.4\textwidth}
\input{\figsdir/MCFloorLearn_mytikz.tex}
\caption{Experiment \ref{sec:mc}, Multi Class}
\end{subfigure}
~
\begin{subfigure}[t]{0.4\textwidth}
\input{\figsdir/MCFloorLearn_mytikz_zoom.tex}
\caption{Experiment \ref{sec:mc}, Multi Class. Zoomed-in boundary between
classes.}
\end{subfigure}

\begin{subfigure}[t]{0.4\textwidth}
\input{\figsdir/IsotronFloorLearn_mytikz.tex}
\caption{Experiment \ref{sec:fo}, Forward only}
\end{subfigure}
\end{center}
\caption{\secref{sec.Non Continuous non-linearities}'s Experiment:
  backpropagating through the steps function. The horizontal axis is
  the value of $r=v^{*\top} x+b^*$. The dashed green curves show the
  label, $u(r)$. The red curves show the outputs of the learnt
  hypothesis. The plots are zoomed around the mean of $r$. In the
  non-flat approximation plot, the dashed magenta curve shows the
  non-flat approximation, namely $\tilde{u}(r)$. Note the inaccuracies
  around the boundaries between classes in the Multi Class
  experiment. All plots show the results after 10000 training
  iterations, except for the forward only plot, showing results after
  5000 iterations.} \label{fig:steps}
\end{figure}

\section{Summary}\label{sec.discussion}
In this paper, we considered different families of
problems, where standard gradient-based deep learning approaches appear to suffer from significant difficulties. Our analysis indicates that these difficulties are not necessarily related to stationary point issues such as spurious local minima or a plethora of saddle points, but rather more subtle issues: Insufficient information in the gradients about the underlying target function; low SNR; bad conditioning; or flatness in the activations (see \figref{fig:summary} for a graphical illustration). We consider it as a first step towards a better  understanding of where standard deep learning methods might fail, as well as what approaches might overcome these failures. 

\setlength\figureheight{6cm}
\setlength\figurewidth{6cm}
\begin{figure}
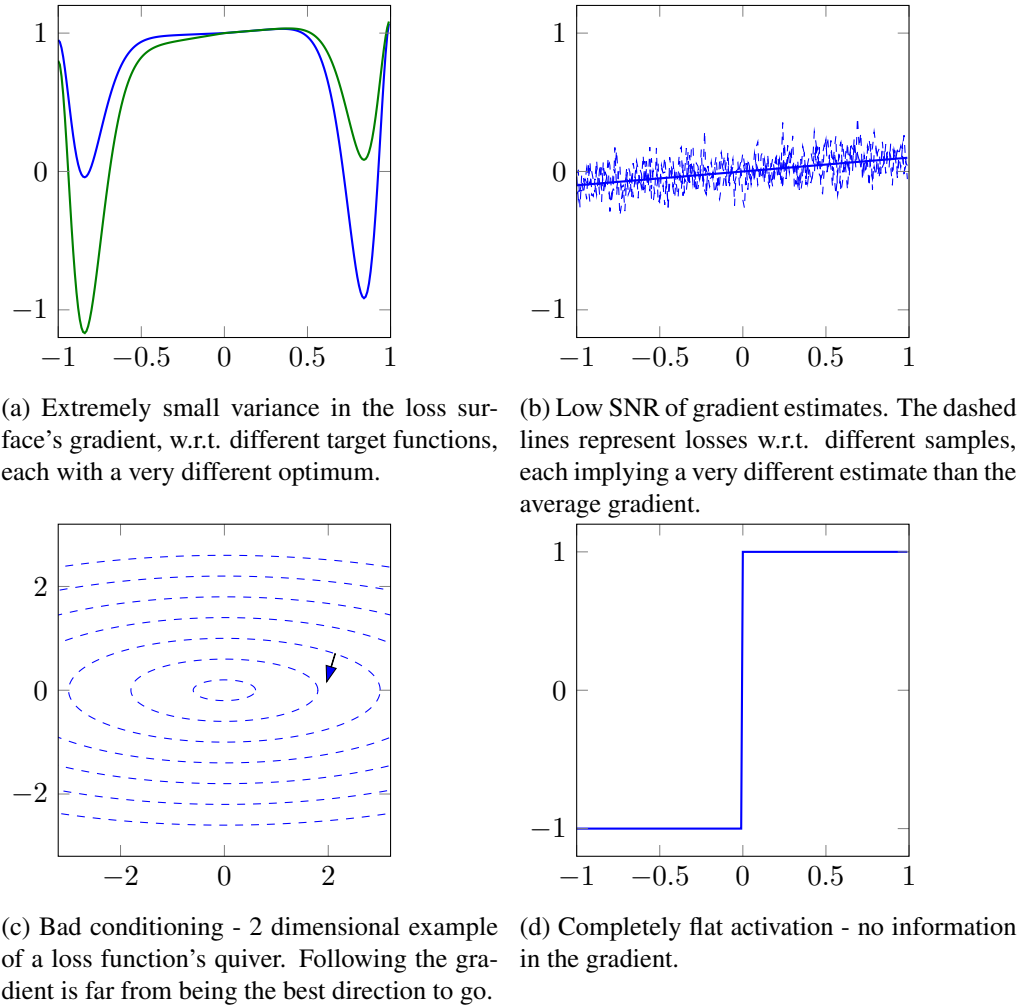

\begin{center}
\begin{subfigure}[t]{0.4\textwidth}
\input{\figsdir/extremely_flat_mytikz.tex}
\caption{Extremely small variance in the loss surface's gradient,
  w.r.t. different target functions, each with a very different optimum.}
\end{subfigure}
~
\begin{subfigure}[t]{0.4\textwidth}
\input{\figsdir/low_snr_mytikz.tex}
\caption{Low SNR of gradient estimates. The dashed lines represent
  losses w.r.t. different samples, each implying a very different
  estimate than the average gradient.}
\end{subfigure}

\begin{subfigure}[t]{0.4\textwidth}
\input{\figsdir/bad_conditioning_mytikz.tex}
\caption{Bad conditioning - 2 dimensional example of a loss function's
  quiver. Following the gradient is far from being the best direction
  to go.}
\end{subfigure}
~
\begin{subfigure}[t]{0.4\textwidth}
\begin{tikzpicture}

\begin{axis}[
xmin=-1, xmax=1,
ymin=-1.2, ymax=1.2,
axis on top,
width=\figurewidth,
height=\figureheight
]
\addplot [thick, blue]
table {%
-1 -1
-0.99 -1
-0.98 -1
-0.97 -1
-0.96 -1
-0.95 -1
-0.94 -1
-0.93 -1
-0.92 -1
-0.91 -1
-0.9 -1
-0.89 -1
-0.88 -1
-0.87 -1
-0.86 -1
-0.85 -1
-0.84 -1
-0.83 -1
-0.82 -1
-0.81 -1
-0.8 -1
-0.79 -1
-0.78 -1
-0.77 -1
-0.76 -1
-0.75 -1
-0.74 -1
-0.73 -1
-0.72 -1
-0.71 -1
-0.7 -1
-0.69 -1
-0.68 -1
-0.67 -1
-0.66 -1
-0.65 -1
-0.64 -1
-0.63 -1
-0.62 -1
-0.61 -1
-0.6 -1
-0.59 -1
-0.58 -1
-0.57 -1
-0.56 -1
-0.55 -1
-0.54 -1
-0.53 -1
-0.52 -1
-0.51 -1
-0.5 -1
-0.49 -1
-0.48 -1
-0.47 -1
-0.46 -1
-0.45 -1
-0.44 -1
-0.429999999999999 -1
-0.419999999999999 -1
-0.409999999999999 -1
-0.399999999999999 -1
-0.389999999999999 -1
-0.379999999999999 -1
-0.369999999999999 -1
-0.359999999999999 -1
-0.349999999999999 -1
-0.339999999999999 -1
-0.329999999999999 -1
-0.319999999999999 -1
-0.309999999999999 -1
-0.299999999999999 -1
-0.289999999999999 -1
-0.279999999999999 -1
-0.269999999999999 -1
-0.259999999999999 -1
-0.249999999999999 -1
-0.239999999999999 -1
-0.229999999999999 -1
-0.219999999999999 -1
-0.209999999999999 -1
-0.199999999999999 -1
-0.189999999999999 -1
-0.179999999999999 -1
-0.169999999999999 -1
-0.159999999999999 -1
-0.149999999999999 -1
-0.139999999999999 -1
-0.129999999999999 -1
-0.119999999999999 -1
-0.109999999999999 -1
-0.0999999999999992 -1
-0.0899999999999992 -1
-0.0799999999999992 -1
-0.0699999999999992 -1
-0.0599999999999992 -1
-0.0499999999999992 -1
-0.0399999999999991 -1
-0.0299999999999991 -1
-0.0199999999999991 -1
-0.00999999999999912 -1
8.88178419700125e-16 1
0.0100000000000009 1
0.0200000000000009 1
0.0300000000000009 1
0.0400000000000009 1
0.0500000000000009 1
0.0600000000000009 1
0.070000000000001 1
0.080000000000001 1
0.090000000000001 1
0.100000000000001 1
0.110000000000001 1
0.120000000000001 1
0.130000000000001 1
0.140000000000001 1
0.150000000000001 1
0.160000000000001 1
0.170000000000001 1
0.180000000000001 1
0.190000000000001 1
0.200000000000001 1
0.210000000000001 1
0.220000000000001 1
0.230000000000001 1
0.240000000000001 1
0.250000000000001 1
0.260000000000001 1
0.270000000000001 1
0.280000000000001 1
0.290000000000001 1
0.300000000000001 1
0.310000000000001 1
0.320000000000001 1
0.330000000000001 1
0.340000000000001 1
0.350000000000001 1
0.360000000000001 1
0.370000000000001 1
0.380000000000001 1
0.390000000000001 1
0.400000000000001 1
0.410000000000001 1
0.420000000000001 1
0.430000000000001 1
0.440000000000001 1
0.450000000000001 1
0.460000000000001 1
0.470000000000001 1
0.480000000000001 1
0.490000000000001 1
0.500000000000001 1
0.510000000000001 1
0.520000000000001 1
0.530000000000001 1
0.540000000000001 1
0.550000000000001 1
0.560000000000001 1
0.570000000000001 1
0.580000000000001 1
0.590000000000001 1
0.600000000000001 1
0.610000000000001 1
0.620000000000001 1
0.630000000000001 1
0.640000000000001 1
0.650000000000001 1
0.660000000000001 1
0.670000000000001 1
0.680000000000001 1
0.690000000000002 1
0.700000000000002 1
0.710000000000002 1
0.720000000000002 1
0.730000000000002 1
0.740000000000002 1
0.750000000000002 1
0.760000000000002 1
0.770000000000002 1
0.780000000000002 1
0.790000000000002 1
0.800000000000002 1
0.810000000000002 1
0.820000000000002 1
0.830000000000002 1
0.840000000000002 1
0.850000000000002 1
0.860000000000002 1
0.870000000000002 1
0.880000000000002 1
0.890000000000002 1
0.900000000000002 1
0.910000000000002 1
0.920000000000002 1
0.930000000000002 1
0.940000000000002 1
0.950000000000002 1
0.960000000000002 1
0.970000000000002 1
0.980000000000002 1
0.990000000000002 1
};
\end{axis}

\end{tikzpicture}
\caption{Completely flat activation - no information in the gradient.}
\end{subfigure}

\end{center}
\caption{A graphical summary of limitations of gradient-based
  learning.} \label{fig:summary}
\end{figure}

\paragraph{Acknowledgements:} This research is supported in part by
the Intel Collaborative Research Institute for Computational
Intelligence (ICRI-CI), and by the European Research Council (TheoryDL
project). OS was also supported in part by an FP7 Marie Curie CIG
grant and an Israel Science Foundation grant 425/13.

\bibliography{mybib}
\bibliographystyle{plain}

\appendix
\section{Reduced Noise through Decomposition -
  Experiment}\label{exp:dec_vs_e2e}
\subsection{Experiment}
For this experiment, consider the problem of training a predictor,
which given a ``positive media reference'' $\bx$ to a certain stock
option, will distribute our assets between the $k=500$ stocks in the
S\&P500 index in some manner.  One can, again, come up with two rather
different strategies for solving the problem.
\begin{itemize}
\item An end-to-end approach: train a deep network $N_\bw$ that
  given $\bx$ outputs a distribution over the $k$ stocks. The
  objective for training is maximizing the gain obtained by allocating
  our money according to this distribution.
\item A decomposition approach: train a deep network $N_\bw$ that
  given $\bx$ outputs a single stock, $y \in [k]$, whose future gains
  are the most positively correlated to $\bx$. Of course, we may need
  to gather extra labeling for training $N_\bw$ based on this
  criterion. 
\end{itemize}

We make the (non-realistic)
assumption that every instance of media reference is strongly and
positively correlated to a single stock $y\in[k]$, and it has no
correlation with future performance of other stocks.  This obviously
makes our problem rather toyish; the stock exchange and media worlds
have highly complicated correlations. However, it indeed arises from,
and is motivated by, practical problems.

To examine the problem in a simple and theoretically clean manner, we
design a synthetic experiment defined by the following optimization
problem: Let $X\times Z\subset\reals^d\times\set{\pm 1}^k$ be the
sample space, and let $y:X\rightarrow[k]$ be some labelling
function. We would like to learn a mapping
$N_{\bw}:X\rightarrow S^{k-1}$, with the objective being:
\[
\min_{\bw} ~L(\bw) := \E_{\bx,\bz\sim X\times Z} \left[ -\bz^\top N_{\bw}(\bx)  \right].
\]

To connect this to our story, $N_{\bw}(\bx)$ is our asset
distribution, $\bz$ indicates the future performance of the stocks,
and thus, we are seeking minimization of our expected future negative
gains, or in other words, maximization of expected profit.
We further assume that given $\bx$, the coordinate $\bz_{y(\bx)}$
equals $1$, and the rest of the coordinates are sampled i.i.d from
the uniform distribution over $\{\pm 1\}$. 

Whereas in \secref{sec:krect}'s experiment, the difference between the
end-to-end and decomposition approaches could be summarized by a
different loss function choice, in this experiment, the difference
boils down to the different gradient estimators we would use,
where we are again taking as a given fact that exact gradient
computations are expensive for large-scale problems, implying the
method of choice to be SGD. For the purpose of the experimental
discussion, let us write the two estimators explicitly as two
unconnected update rules. We will later analyze their (equal)
expectation.

For an end-to-end approach, we sample a pair
$(\bx,\bz)$, and use $\nabla_\bw(-\bz^\top N_{\bw}(\bx))$ as a
gradient estimate. It is clear that this is an unbiased estimator of
the gradient.

For a decomposition approach, we sample a pair $(\bx,\bz)$,
\emph{completely ignore} $\bz$, and instead, pay the extra costs and
gather the required labelling to get $y(\bx)$. We will then use
$\nabla_\bw(-e_{y(\bx)}^\top N_{\bw}(\bx))$ as a gradient
estimate. It will be shown later that this too is an unbiased
estimator of the gradient.

\figref{fig:decVSe2e} clearly shows that optimizing using the
end-to-end estimator is inferior to working with the
decomposition one, in terms of training time and final accuracy,
to the extent that for large $k$, the end-to-end estimator cannot
close the gap in performance in reasonable time. 

\setlength\figureheight{4cm}
\setlength\figurewidth{4cm}
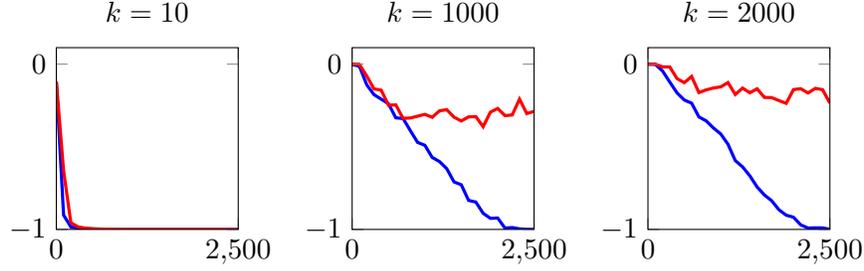
\begin{figure}
\begin{center}
\begin{tikzpicture}

\begin{axis}[
title={$k=10$},
xmin=0, xmax=2500,
ymin=-1.0, ymax=0.1,
ytick={-1,0}, xtick={0,2500}, 
axis on top,
width=\figurewidth,
height=\figureheight,
]
\addplot [blue, very thick]
table {%
0 -0.106761619448662
100 -0.913434445858002
200 -0.984988033771515
300 -0.99321460723877
400 -0.99615877866745
500 -0.997779488563538
600 -0.998586416244507
700 -0.999050498008728
800 -0.999324083328247
900 -0.999517917633057
1000 -0.999645411968231
1100 -0.999722182750702
1200 -0.999783158302307
1300 -0.999814093112946
1400 -0.99985134601593
1500 -0.999870479106903
1600 -0.999886393547058
1700 -0.999897241592407
1800 -0.999906361103058
1900 -0.999918937683105
2000 -0.999925136566162
2100 -0.999928593635559
2200 -0.99993485212326
2300 -0.999938607215881
2400 -0.999943733215332
2500 -0.999947011470795
};
\addplot [red, very thick]
table {%
0 -0.104796208441257
100 -0.645514309406281
200 -0.959836661815643
300 -0.984630346298218
400 -0.992495894432068
500 -0.995682597160339
600 -0.997475206851959
700 -0.998362302780151
800 -0.99895453453064
900 -0.999278247356415
1000 -0.999502778053284
1100 -0.999625980854034
1200 -0.999720096588135
1300 -0.999781966209412
1400 -0.999820113182068
1500 -0.999846458435059
1600 -0.999868214130402
1700 -0.999885678291321
1800 -0.999895036220551
1900 -0.99990701675415
2000 -0.999914526939392
2100 -0.99992311000824
2200 -0.999928653240204
2300 -0.999933481216431
2400 -0.999939560890198
2500 -0.999943017959595
};
\end{axis}

\end{tikzpicture}
\begin{tikzpicture}

\begin{axis}[
title={$k=1000$},
xmin=0, xmax=2500,
ymin=-1, ymax=0.1,
ytick={-1,0}, xtick={0,2500}, 
axis on top,
width=\figurewidth,
height=\figureheight,
]
\addplot [blue, very thick]
table {%
0 -0.00201964494772255
100 -0.0125462003052235
200 -0.123941607773304
300 -0.184201344847679
400 -0.2118029743433
500 -0.239620238542557
600 -0.324453383684158
700 -0.328857421875
800 -0.402112722396851
900 -0.473526060581207
1000 -0.490992069244385
1100 -0.56432569026947
1200 -0.589135944843292
1300 -0.632184982299805
1400 -0.712549567222595
1500 -0.729654550552368
1600 -0.824455797672272
1700 -0.834363877773285
1800 -0.902213156223297
1900 -0.932877123355865
2000 -0.930855989456177
2100 -0.991925895214081
2200 -0.988624334335327
2300 -0.996478319168091
2400 -0.999313950538635
2500 -0.999529898166656
};
\addplot [red, very thick]
table {%
0 0.000359951023710892
100 -0.00099258404225111
200 -0.0706849843263626
300 -0.149435818195343
400 -0.157273516058922
500 -0.248439028859138
600 -0.246359378099442
700 -0.327818423509598
800 -0.325675785541534
900 -0.314688980579376
1000 -0.303794682025909
1100 -0.321512490510941
1200 -0.282225281000137
1300 -0.275350213050842
1400 -0.32072988152504
1500 -0.343700438737869
1600 -0.319009959697723
1700 -0.317295461893082
1800 -0.378635078668594
1900 -0.290154606103897
2000 -0.268019884824753
2100 -0.311224222183228
2200 -0.307428419589996
2300 -0.211532756686211
2400 -0.298941105604172
2500 -0.285933017730713
};
\end{axis}

\end{tikzpicture}
\begin{tikzpicture}

\begin{axis}[
title={$k=2000$},
xmin=0, xmax=2500,
ymin=-1, ymax=0.1,
ytick={-1,0}, 
xtick={0,2500}, 
axis on top,
width=\figurewidth,
height=\figureheight,
]
\addplot [blue, very thick]
table {%
0 0.000994539819657803
100 -0.000947590102441609
200 -0.0408724844455719
300 -0.108514100313187
400 -0.174034208059311
500 -0.217071697115898
600 -0.235920995473862
700 -0.319059312343597
800 -0.342999190092087
900 -0.386191338300705
1000 -0.422139108181
1100 -0.484789431095123
1200 -0.583254158496857
1300 -0.621284902095795
1400 -0.675511002540588
1500 -0.743055284023285
1600 -0.791310489177704
1700 -0.826255083084106
1800 -0.881034851074219
1900 -0.91385680437088
2000 -0.926129996776581
2100 -0.972565472126007
2200 -0.991702258586884
2300 -0.990334987640381
2400 -0.990772604942322
2500 -0.999374568462372
};
\addplot [red, very thick]
table {%
0 -0.000945117208175361
100 0.000775042979512364
200 -0.0162352826446295
300 -0.0169071611016989
400 -0.0876036807894707
500 -0.111845090985298
600 -0.0741686895489693
700 -0.172008991241455
800 -0.155471637845039
900 -0.144302293658257
1000 -0.137203395366669
1100 -0.111406967043877
1200 -0.183514103293419
1300 -0.14453536272049
1400 -0.173164963722229
1500 -0.148761346936226
1600 -0.201202467083931
1700 -0.202359065413475
1800 -0.220212265849113
1900 -0.237122744321823
2000 -0.153671786189079
2100 -0.148650795221329
2200 -0.174396455287933
2300 -0.14641173183918
2400 -0.154208093881607
2500 -0.235572457313538
};
\end{axis}

\end{tikzpicture}
\end{center}
\caption{Decomposition vs. end-to-end Experiment: Loss as a function
  of the number of training iterations, for input dimension $d=1000$
  and for various $k$ values. The red and blue curves correspond to the losses of the
  end-to-end and decomposition estimators, respectively.} \label{fig:decVSe2e}
\end{figure}
\subsection{Analysis}
We examine the experiment from a SNR perspective. First, let us show
that indeed, both estimators are unbiased estimators of the true
gradient. As stated above, it is clear, by definition of $L$, that the
end-to-end estimator is an unbiased estimator of $\nabla_\bw
L(\bw)$. To observe this is also the case for the decomposition
estimator, we write:
\begin{align*}
&\nabla_\bw L(\bw)=\nabla_\bw\E_{\bx,\bz}[-\bz^\top N_{\bw}(\bx)]\\
=&\E_{\bx}[\E_{\bz|\bx}[\nabla_\bw(-\bz^\top N_{\bw}(\bx))]]\\
\overset{(1)}{=}&\E_{\bx}[\E_{\bz|\bx}[-\bz^\top\nabla_\bw(N_{\bw}(\bx))]]
\overset{(2)}{=}\E_{\bx}[-e_{y(\bx)}^\top\nabla_\bw(N_{\bw}(\bx))]
\end{align*} 
where $(1)$ follows from the chain rule, and $(2)$ from the assumption
on the distribution of $\bz$ given $\bx$. It is now easy to see that
the decomposition estimator is indeed a (different) unbiased estimator of the
gradient, hence the ``signal'' is the same.

Intuition says that when a choice between two unbiased estimators is
presented, we should choose the one with the lower variance. In our
context, \cite{ghadimi2013stochastic} showed that when running
SGD (even on non-convex objectives), arriving at a point where
$\|\nabla_\bw L(\bw)\|^2 \le \epsilon$ requires order of
$\bar{\nu}^2/\epsilon^2$ iterations, where
\[
  \bar{\nu}^2 = \max_t \E_{\bx,q} \|\nabla_\bw^t(\bx,q)\|^2 - \|\nabla_\bw L(\bw^{(t)}) \|^2,
\]
$\bw^t$ is the weight vector at time $t$, $q$ is sampled along
with $\bx$ (where it can be replaced by $\bz$ or $y(\bx)$, in our
experiment), and $\nabla_\bw^t$ is the unbiased estimator for the
gradient. This serves as a motivation for analyzing the problem
through this lens.

Motivated by \cite{ghadimi2013stochastic}'s result, and by our results
regarding \secref{sec:krect}, we examine the quantity
$\E_{\bx,q} \|\nabla_\bw^t(\bx,q)\|^2$, or ``noise'', explicitly. For
the end-to-end estimator, this quantity equals
\[
\E_{\bx,\bz} \|-\bz^\top\nabla_\bw N_\bw(\bx)\|^2 
= \E_{\bx,\bz} \|-\sum_{i=1}^k\bz_i\nabla_\bw N_\bw(\bx)_i\|^2 
\]
Denoting by $G_i:=\nabla_\bw N_\bw(\bx)_i$, we get:
\begin{equation}\label{est_e2e}
=\E_{\bx}\E_{\bz|\bx} \|-\sum_{i=1}^k\bz_iG_i\|^2 =\E_{\bx} \sum_{i=1}^k\|G_i\|^2 
\end{equation}
where the last equality follows from expanding the squared sum, and
taking expectation over $\bz$, while noting that mixed terms cancel out
(from independence of $\bz$'s coordinates), and that 
$\bz_i^2=1$ for all $i$.

As for the decomposition estimator, it is easy to see that
\begin{equation}\label{est_dec}
\E_{\bx} \|-e_{y(\bx)}^\top\nabla_\bw N_\bw(\bx)\|^2 = \E_{\bx} \|G_{y(\bx)}\|^2.
\end{equation}

Observe that in \ref{est_e2e} we are summing up, per $\bx$, $k$
summands, compared to the single element in \ref{est_dec}. When 
randomly initializing a network it is likely that the
values of $\|G_i\|^2$ are similar, hence we obtain that at the beginning of
training, the variance of the end-to-end estimator is roughly $k$
times larger than that of the decomposition estimator.  

\section{Proofs}

\subsection{Proof of Theorem \ref{thm:sqdim}}\label{proof:thm:sqdim}
\begin{proof}
Given two square-integrable functions $f,g$ on an Euclidean space $\reals^n$, 
let 
$\inner{f,g}_{L_2}=\E_{\bx}[f(\bx)g(\bx)]$ and 
$\norm{f}_{L_2}=\sqrt{\E_{\bx}[f^2(\bx)]}$ denote inner product and norm in the 
$L_2$ space of square-integrable functions (with respect to the relevant 
distribution). Also, define the vector-valued function
\[
\bg(\bx) = \frac{\partial}{\partial \bw}p_{\bw}(\bx),
\]
and let $\bg(\bx) = (g_1(\bx),g_2(\bx),\ldots,g_n(\bx))$ for real-valued 
functions $g_1,\ldots,g_n$. Finally, let $\E_h$ denote an expectation with 
respect 
to $h$ chosen uniformly at random from $\H$. Let $|\H|=d$.

We begin by proving the result for the squared loss. To prove the bound, it is 
enough to show that $\E_h\norm{\nabla 
F_{h}(\bw)-\ba}^2\leq \frac{G^2}{|\H|}$ for any vector $\ba$ independent of 
$h$. In particular, let us choose 
$\ba=\E_{\bx}\left[p_{\bw}(\bx)\bg(\bx)\right]$. We thus bound the following:
\begin{align*}
\E_{h}\norm{\nabla F_{h}(\bw)-\E_{\bx}\left[p_{\bw}(\bx)\bg(\bx)\right]}^2&= 
\E_h\norm{\E_{\bx}\left[\left(p_{\bw}(\bx)-h(\bx)\right)\bg(\bx)\right]-\E_{\bx}\left[p_{\bw}(\bx)\bg(\bx)\right]}^2\\
&= \E_h\norm{\E_{\bx}\left[h(\bx)\bg(\bx)\right]}^2~=~ 
\E_h\sum_{j=1}^{n}\left(\E_{\bx}\left[h(\bx)g_j(\bx)\right]\right)^2\\
&= 
\E_h\sum_{j=1}^{n}\inner{h,g_j}_{L_2}^2~=~\sum_{j=1}^{n}\left(\frac{1}{|\H|}\sum_{i=1}^{d}\inner{h_i,g_j}_{L_2}^2\right)\\
&\stackrel{(*)}{\leq}
\sum_{j=1}^{n}\left(\frac{1}{|\H|}\norm{g_j}_{L_2}^2\right)~=~\frac{1}{|\H|}\sum_{j=1}^{n}\E_{\bx}[g_j^2(\bx)]\\
&= \frac{1}{|\H|}\E_{\bx}\left[\norm{\bg(\bx)}^2\right]~\leq~ 
\frac{G(\bw)^2}{|\H|},
\end{align*}
where $(*)$ follows from the functions in $\H$ being mutually orthogonal, 
and satisfying $\norm{h}_{L_2}\leq 1$ for all $h\in\H$. 

To handle a classification loss, note that by its definition and the fact that 
$h(\bx)\in \{-1,+1\}$,
\begin{align*}
\nabla F_h(\bw) &= 
\E_{\bx}\left[r'(h(\bx)p_{\bw}(\bx))\cdot \frac{\partial}{\partial 
\bw}p_{\bw}(\bx)\right]\\
&= 
\E_{\bx}\left[\left(\frac{r'(p_{\bw}(\bx))+r'(-p_{\bw}(\bx))}{2}+h(\bx)\cdot
\frac{r'(p_{\bw}(\bx))-r'(-p_{\bw}(\bx))}{2}\right)\cdot 
\frac{\partial}{\partial 
\bw}p_{\bw}(\bx)\right]\\
&=
\E_{\bx}\left[\frac{r'(p_{\bw}(\bx))+r'(-p_{\bw}(\bx))}{2}\cdot 
\frac{\partial}{\partial 
\bw}p_{\bw}(\bx)\right]+\E_{\bx}\left[h(\bx)\cdot
\left(\frac{r'(p_{\bw}(\bx))-r'(-p_{\bw}(\bx))}{2}\right)\cdot 
\frac{\partial}{\partial 
\bw}p_{\bw}(\bx)\right].
\end{align*}
Letting $\bg(\bx) = 
\left(\frac{r'(p_{\bw}(\bx))-r'(-p_{\bw}(\bx))}{2}\right)\cdot 
\frac{\partial}{\partial 
\bw}p_{\bw}(\bx)$ (which satisfies $\E_{\bx}[\norm{\bg(\bx)}^2]\leq G^2$ since 
$r$ is $1$-Lipschitz) and $\ba = 
\E_{\bx}\left[\frac{r'(p_{\bw}(\bx))+r'(-p_{\bw}(\bx))}{2}\cdot 
\frac{\partial}{\partial 
\bw}p_{\bw}(\bx)\right]$ (which does not depend on $h$), we get that
\[
\E_{h}\norm{\nabla F_h(\bw)-\ba}^2 = \E_{h}\norm{\E_{\bx}[h(\bx)\bg(\bx)]}^2.
\]
Proceeding now exactly in the same manner as the squared 
loss case, the result follows.
\end{proof}

\subsection{Proof of Theorem \ref{thm:endtoend}}\label{proof:thm:endtoend}
\begin{proof}
We first state and prove two auxiliary lemmas.

\begin{lemma}\label{lem:corrsmall1}
  Let $h_1,\ldots,h_n$ be real-valued functions on some Euclidean
  space, which belong to some weighted $L_2$ space. Suppose that
  $\norm{h_i}_{L_2}=1$ and
  $\max_{i\neq j}|\inner{h_i,h_j}_{L_2}|\leq c$. Then for any function
  $g$ on the same domain,
	\[
	\frac{1}{n}\sum_{i=1}^{n}\inner{h_i,g}_{L_2}^2 ~\leq~ \norm{g}_{L_2}^2\left(\frac{1}{n}+c\right).
	\]
\end{lemma}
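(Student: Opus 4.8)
The plan is to reduce the claim to the Gram structure of the family $\{h_i\}$ and to let the near-orthogonality bound $c$ control the off-diagonal interactions. Write $\alpha_i = \inner{h_i,g}_{L_2}$, so that the quantity to be bounded is $\frac{1}{n}\sum_{i=1}^{n}\alpha_i^2$. The starting point is the identity $\sum_{i=1}^{n}\alpha_i^2 = \inner{\sum_{i=1}^{n}\alpha_i h_i,\,g}_{L_2}$, which is immediate from the definition of $\alpha_i$ and bilinearity of the inner product. Applying Cauchy--Schwarz to the right-hand side gives $\sum_i \alpha_i^2 \le \norm{\sum_i \alpha_i h_i}_{L_2}\cdot \norm{g}_{L_2}$, so everything hinges on bounding $\norm{\sum_i \alpha_i h_i}_{L_2}^2$ from above in terms of $\sum_i \alpha_i^2$.

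Next I would expand the square: $\norm{\sum_i \alpha_i h_i}_{L_2}^2 = \sum_i \alpha_i^2 \norm{h_i}_{L_2}^2 + \sum_{i\neq j}\alpha_i\alpha_j\inner{h_i,h_j}_{L_2}$. Using $\norm{h_i}_{L_2}=1$ on the diagonal and $|\inner{h_i,h_j}_{L_2}|\le c$ off the diagonal, this is at most $\sum_i \alpha_i^2 + c\sum_{i\neq j}|\alpha_i||\alpha_j|$. The cross-term sum is controlled elementarily by $\sum_{i\neq j}|\alpha_i||\alpha_j| \le \big(\sum_i|\alpha_i|\big)^2 \le n\sum_i\alpha_i^2$, where the last inequality is Cauchy--Schwarz against the all-ones vector. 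Hence $\norm{\sum_i \alpha_i h_i}_{L_2}^2 \le (1+cn)\sum_i\alpha_i^2$. Substituting back, $\sum_i\alpha_i^2 \le \sqrt{1+cn}\,\big(\sum_i\alpha_i^2\big)^{1/2}\norm{g}_{L_2}$; cancelling one factor of $(\sum_i\alpha_i^2)^{1/2}$ and squaring yields $\sum_i \alpha_i^2 \le (1+cn)\norm{g}_{L_2}^2$, and dividing by $n$ gives exactly $\frac{1}{n}\sum_i\alpha_i^2 \le \big(\frac{1}{n}+c\big)\norm{g}_{L_2}^2$, as claimed. (The degenerate case $\sum_i\alpha_i^2=0$ makes the conclusion trivial, so the cancellation step is legitimate.)

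The only step requiring genuine care is the treatment of the off-diagonal terms: the bound $\sum_{i\neq j}|\alpha_i||\alpha_j|\le n\sum_i\alpha_i^2$ is precisely what converts the pointwise correlation bound $c$ into the additive loss $c$ in the final estimate, and keeping the chain of inequalities elementary (rather than invoking a spectral bound on the Gram matrix $I+E$) is what keeps the factor of $n$ matched cleanly to the $\frac{1}{n}$ normalization. Everything else is routine expansion and two applications of Cauchy--Schwarz; I expect no substantive obstacle beyond verifying that the near-orthogonality is used in exactly this quadratic-form-bounding role.
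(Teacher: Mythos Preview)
Your proof is correct. The chain of inequalities is sound, including the cancellation step (you handled the degenerate case), and the final bound matches the statement exactly.

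Your route differs from the paper's. The paper writes $\sum_i \inner{h_i,g}_{L_2}^2 = g^\top HH^\top g$ for the matrix $H=(h_1,\ldots,h_n)$, bounds this by $\norm{g}^2\norm{H^\top H}$, and then decomposes the Gram matrix as $H^\top H = I+M$ with $M$ having zero diagonal and off-diagonal entries bounded by $c$, concluding via $\norm{M}\le \norm{M}_F \le cn$. You instead work directly with the coefficients $\alpha_i$, use the identity $\sum_i\alpha_i^2 = \inner{\sum_i\alpha_i h_i,g}$ together with Cauchy--Schwarz, and then bound the quadratic form $\norm{\sum_i\alpha_i h_i}^2$ elementarily using $\sum_{i\neq j}|\alpha_i||\alpha_j|\le n\sum_i\alpha_i^2$. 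Both arguments are really bounding the operator norm of the Gram matrix by $1+cn$; the paper does it via the Frobenius norm, while you do it via a direct quadratic-form estimate that avoids any explicit spectral language. Your approach is slightly more self-contained (no matrix norms or operator theory needed, so the extension to the infinite-dimensional $L_2$ setting requires no separate remark), while the paper's is a bit more transparent about where the $n$ comes from structurally.
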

\begin{proof}
  For simplicity, suppose first that the functions are defined over
  some finite domain equipped with a uniform distribution, so that
  $h_1,\ldots,h_n$ and $g$ can be thought of as finite-dimensional
  vectors, and the $L_2$ inner product and norm reduce to the standard
  inner product and norm in Euclidean space. Let $H=(h_1,\ldots,h_n)$
  denote the matrix whose $i$-th column is $h_i$. Then
	\[
	\sum_{i=1}^{n}\inner{h_i,g}^2 = g^\top \left(\sum_{i=1}^{n}h_ih_i^\top \right)g ~=~ g^\top HH^\top g \leq \norm{g}^2\norm{HH^\top} ~=~ \norm{g}^2\norm{H^\top H},
	\]
	where $\norm{\cdot}$ for a matrix denotes the spectral
        norm. Since $H^\top H$ is simply the $n\times n$ matrix with
        entry $\inner{h_i,h_j}$ in location $i,j$, we can write it as
        $I+M$, where $I$ is the $n\times n$ identity matrix, and $M$
        is a matrix with $0$ along the main diagonal, and entries of
        absolute value at most $c$ otherwise. Therefore, letting
        $\norm{\cdot}_{F}$ denote the Frobenius norm, we have that the
        above is at most
	\[
	\norm{g}^2\left(\norm{I}+\norm{M}\right)~\leq~\norm{g}^2\left(1+\norm{M}_F\right)~=~\norm{g}^2\left(1+cn\right),
	\]
	from which the result follows.
	
	Finally, it is easily verified that the same proof holds even
        when $h_1,\ldots,h_n,g$ are functions over some Euclidean
        space, belonging to some weighted $L_2$ space. In that case,
        $H$ is a bounded linear operator, and it holds that
        $\norm{H^* H}=\norm{H}^2=\norm{H^*}^2=\norm{HH^*}$ where $H^*$
        is the Hermitian adjoint of $H$ and the norm is the operator
        norm. The rest of the proof is essentially identical.
\end{proof}

\begin{lemma}\label{lem:corrsmall2}
  If $\bw,\bv$ are two unit vectors in $\reals^d$, and $\bx$ is a
  standard Gaussian random vector, then
	\[
	\left|\E_{\bx}\left[\text{sign}(\bw^\top \bx)\text{sign}(\bv^\top\bx)\right]\right|\leq |\inner{\bw,\bv}|
	\]
\end{lemma}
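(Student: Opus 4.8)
The plan is to reduce the expectation to the classical correlation-of-signs computation for a bivariate Gaussian, and then bound the resulting arcsine by its argument. First I would observe that, since $\bx$ has identity covariance, the scalars $\bw^\top\bx$ and $\bv^\top\bx$ are jointly Gaussian with mean zero, unit variance (as $\bw,\bv$ are unit vectors), and covariance $\E_{\bx}[(\bw^\top\bx)(\bv^\top\bx)] = \bw^\top\bv = \inner{\bw,\bv}$. Writing $\rho := \inner{\bw,\bv}$, the quantity in question is therefore a function of $\rho$ alone, and the problem decouples from the ambient dimension $d$.

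Next I would evaluate it via a rotational-symmetry argument. Since each $\sign(\cdot)$ depends only on the \emph{direction} of $\bx$, I may project $\bx$ onto the two-dimensional subspace spanned by $\bw,\bv$; the projection is a two-dimensional standard Gaussian, whose direction $\theta$ is uniform on $[0,2\pi)$. Each factor $\sign(\bw^\top\bx)$ and $\sign(\bv^\top\bx)$ equals $+1$ on an open half-plane through the origin and $-1$ on the complementary one, and the angle between the two bounding lines is exactly $\arccos(\rho)$. The two signs disagree precisely when $\theta$ lands in the corresponding pair of opposite sectors, of total angular width $2\arccos(\rho)$, so $\prob(\text{disagree}) = \arccos(\rho)/\pi$. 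Hence
\[
\E_{\bx}\!\left[\sign(\bw^\top\bx)\sign(\bv^\top\bx)\right] = 1 - \frac{2}{\pi}\arccos(\rho) = \frac{2}{\pi}\arcsin(\rho).
\]

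Finally I would bound $\left|\tfrac{2}{\pi}\arcsin(\rho)\right| \le |\rho|$ for $\rho\in[-1,1]$. For $\rho\in[0,1]$ this follows from convexity of $\arcsin$ on $[0,1]$: its graph lies below the chord joining $(0,0)$ and $(1,\tfrac{\pi}{2})$, giving $\arcsin(\rho)\le\tfrac{\pi}{2}\rho$; the odd symmetry of $\arcsin$ then extends the inequality to $[-1,0]$. Combining this with the displayed identity yields the claim. The only real content is the sign-correlation identity, so I expect the geometric probability computation (equivalently, invoking Sheppard's/Grothendieck's arcsine law) to be the main step, while the concluding inequality is routine.
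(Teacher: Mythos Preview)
Your proof is correct and follows essentially the same route as the paper: reduce to the bivariate Gaussian $(\bw^\top\bx,\bv^\top\bx)$ with correlation $\rho=\inner{\bw,\bv}$, derive the identity $\E[\sign(\bw^\top\bx)\sign(\bv^\top\bx)]=\tfrac{2}{\pi}\arcsin(\rho)$, and then bound $|\tfrac{2}{\pi}\arcsin(\rho)|\le|\rho|$. The only cosmetic difference is that the paper obtains the arcsine identity by citing the standard quadrant-probability formula for a bivariate normal, whereas you derive it directly via the uniform-angle argument; your convexity justification for the final inequality is also slightly more explicit than the paper's ``easily verified.''
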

\begin{proof}
  Note that $\bw^\top\bx,\bv^\top\bx$ are jointly zero-mean Gaussian,
  each with variance $1$ and with covariance
  $\E[\bw^\top \bx \bx^\top \bv]=\bw^\top\bv$. Therefore,
	\begin{align*}
	\E_{\bx}\left[\text{sign}(\bw^\top \bx)\text{sign}(\bv^\top\bx)\right] &= \Pr(\bw^\top \bx\geq 0,\bv^\top \bx \geq 0)+\Pr(\bw^\top\bx \leq 0,\bv^\top \bx \leq 0)\\
	&~~~~-\Pr(\bw^\top\bx\geq 0,\bv^\top\bx\leq 0)-\Pr(\bw^\top\bx\leq 0,\bv^\top\bx\geq 0)\\
	&= 2\Pr(\bw^\top\bx\geq 0,\bv^\top\bx\geq 0)-2\Pr(\bw^\top\bx\geq 0,\bv^\top\bx\leq 0),
	\end{align*}
	which by a standard fact on the quadrant probability of
        bivariate normal distributions, equals
	\begin{align*}
	2&\left(\frac{1}{4}+\frac{\sin^{-1}(\bw^\top \bv)}{2\pi}\right)-2\left(\frac{\cos^{-1}(\bw^\top \bv)}{2\pi}\right)
	= \frac{1}{2}+\frac{1}{\pi}\left(\sin^{-1}(\bw^\top\bv)-\cos^{-1}(\bw^\top\bv)\right)\\
	&=\frac{1}{2}+\frac{1}{\pi}\left(2\sin^{-1}(\bw^\top\bv)-\frac{\pi}{2}\right)~=~ \frac{2\sin^{-1}(\bw^\top\bv)}{\pi}~.
	\end{align*}
	The absolute value of the above can be easily verified to be
        upper bounded by $|\bw^\top \bv|$, from which the result
        follows.
\end{proof}

With these lemmas at hand, we turn to prove our theorem. By a standard
measure concentration argument, we can find $d^k$ unit vectors
$\bu_1,\bu_2,\ldots,\bu_{d^k}$ in $\reals^d$ such that their inner
product is at most $O(\sqrt{k\log(d)/d})$ (where the $O(\cdot)$
notation is w.r.t. $d$). This induces $d^k$ functions
$h_{\bu_1},h_{\bu_2},\ldots,h_{\bu_{d^k}}$ where
$h_{\bu}(\bx_1,\ldots,\bx_k)=\prod_{l=1}^{k}\text{sign}(\bu^{\top}\bx_l)$. Their
$L_2$ norm (w.r.t. the distribution over $\bx_1^k=(\bx_1,\ldots,\bx_k)$) is $1$,
as they take values in $\{-1,+1\}$. Moreover, since
$\bx_1,\ldots,\bx_k$ are i.i.d. standard Gaussian, we have by
\lemref{lem:corrsmall2} that for any $i\neq j$,
\begin{align*}
\inner{h_{\bu_i},h_{\bu_j}}_{L_2}&=
\left|\E\left[\prod_{l=1}^{k}\text{sign}(\bu_i^\top \bx_l)\prod_{l=1}^{k}\text{sign}(\bu_j^\top \bx_l)\right]\right|\\
&=
\left|\prod_{l=1}^{k}\E\left[\text{sign}(\bu_i^\top \bx_l)\text{sign}(\bu_j^\top\bx_l)\right]\right| \\
&= \left|\E\left[\text{sign}(\bu_i^\top \bx_l)\text{sign}(\bu_j^\top\bx_l)\right]\right|^k\\
&\leq |\bu_i^\top \bu_j|^k~\leq~ O\left(\sqrt{\frac{k\log(d)}{d}}\right)^k. 
\end{align*}
Using this and \lemref{lem:corrsmall1}, we have that for any function $g$,
\[
  \frac{1}{d^k}\sum_{i=1}^{d^k}\inner{h_{\bu_i},g}_{L_2}^2 ~\leq~
  \norm{g}_{L_2}^2\cdot
  \left(\frac{1}{d^k}+O\left(\sqrt{\frac{k\log(d)}{d}}\right)^k\right)~\leq~
  \norm{g}_{L_2}^2\cdot O\left(\sqrt{\frac{k\log(d)}{d}}\right)^{k}~.
\]
Moreover, since this bound is derived based only on an inner product
condition between $\bu_1,\ldots,\bu_{d^k}$, the same result would hold
for $U\bu_1,\ldots,U\bu_{d^k}$ where $U$ is an arbitrary orthogonal
matrix, and in particular if we pick it uniformly at random:
\[
  \E_{U}\left[\frac{1}{d^k}\sum_{i=1}^{d^k}\inner{h_{U\bu_i},g}_{L_2}^2\right]
  ~\leq~ \norm{g}_{L_2}^2\cdot
  \left(\frac{1}{d^k}+O\left(\sqrt{\frac{k\log(d)}{d}}\right)\right).
\]
Now, note that for any fixed $i$, $U\bu_i$ is uniformly distributed on
the unit sphere, so the left hand side simply equals
$\E_{\bu}\left[\inner{h_{\bu},g}_{L_2}^2\right]$, and we get
\begin{equation}\label{eq:nearorth}
  \E_{\bu}\left[\inner{h_{\bu},g}_{L_2}^2\right]~\leq~\norm{g}^2\cdot  O\left(\sqrt{\frac{k\log(d)}{d}}\right)^{k}~.
\end{equation}

With this key inequality at hand, the proof is now very similar to the
one of \thmref{thm:sqdim}. Given the predictor $p_{\bw}(\bx_1^k)$, where
$\bw\in\reals^n$, define the vector-valued function
$ \bg(\bx_1^k) = \frac{\partial}{\partial \bw}p_{\bw}(\bx_1^k) $, and let
$\bg(\bx_1^k) = (g_1(\bx_1^k),g_2(\bx_1^k),\ldots,g_n(\bx_1^k))$ for real-valued
functions $g_1,\ldots,g_n$. To prove the bound, it is enough to upper
bound $\E_\bu\norm{\nabla F_{\bu}(\bw)-\ba}^2$ for any vector $\ba$
independent of $\bu$.  In particular, let us choose
$\ba=\E_{\bx_1^k}\left[p_{\bw}(\bx_1^k)\bg(\bx_1^k)\right]$. We thus bound the
following:
\begin{align*}
\E_{\bu}\norm{\nabla F_{\bu}(\bw)-\E_{\bx_1^k}\left[p_{\bw}(\bx_1^k)\bg(\bx_1^k)\right]}^2&= 
\E_\bu\norm{\E_{\bx_1^k}\left[\left(p_{\bw}(\bx_1^k)-h_{\bu}(\bx_1^k)\right)\bg(\bx_1^k)\right]-\E_{\bx_1^k}\left[p_{\bw}(\bx_1^k)\bg(\bx_1^k)\right]}^2\\
&= \E_\bu\norm{\E_{\bx_1^k}\left[h_{\bu}(\bx_1^k)\bg(\bx_1^k)\right]}^2~=~ 
\E_{\bu}\sum_{j=1}^{n}\left(\E_{\bx_1^k}\left[h_{\bu}(\bx_1^k)g_j(\bx_1^k)\right]\right)^2\\
&= 
\E_\bu\sum_{j=1}^{n}\inner{h_{\bu},g_j}_{L_2}^2~=~\sum_{j=1}^{n}\E_{\bu}\inner{h_{\bu},g_j}_{L_2}^2\\
&\stackrel{(*)}{\leq}
\sum_{j=1}^{n}\norm{g_j}^2\cdot  O\left(\sqrt{\frac{k\log(d)}{d}}\right)^{k}
~=~\sum_{j=1}^{n}\E_{\bx_1^k}[g_j^2(\bx_1^k)]\cdot  O\left(\sqrt{\frac{k\log(d)}{d}}\right)^{k}\\
&= \E_{\bx_1^k}\norm{\bg(\bx_1^k)}^2\cdot  O\left(\sqrt{\frac{k\log(d)}{d}}\right)^{k}~\leq~ 
G(\bw)^2\cdot  O\left(\sqrt{\frac{k\log(d)}{d}}\right)^{k},
\end{align*}
where $(*)$ follows from \eqref{eq:nearorth}. By definition of
$\Var(\H,F,\bw)$, the result follows. Generalization for the
classification loss is obtained in the exact same way to the one used in
the proof of \thmref{thm:sqdim}.

\end{proof}

\subsection{Proof of lemma \ref{lem:invW}}\label{proof:lem:invW}
\begin{proof}
\[
(U W)_{i,j} = \sum_t U_{i,t} W_{t,j} = W_{i,j} -2 W_{i-1,j} + W_{i-2,j} 
\]
If $i \ge j+1$ then $\frac{W_{i,j} + W_{i-2,j}}{2} = W_{i-1,j}$ and
therefore the above is clearly zero. If $i < j$ then all the values of
$W$ are zeros. Finally, if $i=j$ we obtain $1$. This concludes our
proof. 
\end{proof}

\subsection{Proof of lemma \ref{lem:hatUt}}\label{proof:lem:hatUt}
\begin{proof}
  Given a sample $\f$, and that our current weight matrix is
  $\hat{U}$, let $\mathbf{p}=W^{-1}\f$. The loss function on $\f$ is
  given by
\[
  \frac{1}{2}\|\hat{U} \mathbf{f} - \mathbf{p}\|^2
\] 
The gradient w.r.t. $\hat{U}$ is 
\[
\nabla = (\hat{U} \mathbf{f}
- \mathbf{p}) \mathbf{f}^\top = \hat{U} \mathbf{f} \mathbf{f}^\top -
\mathbf{p} \mathbf{f}^\top
\]
We obtain that the update rule is 
\[
\hat{U}_{t+1} = \hat{U}_t - \eta \left(\hat{U}_t \mathbf{f} \mathbf{f}^\top -
\mathbf{p} \mathbf{f}^\top\right) = \hat{U}_t (I-\eta \mathbf{f} \mathbf{f}^\top) + \eta \mathbf{p} \mathbf{f}^\top
\]
Taking expectation with respect to the random choice of the pair, using again $\mathbf{f} = W \mathbf{p}$, and
assuming $\E
\mathbf{p} \mathbf{p}^\top = \lambda I$, we
obtain that the stochastic gradient update rule satisfies
\[
\E \hat{U}_{t+1} = \hat{U}_t (I-\eta \lambda W W^\top) + \eta \lambda W^\top
\]
Continuing recursively, we obtain
\begin{align*}
\E \hat{U}_{t+1} &= \E \hat{U}_t (I-\eta \lambda W W^\top) + \eta \lambda W^\top \\
&= \left[\E \hat{U}_{t-1} (I-\eta \lambda W W^\top) + \eta \lambda W^\top\right]
  (I-\eta \lambda W W^\top) + \eta \lambda W^\top \\
&= \E \hat{U}_{t-1} (I-\eta \lambda W W^\top)^2 + \eta \lambda W^\top (I-\eta
  \lambda W W^\top) + \eta \lambda W^\top \\
&= \hat{U}_0  (I-\eta \lambda W W^\top)^t + \eta \lambda W^\top 
  \sum_{i=0}^t (I-\eta \lambda W W^\top)^i
\end{align*}
We assume that $\hat{U}_0 = \boldsymbol{0}$, and thus
\[
\E \hat{U}_{t+1} = \eta \lambda W^\top  \sum_{i=0}^t (I-\eta \lambda W W^\top)^i
\]
\end{proof}

\subsection{Proof of Theorem \ref{thm:cond_converge}}\label{proof:thm:cond_converge}
\begin{proof}
Fix some $i$, we have that
\[
(I - \eta \,\lambda W W^\top)^i = 
 (Q I Q^\top -
 \eta \,\lambda Q S V^\top V S Q^\top)^i = 
Q (I -
 \eta \,\lambda S S)^i Q^\top = Q \Lambda^i Q^\top
\]
where $\Lambda^i$ is diagonal with $\Lambda_{j,j}^i = (1-\eta \lambda
S_j^2)^i$. Therefore, by the properties of geometric series, $\E
\hat{U}_{t+1}$ converges if and only if $\eta\, \lambda \,S_{1,1}^2 < 1$.
When this condition holds we have that 
\begin{align*}
\hat{U}_\infty &= \eta\,\lambda \,W^\top \sum_{i=0}^\infty (I - \eta
                 \,\lambda W W^\top)^i \\
&= \eta\,\lambda \,W^\top (\eta\,\lambda\,W W^\top)^{-1} = W^\top (W
  W^\top)^{-1}  \\
&=V S Q^\top (Q S V^\top V S
Q^\top)^{-1} = V S Q^\top Q S^{-2} Q^\top = V S^{-1} Q^\top = U ~.
\end{align*}
Therefore,
\begin{align*}
\E \hat{U}_{t+1} - U &= \eta \,\lambda W^\top  \sum_{i=t+1}^\infty (I -
                       \eta \,\lambda W W^\top)^i \\
&= \eta \,\lambda V S Q^\top  \sum_{i=t+1}^\infty Q \Lambda^i Q^\top
  \\
&= V \left[\sum_{i=t+1}^\infty (\eta\,\lambda S) \Lambda^i \right]
  Q^\top ~.
\end{align*}
The matrix in the parentheses is diagonal, where the $j$'th diagonal
element is
\[
\eta \lambda S_{j,j} \cdot \frac{(1 -  \eta \lambda
  S_{j,j}^2)^{t+1}}{\eta \lambda S_{j,j}^{2}}
= S_{j,j}^{-1} (1 - \eta \lambda S_{j,j}^2)^{t+1}
\]
It follows that
\[
\| \E \hat{U}_{t+1} - U\| = \max_j S_{j,j}^{-1} (1 - \eta \lambda S_{j,j}^2)^{t+1}
\]
Using the inequality $(1-a)^{t+1} \ge 1 - (t+1)a$, that holds for
every $a \in (-1,1)$, we obtain that
\[
\| \E \hat{U}_{t+1} - U\| \ge S_{n,n}^{-1} (1 - (t+1) \eta
\lambda S_{n,n}^2) \ge  S_{n,n}^{-1} (1 - (t+1) \frac{S_{n,n}^2}{S_{1,1}^2}) ~. 
\]
It follows that whenever, 
\[
t+1 \le \frac{S_{1,1}^2}{2\,S_{n,n}^2} ~,
\]
we have that $\| \E \hat{U}_{t+1} - U\| \ge 0.5\,S_{n,n}^{-1}$. 
Finally, observe that 
\[
S_{n,n}^2 = \min_{x : \|x\|=1} x^\top W W^\top x \le e_1^\top W
W^\top  e_1 = 1 ~,
\]
hence $S_{n,n}^{-1} \ge 1$.

We now prove the second part of the theorem, regarding the condition
number of $W^\top W$, namely, $\frac{S_{1,1}^2}{2\,S_{n,n}^2} \ge
\Omega(n^{3.5})$. We note that the condition number of $W^\top W$ can
be calculated through its inverse matrix's, namely, $U^\top U$'s
condition number, as those are equal. 


It is easy to verify that $U e_n = e_n$. Therefore, the maximal
eigenvalue of $U^\top U$ is at least $1$.  To construct an upper bound
on the minimal eigenvalue of $U^\top U$, consider $v \in \reals^n$
s.t. for $i \le \sqrt{n}$ we have $v_i = -\frac{1}{2} (i/n)^2$ and for
$i > \sqrt{n}$ we have $v_i = \frac{1}{2n} - \frac{i/n}{\sqrt{n}}$. We
have that $|v_i| = O(1/n)$ for $i < \sqrt{n}+2$ and $v$ is linear for
$i \ge \sqrt{n}$. This implies that $(Uv)_i = 0$ for $i \ge \sqrt{n}+2$.
We also have
$(U v)_1 = v_1 = -0.5 / n^2$, $(U v)_2 = -2 v_1 + v_2 \approx -1/n^2$,
and for $i \in \{3,\ldots,\sqrt{n}\}$ we have
\begin{align*}
(U v)_i &= v_{i-2} -2 v_{i-1} + v_i = -3/n^2
\end{align*}
Finally, for $i = \sqrt{n}+1$ we have
\begin{align*}
(U v)_i &= v_{i-2} -2 v_{i-1} + v_i = v_i-v_{i-1} - (v_{i-1}-v_{i-2})
  \\
&= \frac{1}{n\sqrt{n}} - \frac{1}{2n^2}\left((i-1)^2-(i-2)^2\right) = \frac{1}{n\sqrt{n}} - \frac{1}{2n^2}\left(2i - 3\right) = \frac{1}{2n^2} ~.
\end{align*}
This yields
\[
\|Uv\|^2 \approx  \Theta(\sqrt{n}/n^4) = \Theta(n^{-3.5})
\]
In addition, 
\[
\|v\|^2 \ge \sum_{i=n/2}^n v_i^2 \ge \frac{n}{2}\, v_{n/2}^2 =
\frac{n}{2}\,  \left( \frac{1}{2n} - \frac{1}{2\sqrt{n}} \right)^2 =
\Omega(1) ~.
\]
Therefore,
\[
\frac{\|Uv\|^2}{\|v\|^2} = O(n^{-3.5}) ~,
\]
which implies that the minimal eigenvalue of $U^\top U$ is at most
$O(n^{-3.5})$.  All in all, we have shown that the condition number of
$U^\top U$ is $\Omega(n^{-3.5})$, implying the same over $W^\top W$.
\end{proof}

\subsection{Gradient Descent for Linear
  Regression} \label{sec:GDanalysis}

The loss function is
\[
\E_{x,y} \frac{1}{2} (x^\top w - y)^2
\]
The gradient at $w$ is
\[
\nabla = \E_{x,y} x (x^\top w - y) = \left(\E_x x x^\top \right) w - \E_{x,y} x
y := C w - z
\]
For the optimal solution we have $C w^* - z =  \boldsymbol{0}$, hence
$z = C w^*$. 
The update is therefore
\[
w_{t+1} = w_t - \eta (Cw_t - z) = (I-\eta C) w_t + \eta z = \ldots =
\sum_{i=0}^t (I- \eta C)^i \eta z =
\sum_{i=0}^t (I- \eta C)^i \eta C w^*
\]
Let $C = V D V^\top$ be the eigenvalue decomposition of $C$.
Observe that 
\[
w_{t+1} = V \sum_{i=0}^t \eta (I- \eta D)^i D V^\top w^* 
\]
Hence
\begin{align*}
\|w^* - w_{t+1} \| &= \| (V V^\top - V \sum_{i=0}^t \eta (I- \eta D)^i
D V^\top) w^* \| \\
&= \| (I - \sum_{i=0}^t \eta (I- \eta D)^i
D) V^\top w^* \| \\
&= \| (I - \eta D)^{t+1}\, V^\top w^* \| ~,
\end{align*}
where the last equality is because for every $j$ we have
\[
 (I - \sum_{i=0}^t \eta (I- \eta D)^i
D)_{j,j} = 1 - \sum_{i=0}^t \eta (1- \eta D_{j,j})^i
D_{j,j} = (1- \eta D_{j,j})^{t+1} ~.
\]
Denote $v^* = V^\top w^*$. We therefore obtain that
\[
\|w^* - w_{t+1} \|^2 = \sum_{j=1}^n \left((1- \eta D_{j,j})^{t+1} v^*_j \right)^2
\]
To obtain an upper bound, choose $\eta = 1/D_{1,1}$ and $t+1 \ge
\frac{D_{1,1}}{D_{n,n}} \log(\|w^*\|/\epsilon)$, and then, using $1-a \le
e^{-a}$, we get that
\[
\|w^* - w_{t+1} \|^2 \le \sum_{j=1}^n \left( \exp(- \eta D_{j,j}
  (t+1))  v^*_j \right)^2 
\le \frac{\epsilon^2}{\|w^*\|^2} \sum_{j=1}^n \left(  v^*_j \right)^2 = \epsilon^2 ~.
\] 

To obtain a lower bound, observe that if $v^*_1$ is non-negligible then $\eta$ must be at most
$1/D_{1,1}$ (otherwise the process will diverge). If in addition
$v^*_n$ is a constant (for simplicity, say $v^*_n = 1$), then 
\[
\|w^* - w_{t+1} \| \ge (1- \eta D_{n,n})^{t+1} \ge (1 -
D_{n,n}/D_{1,1})^{t+1} \ge 1- (t+1) D_{n,n}/D_{1,1} ~,
\]
where we used $(1-a)^{t+1} \ge 1- (t+1)a$ for $a \in [-1,1]$. It
follows that if $t+1 < 0.5 \, D_{n,n}/D_{1,1}$ then we have that  $\|w^* -
w_{t+1} \| \ge 0.5$. 

\subsection{The Covariance Matrix of Section \ref{sec:FtoKConv}} \label{sec:proofCondFtoKConv}

Denote by $C \in \reals^{3,3}$ the covariance matrix, and let
$\lambda_i(C)$ denote the $i$'th eigenvalue of $C$ (in a decreasing
order). The condition number of $C$ is
$\lambda_1(C)/\lambda_3(C)$. Below we derive lower and upper bounds on
the condition number under some assumptions.

\paragraph{Lower bound}
We assume that $\E_{f,t} f_t^2 = \Omega(n^2)$ (this would be the case
in many typical cases, for example when the allowed slopes are in
$\{\pm 1\}$), that $k$ (the number of pieces of the curve) is
constant, and that the changes of slope are in $[-1,1]$.
 
Now, take $v = [1,1,1]^\top$, then
\[
v^\top C v = \E_{f,t} (f_{t-1} + f_t + f_{t+1})^2 = \Omega(n^2)
\]
This yields $\lambda_1(C) \ge \Omega(n^2)$. 
Next, take $v = [1,-2,1]^\top$ we obtain
\[
v^\top C v = \E_{f,t} (f_{t-1} - f_t + f_{t+1})^2 = O(k/n)
\]
This yields $\lambda_3(C) \le O(k/n)$. All in all, we obtain that the
condition number of $C$ is $\Omega(n^3)$.  

\paragraph{Upper bound}

We consider distribution over $\boldsymbol{f}$ s.t. for every
$\boldsymbol{f}$, at exactly $k$ indices
$\boldsymbol{f}$ changes slope from $1$ to $-1$ or from $-1$ to
$1$ (with equal probability over the indices),
and at the rest of the indices we have that $\boldsymbol{f}$ is linear
with a slope of $1$ or $-1$. Denote $p = k/n$. 
Take any unit vector $v$, and denote $\bar{v} = v_1+v_2+v_3$. Then
\begin{align*}
v^\top C v &= \E_{f,t} (v_1 f_{t-1} + v_2 f_t + v_3 f_{t+1})^2 \\
&= \E_f \left[ 0.5 (1-p) \left((\bar{v}  f_t + v_3 - v_1)^2  + 
  (\bar{v}  f_t + v_1 - v_3)^2 \right)+ 0.5 p \left( (\bar{v} f_t + v_3 + v_1)^2  + 
  (\bar{v}  f_t - v_3 - v_1)^2  \right)\right] \\
&= \bar{v}^2 \E_f f_t^2 + (1-p) (v_1-v_3)^2 + p (v_1+v_3)^2 ~.
\end{align*}
Since $\E_{f,t} f_t^2 = \Theta(n^2)$, it is clear that $v^\top C v =
O(n^2)$. We next establish a lower bound of $\Omega(1/n)$. Observe that 
if $\bar{v}^2 \ge \Omega(1/n^3)$, we are done. If this is not the
case, then $-v_2 \approx v_1+v_3$. If $|v_1+v_3| > 0.1$, we
are done. Otherwise, $0.9 \le 1 - v_2^2 = v_1^2 + v_3^2$, so we must
have that $v_1$ and $v_3$ has opposite signs, and one of them is
large, hence $(v_1-v_3)^2$ is larger than a constant. This concludes
our proof. 

\subsection{Proof of Update Rule \ref{glmtron_update} Convergence in
  the Lipschitz Case}\label{proof:glmtron}

\begin{proof}
Let $B$ be an upper bound over $\norm{\bw^{(t)}}$ for all time step
$t$, and over $\norm{\bv^*}$. Moreover, assume $|u|\le c$ for some
constant $c$.
We denote the update rule by $\bw^{(t)}=\bw^{(t-1)}+\eta\tilde{\nabla}$, and bound from below:
\begin{align*}
\norm{\bw^{(t)}-\bv^*}^2-\norm{\bw^{(t+1)}-\bv^*}^2
&=2\inner{ \bw^{(t)}-\bv^*, \eta \tilde{\nabla}} - \eta^2\norm{\tilde{\nabla}}^2\\
&= 2\eta \E \left[ (u((\bw^{(t)}) ^\top x)-u((\bv^*) ^\top x)) (\bw^{(t)}-\bv^* )\bx \right] -
  \eta^2\norm{\tilde{\nabla}}^2\\
&\overset{(1)}{\geq}
  \frac{2\eta}{L} \E \left[ (u((\bw^{(t)})^\top x)-u((\bv^*)^\top x))^2 \right] - \eta^2\norm{\tilde{\nabla}}^2\\
&\overset{(2)}{\geq}
  \frac{2\eta}{L} \E \left[ (u((\bw^{(t)})^\top x)-u((\bv^*)^\top x))^2 \right] - \eta^2B^2c^2\\
\end{align*}
where $(1)$ follows from $L$-Lipschitzness and monotonicity of $u$,
$(2)$ follows from bounding $\norm{\bw}$, $\norm{\bx}$ and
$u$. Let the
expected error of the regressor parametrized by $\bw^t$ be denoted
$e^t$. We separate into cases:
\begin{itemize}
\item If $\norm{\bw^t-\bv^*}^2-\norm{\bw^{t+1}-\bv^*}^2 \geq \eta^2B^2c^2$, we can
  rewrite $\norm{\bw^t-\bv^*}^2-\eta^2B^2c^2 \geq \norm{\bw^{t+1}-\bv^*}^2$, and
  note that since $\norm{\bw^{t+1}-\bv^*}^2\geq 0$, and
  $\norm{\bw^{0}-\bv^*}^2\leq B^2$, there can be at most
  $\frac{B^2}{\eta^2 B^2c^2}=\frac{1}{\eta^2c^2}$ iterations where this condition
  will hold.
\item Otherwise, we get that $e^t\leq \eta B^2c^2L$.
\end{itemize}
Therefore, for a given $\epsilon$, by taking $T=\frac{c^4B^4L^2}{\epsilon^2}$, and setting
$\eta=\sqrt{\frac{1}{Tc^2}}$, we obtain that after $T$ iterations, the
first case is not holding anymore, and the second case implies
$e^T\leq\epsilon$.
\end{proof}

\section{Technical Lemmas}

\begin{lemma} \label{lem:boringparity} Any parity function over $d$
  variables is realizable by a network with one fully connected layer
  of width $\tilde{d}>\frac{3d}{2}$ with ReLU activations, and a fully
  connected output layer with linear activation and a single unit.
\end{lemma}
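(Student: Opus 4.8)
The plan is to exploit the fact that a parity depends on its input only through a single integer-valued linear functional, and then to realize the resulting alternating $\pm 1$ pattern as a continuous piecewise-linear interpolant assembled from ReLUs. Concretely, fix the subset $\bv^*\in\{0,1\}^d$ defining the parity, set $m=\norm{\bv^*}_0\le d$, and observe that for any $\bx\in\{0,1\}^d$ the quantity $s:=\inner{\bx,\bv^*}$ is an integer in $\{0,1,\ldots,m\}$, with target label exactly $(-1)^s$. Hence it suffices to build a one-hidden-layer ReLU network whose scalar output $g$ satisfies $g=(-1)^s$ at every integer value of $s$; nothing needs to hold off the integer grid, since $s$ never takes non-integer values.

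For the construction I would let \emph{every} hidden unit share the weight vector $\bv^*$, so that the unit with bias $-\ell$ computes $[s-\ell]_+=[\inner{\bx,\bv^*}-\ell]_+$. Using biases $\ell=0,1,\ldots,m-1$ and letting the linear output unit form
\[
g(s) = 1 - 2[s]_+ + \sum_{\ell=1}^{m-1} 4(-1)^{\ell+1}\,[s-\ell]_+,
\]
where the additive constant $1$ is supplied by the bias of the linear output layer. I would then verify that $g$ is precisely the piecewise-linear function interpolating the points $(k,(-1)^k)$: indeed $g(0)=1$, the desired slope on $[k,k+1]$ is $(-1)^{k+1}-(-1)^k=2(-1)^{k+1}$, and the slope jump required at each breakpoint $\ell$ is $2(-1)^{\ell+1}-2(-1)^{\ell}=4(-1)^{\ell+1}$, which is exactly the coefficient placed on $[s-\ell]_+$; the leading term $-2[s]_+=-2s$ installs the initial slope $-2$ on $[0,1]$. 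Because we only need correctness at integer arguments, where each $[s-\ell]_+$ reduces to an affine function, this reduces to a one-line check (or a trivial induction on $k$), and there is no need to reason about the function between grid points.

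Finally I would count and conclude. The construction uses the $m$ hidden units indexed by $\ell=0,1,\ldots,m-1$, so the width is $m\le d < \tfrac{3d}{2} < \tilde d$; for a parity on fewer than $d$ coordinates the remaining units are simply given zero output coefficients and contribute nothing. The only points requiring any care are the bookkeeping of the alternating slope jumps and the (entirely standard) assumption that the linear output layer carries a bias term used to produce the constant $1$; should one insist on no output bias, a single always-active ReLU (zero weights, positive bias) supplies the constant, still leaving the width comfortably below $\tfrac{3d}{2}$. I do not expect any genuine obstacle here: the statement is exact precisely because the relevant inputs $s$ are integers, which lets a crude $O(d)$-unit piecewise-linear interpolant hit the parity values on the nose, and the bound $\tilde d>\tfrac{3d}{2}$ is a comfortable over-estimate of the width actually needed.
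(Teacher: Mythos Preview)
Your construction is correct and genuinely different from the paper's. The paper builds the parity by placing, for each even integer $2i\in[d]$, a triad of ReLUs with biases $-(2i-\tfrac12),-2i,-(2i+\tfrac12)$ and output weights $1,-2,1$; each triad is a ``hat'' of height $\tfrac12$ centered at $2i$, so summing $d/2$ triads gives output $\tfrac12$ exactly when $s=\inner{\bx,\bv^*}$ is even, and a final affine rescaling yields $\pm1$. This uses $3d/2$ hidden units, which is what motivates the width bound in the statement. Your sawtooth construction instead inserts one breakpoint per integer in $\{0,\ldots,m-1\}$ and accumulates alternating slope jumps, so it needs only $m\le d$ units (plus at most one always-active unit if the output bias is disallowed). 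Both rely on the same reduction to a one-dimensional piecewise-linear interpolation of $(-1)^s$ at integer $s$; the paper's hat-function approach is more ``local'' (each gadget fires on a single value) while yours is more economical in width, showing the bound $\tilde d>\tfrac{3d}{2}$ is loose.
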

\begin{proof}
Let the weights entering each of the first $\frac{3d}{2}$ hidden units be set to
$\bv^*$, and the rest to 0. Further assume that for $i\in[d/2]$, the
biases of the first $3i+\set{1,2,3}$ units
are set to $-(2i-\half)$, $-2i$, $-(2i+\half)$ respectively, and that
their weights in the output layer are $1$, $-2$, and $1$. It is not
hard to see that the weighted sum of those triads of neurons is $\half$ if
$\inner{\bx,\bv^*}=2i$, and $0$ otherwise. Observe that there's such a
triad defined for each even number in the range $[d]$. Therefore, the
output of this net is $0$ if $y=-1$, and $\half$ otherwise. It is easy
to see that scaling of the output layer's weights by $4$, and
introduction of a $-1$
bias value to it, results in a perfect predictor.
\end{proof}

\section{Command Lines for Experiments}\label{cmdlns}

Our experiments are implemented in a simple manner in python. We use
the tensorflow package for optimization. The following command lines
can be used for viewing all optional arguments:

To run experiment \ref{experiment_parity}, use:
\begin{verbatim}
python ./parity.py --help
\end{verbatim}

To run experiment \ref{sec:krect}, use:
\begin{verbatim}
python ./tuple_rect.py --help
\end{verbatim}
For SNR estimations, use:
\begin{verbatim}
python ./tuple_rect_SNR.py --help
\end{verbatim}

To run experiment \ref{exp:dec_vs_e2e}, use:
\begin{verbatim}
python ./dec_vs_e2e_stocks.py --help
\end{verbatim}

For \secref{sec.Piece-wise Linear
  AutoEncoders}'s experiments, given below are the command lines used to
generate the plots. Additional arguments can be viewed by running:
\begin{verbatim}
python PWL_fail1.py --help
\end{verbatim}

To run experiment \ref{sec:FtoK}, use:
\begin{verbatim}
python PWL_fail1.py --FtoK
\end{verbatim}

To run experiment \ref{sec:FtoKConv}, use:
\begin{verbatim}
python PWL_fail1.py --FtoKConv
\end{verbatim}

To run experiment \ref{sec:FtoKConvCond}, use:
\begin{verbatim}
python PWL_fail1.py --FtoKConvCond --batch_size 10 
--number_of_iterations 500 --learning_rate 0.99
\end{verbatim}

To run experiment \ref{sec:FAutoEncoder}, use:
\begin{verbatim}
python PWL_fail1.py --FAutoEncoder
\end{verbatim}

To run \secref{sec.Non Continuous
  non-linearities}'s experiments, run:
\begin{verbatim}
python step_learn.py --help
\end{verbatim}

\end{document}